\documentclass{article}

% if you need to pass options to natbib, use, e.g.:
%     \PassOptionsToPackage{numbers, compress}{natbib}
% before loading neurips_2025

% ready for submission
\usepackage[nonatbib,final]{neurips_2025}

% to compile a preprint version, e.g., for submission to arXiv, add add the
% [preprint] option:
%     \usepackage[preprint]{neurips_2025}

% to compile a camera-ready version, add the [final] option, e.g.:
%     \usepackage[final]{neurips_2025}

% to avoid loading the natbib package, add option nonatbib:
%    \usepackage[nonatbib]{neurips_2025}
\usepackage[font=small]{caption}
\usepackage{cite}
\usepackage{multirow}
\usepackage{wrapfig}
\usepackage[utf8]{inputenc} % allow utf-8 input
\usepackage[T1]{fontenc}    % use 8-bit T1 fonts
\usepackage{hyperref}       % hyperlinks
\usepackage{url}   
\usepackage{breakurl}
         % simple URL typesetting
\usepackage{booktabs}       % professional-quality tables
\usepackage{amsfonts}       % blackboard math symbols
\usepackage{nicefrac}       % compact symbols for 1/2, etc.
\usepackage{microtype}      % microtypography
\usepackage{xcolor}         % colors
\usepackage{subfigure}
\usepackage{enumitem}
\usepackage{amsmath}
\usepackage{amssymb}
\usepackage{mathtools}
\usepackage{amsthm}
\usepackage{mathrsfs}
\usepackage{psfrag}
\usepackage{acronym}
\usepackage{upgreek}
\usepackage{relsize}
\usepackage{algorithm}
\usepackage{caption}
\usepackage{algpseudocode}
\usepackage{setspace}
\usepackage{mathtools}
\usepackage{mathrsfs}
\usepackage{amsthm}
\usepackage{wrapfig}
\usepackage{amssymb}
\usepackage{url}
\usepackage{graphicx}
\usepackage{zref-base}
\usepackage{atveryend}
\usepackage{changepage}

\newcommand\V[1]  { \mathbf{#1} }
\newcommand\B[1]  { \boldsymbol{#1} }
\newcommand\up[1] {\mathrm{#1}}
\newcommand\set[1] {\mathcal{#1}}

\theoremstyle{definition}

\newtheorem{theorem}{Theorem}
\newtheorem{proposition}{Proposition}
\newtheorem{lemma}{Lemma}

\theoremstyle{definition}

\theoremstyle{remark}

\acrodef{SVM}{support vector machine}
\acrodef{SRM}{structural risk minimization}
%\acrodef{NN}{nearest neighbor}
\acrodef{NN}{neural network}
\acrodef{NB}{naive Bayes}
\acrodef{LR}{logistic regression}
\acrodef{CRF}{conditional random field}
\acrodef{LP}{linear program}
\acrodef{GP}{geometric program}
\acrodef{ERM}{empirical risk minimization}
\acrodef{RKHS}{reproducing kernel Hilbert space}
\acrodef{MRC}{minimax risk classifier}
\acrodef{RRM}{robust risk minimization}
\acrodef{LLN}{law of large numbers}
\acrodef{MMD}{maximum mean discrepancy}
\acrodef{DRLR}{distributionally robust logistic regression}
\acrodef{ASM}{accelerated subgradient method}
\acrodef{SM}{subgradient method}
\acrodef{SMRC}{semi-supervised minimax risk classifier}
\acrodef{PCA}{principal component analysis}
\acrodef{IPM}{integral probability metric}
\acrodef{RV}{random variable}

\title{Split Conformal Classification with \\ Unsupervised Calibration}

% The \author macro works with any number of authors. There are two commands
% used to separate the names and addresses of multiple authors: \And and \AND.
%
% Using \And between authors leaves it to LaTeX to determine where to break the
% lines. Using \AND forces a line break at that point. So, if LaTeX puts 3 of 4
% authors names on the first line, and the last on the second line, try using
% \AND instead of \And before the third author name.

\author{%
Santiago Mazuelas \\
  BCAM-Basque Center for Applied Mathematics and \\ IKERBASQUE-Basque Foundation for Science\\
  Bilbao, Spain\\
  \texttt{smazuelas@bcamath.org}
  % examples of more authors
  % \And
  % Coauthor \\
  % Affiliation \\
  % Address \\
  % \texttt{email} \\
  % \AND
  % Coauthor \\
  % Affiliation \\
  % Address \\
  % \texttt{email} \\
  % \And
  % Coauthor \\
  % Affiliation \\
  % Address \\
  % \texttt{email} \\
  % \And
  % Coauthor \\
  % Affiliation \\
  % Address \\
  % \texttt{email} \\
}

\begin{document}

\maketitle

\vspace{-0.cm}
\begin{abstract}
\vspace{-0.cm}
Methods for split conformal prediction leverage calibration samples to transform any prediction rule into a set-prediction rule that complies with a target coverage probability. Existing methods provide remarkably strong performance guarantees with minimal computational costs. However, they require the use calibration samples composed by labeled examples different to those used for training. This requirement can be highly inconvenient, as it prevents the use of all labeled examples for training and may require acquiring additional labels solely for calibration. This paper presents an effective methodology for split conformal prediction with unsupervised calibration for classification tasks. In the proposed approach, set-prediction rules are obtained using unsupervised calibration samples together with supervised training samples previously used to learn the classification rule. Theoretical and experimental results show that the presented methods can achieve performance comparable to that with supervised calibration, at the expenses of a moderate degradation in performance guarantees and computational efficiency.
\end{abstract}
\vspace{-0.cm}
\section{Introduction}
\vspace{-0.cm}
Conformal prediction methods can significantly increase the reliability of machine learning techniques by providing prediction sets that contain the true response (class) with a coverage probability that can be chosen in advance (see e.g., \cite{VovGamSha:05,AngBat:23,LeiSelRinTibWas:18}). In particular, methods for split (aka inductive) conformal prediction  \cite{AngBat:23,CauGupDuc:21}
are particularly convenient since they can transform any prediction rule (possibly highly unreliable) into a set-prediction rule that complies with a target coverage probability. In these methods, the set-prediction rule is obtained using a fresh set of examples referred to as calibration samples. Existing methods for split conformal prediction provide remarkably strong performance guarantees with minimal computational costs. For general data distributions, they ensure a small coverage gap between the actual and target coverage probabilities. In addition, these methods only require to compute an empirical quantile from the calibration samples, which can be carried out with quasilinear complexity and negligible running time in practice.  While conformal prediction has been developed for both classification and regression tasks, this paper focuses exclusively on classification problems, in line with other works for conformal prediction using weakly-supervised data \cite{SesWanTon:24,CauGupAliDuc:24}.

Conventional methods for split conformal prediction require the use supervised calibration samples composed by accurately labeled examples. This requirement forces to either leave aside a portion of labeled examples during the training phase or to acquire new labels solely for calibration. Both cases are problematic, the use of fewer samples for training can increase classification error, and the acquisition of new labeled examples is often costly. It is worth highlighting that obtaining new labels after the training phase may be even unfeasible in certain practical scenarios, including cases where labels are protected by privacy regulations or correspond to future events that are not yet observable. Several methods have been recently proposed for situations with weakly-supervised calibration samples \cite{SesWanTon:24,CauGupAliDuc:24,EinFelBat:24,FelRom:24}. These methods utilize calibration samples with noisy or partial labels, and show that small coverage gaps can be achieved with a moderated amount of incorrect or missing labels. However, existing methods cannot handle cases with unsupervised calibration where all the calibration samples are unlabeled. In particular, it has been shown that set-prediction rules obtained using \emph{only} unsupervised samples for calibration provide either uninformative or unreliable predictions (see Def.~2 and Thm.~3 in \cite{CauGupAliDuc:24}).

%regulatory data privacy requirements can make it impossible to obtain new labeled samples after the training stage has been carried out

%new labels may be impossible to obtain in situations where labels correspond to future events or where labels are protected by regulatory privacy requirements

This paper presents effective methods for split conformal prediction with unsupervised calibration samples. In particular, the proposed approach avoids the negative result for weakly-supervised calibration in \cite{CauGupAliDuc:24} because the set-prediction rule is obtained leveraging also supervised training samples previously used to learn the classification rule. Theoretical and experimental results show that the presented methods can achieve performance comparable to that with supervised calibration (see \mbox{Figure \ref{fig-intro}}), at the expenses of a moderate weakening of the performance guarantees and an affordable increase in computational complexity. The main contributions presented in the paper are as follows.
\vspace{-0.3cm}
\begin{itemize}[leftmargin=17pt, itemsep=1pt]
\item We present split conformal methods that assign label weights to unsupervised calibration samples so that they are statistically indistinguishable from training data. In particular, the methods proposed minimize a two-sample nonparametric test statistic given by a family of functions.
\item We provide performance guarantees for the methods presented in terms of high-confidence bounds for coverage probabilities. In addition, the coverage gap achieved is characterized in terms of a bias-variance trade-off corresponding to the family of functions used.

%comparable to those with supervised calibration. In addition, the excess coverage gap in the unsupervised case is characterized in terms of a bias-variance trade-off corresponding to the family of functions used.
%The bounds provided characterize the excess coverage gap in the unsupervised case in terms of a bias-variance trade-off corresponding to the family of functions used.
\item We detail the methods' implementation using families of functions given by \acp{RKHS}, which result in particularly advantageous performance guarantees that also serve to select the most adequate kernel.
\item We experimentally show that the proposed methods can achieve performance comparable to that with supervised calibration.
\end{itemize}
\vspace{-0.cm}
\begin{figure}%[H]

\centering
\psfrag{Supervised Calibration}[][][0.7]{\hspace{0.0cm}Supervised Calibration}
\psfrag{Unsupervised Calibration}[][][0.7]{\hspace{0.0cm}Unsupervised Calibration}
\psfrag{C1}[t][][0.7]{100 samples}
\psfrag{C2}[t][][0.7]{500 samples}
\psfrag{C3}[t][][0.7]{2,000 samples}
\psfrag{y}[b][][0.7]{Coverage probability}
\psfrag{z}[b][][0.7]{Prediction set size}
\psfrag{0.8}[][][0.53]{0.80\,\,}
\psfrag{0.82}[][][0.53]{0.82\,\,}
\psfrag{0.86}[][][0.53]{0.86\,\,}
\psfrag{0.9}[][][0.53]{0.90\,\,\,\,}
\psfrag{0.94}[][][0.53]{0.94\,\,}
\psfrag{0.98}[][][0.53]{0.98\,\,}
\psfrag{0.85}[][][0.53]{0.85\,\,}
\psfrag{0.90}[][][0.53]{0.90\,\,}
\psfrag{0.95}[][][0.53]{0.95\,\,}
\psfrag{1.05}[][][0.53]{1.05\,\,}
\psfrag{1.15}[][][0.53]{1.15\,\,}
\psfrag{1}[][][0.53]{1\,\,}
\psfrag{0.96}[][][0.53]{0.96\,}
\hspace{0.02cm}\subfigure{\includegraphics[width=.48\textwidth]{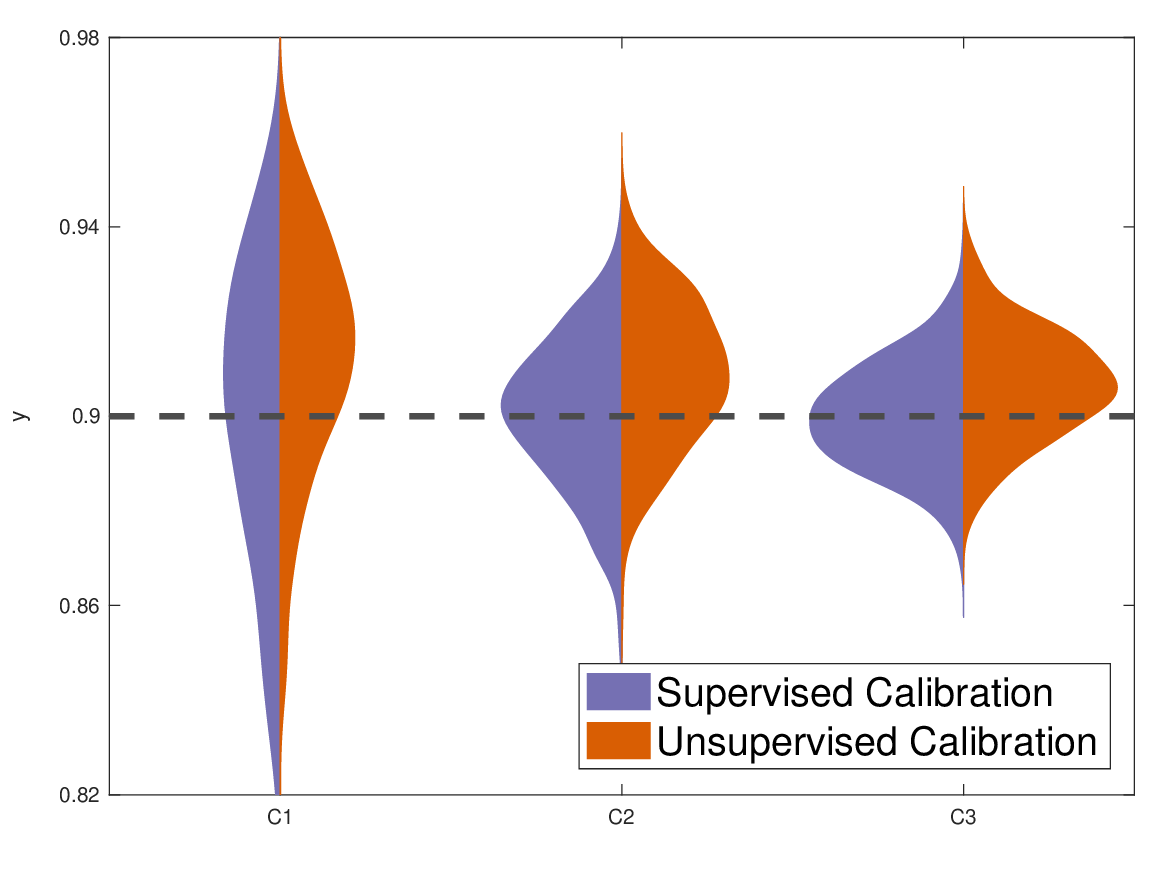}}
\psfrag{y}[b][][0.7]{Prediction set size}
\hspace{0.3cm}  \subfigure{\includegraphics[width=.48\textwidth]{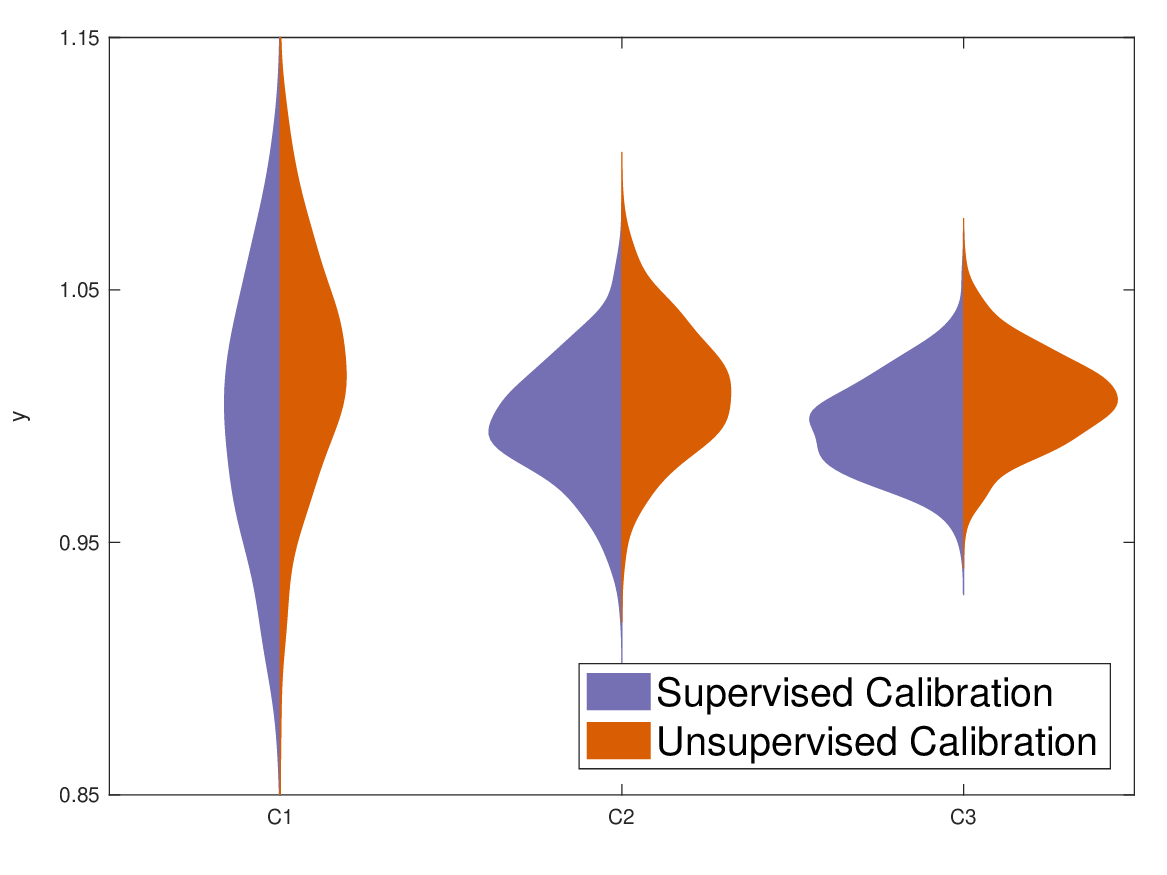}}\vspace{-0.cm}
\caption{\small Coverage probabilities and prediction set sizes over $400$ random partitions of `CIFAR10' dataset with target coverage $1 - \alpha = 0.9$ and number of calibration samples ranging from $100$ to $2,000$.  The presented methods with unsupervised calibration achieve performance that is only slightly inferior to that of conventional methods with supervised calibration (coverage probabilities slightly less centered around $0.9$). With both approaches, the coverage gap between the actual and target coverage probabilities decreases with more calibration samples and remains below $0.02$ with high probability using a few thousand calibration samples.}\label{fig-intro}\vspace{-0.cm}
\end{figure}

\vspace{-0.cm}
\emph{Notations:} Vectors and matrices are denoted by bold lower and upper case letters, respectively (e.g., $\V{v}$ and $\V{M}$); \acp{RV} are distinguished from instantiations by using capital upright symbols (e.g., $\up{Z}$); $\preceq$ and $\succeq$ denote vector (component-wise) inequalities; $\mathbb{I}\{\,\cdot\,\}$ denotes the indicator function; for an integer $t$, $[t]$ denotes the set $[t]=\{1,2,\ldots,t\}$; $\V{1}_t$ denotes the vector of size $t$ with all components equal to $1$; and $\V{I}_t$ denotes the identity matrix of size $t\times t$ .
\vspace{-0.cm}
\section{Preliminaries and conventional approach for split conformal prediction}
\vspace{-0.cm}
Let $\set{Z}=\set{X}\times\set{Y}$ be the set of instance-label pairs, with $\set{X}\subset\mathbb{R}^d$ the set of instances (features) and $\set{Y}=\{1,2,\ldots,c\}$ the set of labels (classes), $(\up{X},\up{Y})$ be an \ac{RV} of instance-label pairs in $\set{Z}$, and $\set{D}$ be a dataset composed by i.i.d.\ copies of $(\up{X},\up{Y})$. Conformal prediction methods use the dataset $\set{D}$ to find a set-prediction rule $\set{C}:\set{X}\to 2^{\set{Y}}$, so that prediction sets ($\set{C}(\up{X})\subset\set{Y}$) have reduced cardinality and often cover the true label ($\up{Y}\in\set{C}(\up{X})$). In particular, such methods aim to ensure a coverage probability near a target value $1-\alpha$ (e.g., $\alpha=0.1$).  

The most common approach for conformal prediction is known as split conformal prediction. In this approach, the dataset is randomly split into two groups $\set{D}=\set{D}_{\text{train}}\cup\set{D}_{\text{cal}}$: the $m$ samples \mbox{$\set{D}_{\text{train}}=\{(\widetilde{\up{X}}_i,\widetilde{\up{Y}}_i)\}_{i\in[m]}$} are referred to as training (or pretraining) samples, and the remaining $n$ samples $\set{D}_{\text{cal}}=\{(\up{X}_i,\up{Y}_i)\}_{i\in[n]}$ are referred to as calibration samples. Then, prediction sets are obtained as shown in Algorithm~\ref{split} below (see e.g.,\cite{AngBat:23}).  
\vspace{-0.cm}
\begin{algorithm}[H]
\small
\caption{\label{split}  Supervised split conformal prediction} \begin{tabular}{ll}\hspace{-0.2cm}\textbf{Input:}\hspace{0.1cm}Training samples $\{(\widetilde{X}_i,\widetilde{Y}_i)\}_{i\in[m]}$, labeled calibration samples $\{(X_i,Y_i)\}_{i\in[n]}$, and coverage $1-\alpha$\end{tabular}%\\
\vspace{-0.1cm}
\begin{algorithmic}[1] 
\setstretch{1.1}
\State Using the training samples, construct a conformal score function $S:\set{X}\times\set{Y}\to\mathbb{R}$
\State Compute the conformal scores of the calibration samples $\{S(X_i,Y_i)\}_{i\in[n]}$
\State Compute the conformal quantile $\widehat{q}=\text{Quantile}\big(\{(S(X_i,Y_i),1/n)\}_{i\in[n]};(1-\alpha)(1+1/n)\big)$
\State For any test instance $X$, return the prediction set $\set{C}(X)=\{y\in\set{Y}:\  S(X,y)\leq\widehat{q}\,\}$
\end{algorithmic}
\end{algorithm}\vspace{-0.2cm}

The $m$ training samples are used to learn a classification rule that determines a conformal score $S:\set{X}\times\set{Y}\to\mathbb{R}$ assessing the plausibility of different instance-label pairs.  For instance, the conformal score can be defined as $S(X,Y)=1-\widehat{p}\,(Y|X)$ where $\widehat{p}\,(Y|X)$ is the probability estimate (e.g., softmax output) assigned to label $Y$ by the classification rule at instance $X$ (see \cite{AngBat:23,AngBatJor:21} for other types of conformal scores for classification tasks). Subsequently, the $n$ calibration samples are used to obtain score values and the corresponding conformal quantile $\widehat{q}\,$ in Step~3 of Algortihm~\ref{split}. Specifically, the expression $\text{Quantile}\big(\{(S_i,p_i)\}_{i\in[t]};\beta\big)$ denotes the \mbox{$\beta$-quantile} of values $S_1,S_2,\ldots,S_t\in\mathbb{R}$ with probabilities $p_1,p_2,\ldots,p_t\in[0,1]$, $\sum_{i\in[t]}p_i=1$, that is
\vspace{-0.cm}
{\setlength{\belowdisplayskip}{1pt}%
 \setlength{\belowdisplayshortskip}{0pt}
\begin{align}\label{quantile}\text{Quantile}\big(\{(S_i,p_i)\}_{i\in[t]};\beta\big)=\inf\big\{q\in\mathbb{R}:\  \sum_{i\in[t]}p_i\mathbb{I}\{S_i\leq q\}\geq \beta\big\}.\end{align}}Then, the set-prediction rule is directly obtained using the score $S$ and the conformal quantile $\widehat{q}$. 

The conventional approach for split conformal prediction provides remarkably strong performance guarantees for the resulting prediction sets \cite{VovGamSha:05,AngBat:23,LeiSelRinTibWas:18,Vov:12}. Specifically, for any data distribution, type of conformal score, and target coverage $1-\alpha$, the set-prediction rule $\set{C}$ from Algorithm~\ref{split} satisfies 
\vspace{-0.cm}
{\setlength{\belowdisplayskip}{6pt}%
 \setlength{\belowdisplayshortskip}{0pt}
\begin{align}
1-\alpha&\leq\hspace{0.18cm}\mathbb{P}\{\up{Y}\in\set{C}(\up{X})\}\hspace{0.18cm}\leq 1-\alpha+\frac{1}{n+1} \label{bounds-split1}\\
1-\alpha- \sqrt{\frac{\log{(2/\delta)}}{2n}}&\leq\mathbb{P}\{\up{Y}\in\set{C}(\up{X})|\set{D}\}\leq 1-\alpha+\frac{1}{n+1}+\sqrt{\frac{\log{(2/\delta)}}{2n}},\  \mbox{ w.p. $\geq$ $1-\delta$.}
\label{bounds-split2}\end{align}}These performance guarantees are satisfied with wide generality. The lower bounds for the marginal coverage and the (training) conditional coverage in \eqref{bounds-split1} and \eqref{bounds-split2}, respectively, do not require additional assumptions. As described above, we assume that training, calibration, and test samples are randomly drawn from the same data distribution. The lower bound in \eqref{bounds-split1} does not even require the independence of samples and is valid also if samples are exchangeable. The upper bounds in \eqref{bounds-split1} and \eqref{bounds-split2} additionally require that the score values are distinct with probability one, which can be easily achieved for instance by adding a negligible amount of noise \cite{AngBat:23}.
The bounds for training-conditional coverage in \eqref{bounds-split2} are arguably more relevant in a practical sense: in a given application, there is only one dataset $\set{D}$, and the achieved coverage corresponds to a specific realization of \mbox{$\mathbb{P}\{\up{Y}\in\set{C}(\up{X})|\set{D}\}$}, rather than its expected value $\mathbb{P}\{\up{Y}\in\set{C}(\up{X})\}$.

%Therefore, using Algorithm~\ref{split} the coverage gap between the actual and target coverage probabilities is at most $1/(n+1)$ in expectation and $\set{O}(1/\sqrt{n})$ with high probability.

Split conformal prediction has been shown to be extremely useful in practice since it allows to obtain reliable set-prediction rules from general prediction rules (possibly highly unreliable) by using an additional set of calibration samples. However, existing methods require the use supervised calibration samples, leading to additional labeling costs. In the following, we describe the proposed approach for split conformal prediction that uses unsupervised calibration samples.
\vspace{-0.cm}
\section{Split conformal prediction with unsupervised calibration samples}\label{sec-3}
\vspace{-0.2cm}
In the proposed approach, the labels of the calibration samples are not available, and prediction sets are obtained as shown in Algorithm~\ref{split-uns} below. 
\vspace{-0.1cm}
\begin{algorithm}[H]
\small
\caption{\label{split-uns} Split conformal prediction with unsupervised calibration} \begin{tabular}{ll}\hspace{-0.2cm}\textbf{Input:}\hspace{0.1cm}Training samples $\{(\widetilde{X}_i,\widetilde{Y}_i)\}_{i\in[m]}$, unlabeled calibration samples $\{X_i\}_{i\in[n]}$, and coverage $1-\alpha$\end{tabular}%\\
\vspace{-0.0cm}
\begin{algorithmic}[1] 
\setstretch{1.12}
\State Using the training samples, construct a conformal score function $S:\set{X}\times\set{Y}\to\mathbb{R}$
\State Compute the conformal scores of the calibration instances for all possible labels $\{S(X_i,y)\}_{(i,y)\in[n]\times\set{Y}}$
\State Obtain label weights $\{w_i(y)\}_{(i,y)\in[n]\times\set{Y}}$ for the calibration instances
\State Compute the conformal quantile $\widehat{q}=\text{Quantile}\big(\{(S(X_i,y),w_i(y)/n)\}_{(i,y)\in[n]\times\set{Y}};(1-\alpha)(1+1/n)\big)$
\State For any test instance $X$, return the prediction set $\set{C}(X)=\{y\in\set{Y}:\  S(X,y)\leq\widehat{q}\,\}$
\end{algorithmic}
\end{algorithm}\vspace{-0.3cm}

The main difference in the proposed approach with respect to conventional split conformal prediction consists of Step 3 in Algorithm~\ref{split-uns} that obtains label weights for unsupervised calibration samples. Indeed, the conventional approach can be seen as a particular case of Algorithm~\ref{split-uns} in which the label weights correspond to the actual calibration labels, i.e., $w_i(y)$ is taken as $w_i^*(y)=\mathbb{I}\{y=Y_i\}$ for any $(i,y)\in[n]\times\set{Y}$.

In the presented methods, the label weights for calibration instances are obtained using the training samples. A naive approach would obtain such weights using the labels predicted by a classification rule $\up{h}$ learned using the training samples, i.e., $w_i(y)$ taken as \mbox{$\widehat{w}_i(y)=\mathbb{I}\{y=\up{h}(X_i)\}$} for any $(i,y)\in[n]\times\set{Y}$. However, this naive approach only provides acceptable performance in cases where the classification rule is already highly accurate, making conformation prediction somewhat unnecessary. The following sections present methods that avoid the limitations of such naive approaches and obtain the label weights by solving a tractable optimization problem determined by a family of functions. The proposed approach can be seen as obtaining labels that result in calibration and training samples that are statistically indistinguishable by a two-sample test based on an \ac{IPM}. This type of non-parametric tests can reliably detect statistical differences between two sets of samples without requiring any distributional assumption (see e.g., \cite{GreBorRas:12,SriFukGre:12}).
\vspace{-0.cm}
\subsection{Label weights minimizing \acp{IPM}}
\vspace{-0.cm}
Let $\V{w}=[w_1(1),w_1(2),\ldots,w_1(c),w_2(1),\ldots,w_n(c)]^\top\in\mathbb{R}^{nc}$ denote label weights for the calibration samples, the Step 3 in Algorithm~\ref{split-uns} can be carried out by solving the optimization problem
\vspace{-0.cm}
{\setlength{\belowdisplayskip}{4pt}%
 \setlength{\belowdisplayshortskip}{0pt}
\begin{align}
\underset{\V{w}}{\min}&\     \underset{f\in\set{F}}{\sup}\  \Big|\frac{1}{n}\sum_{(i,y)\in[n]\times\set{Y}}w_i(y)f(X_i,y)-\frac{1}{m}\sum_{i\in[m]} f(\widetilde{X}_{i},\widetilde{Y}_i)\Big|\label{opt-gen}\\
\mbox{s.t.\hspace{0.1cm}}&\  \  \V{w}\succeq\V{0}, \, \V{A}\V{w}=\V{1}_n, \, \V{B}\V{w}\preceq\V{b}\nonumber
\end{align}}where $\set{F}$ is a family of functions $f:\set{X}\times\set{Y}\to \mathbb{R}$. The optimization in \eqref{opt-gen} is convex for any family $\set{F}$ since the supremum of convex functions is a convex function \cite{BoyVan:04}, and is tractable for common and general families of functions. In particular, using functions in \acp{RKHS}, optimization \eqref{opt-gen} becomes a quadratic program of size $\set{O}(nc)$ (see details in Section~\ref{sec_kernel} below), while using Lipschitz functions, optimization \eqref{opt-gen} becomes a linear program of size $\set{O}(ncm)$, as a consequence of the Kantorovich-Rubinstein duality theorem \cite{Vil:21}.

Label weights aim to describe the probabilities of different labels, and the linear feasible set in \eqref{opt-gen} encodes quite general prior knowledge for label probabilities. In particular, the matrix $\V{A}\in\mathbb{R}^{n\times nc}$ just encodes the 
constraints $\sum_{y\in\set{Y}}w_i(y)=1$ for $i\in[n]$, i.e., $\V{A}=\V{I}_n\otimes \V{1}_c^\top$ for $\otimes$ the matrix Kronecker 
product. On the other hand, the matrix $\V{B}\in\mathbb{R}^{r\times nc}$ and vector $\V{b}\in\mathbb{R}^r$ can be used to encode other types of prior knowledge about label probabilities. For instance, an upper bound $L$ for the expected loss $\mathbb{E}\{\ell(\up{h}(\up{X}),\up{Y})
\}$ of a classifier $\up{h}$ (e.g., its cross-entropy loss) can be used to have the constraint given by $\V{b}=nL\in\mathbb{R}$ and
\begin{align}\label{B}\V{B}=[\ell(\up{h}(X_1),1),\ell(\up{h}(X_1),2),\ldots,\ell(\up{h}(X_1),c),\ell(\up{h}(X_2),1),\ldots,\ell(\up{h}(X_n),c)]\in\mathbb{R}^{1\times 
nc}.\end{align}
Such feasible sets often contain the label weights corresponding to the actual calibration labels, $w_i^{*}(y)=\mathbb{I}\{y=Y_i\}$ for $i\in[n]$, $y\in\set{Y}$, because $\V{w}^*\succeq\V{0}$ and $\V{A}\V{w}^*=\V{1}_n$ are clearly achieved, and we have $\V{B}\V{w}
^*\preceq\V{b}$ as long as the prior knowledge encoded by $\V{B}$ and $\V{b}$ is reliable.

As shown in the following, Algorithm~\ref{split-uns} with label weights obtained by \eqref{opt-gen}  can provide theoretical guarantees and practical performance comparable to those achieved by conventional split conformal prediction with supervised calibration samples. The optimization in \eqref{opt-gen} can be seen as finding the label weights for calibration samples that minimize the \ac{IPM} between probability distributions corresponding to training and calibration samples, where for a family of functions $\set{F}$
\begin{align*}
\text{IPM}(P,Q)=\sup_{f\in\set{F}} |\mathbb{E}_{P}\{f\}-\mathbb{E}_{Q}\{f\}|.
\end{align*}
In particular, for families given by \acp{RKHS}, \acp{IPM} correspond to maximum mean discrepancy metrics while for families given by Lipschitz functions, \acp{IPM} correspond to \mbox{L1-Wasserstein} metrics (see e.g., \cite{Mul:97,SriFukGre:12}). 

%. For a family of functions $\set{F}$, the \ac{IPM} between  s  $P$ and $Q$ is given by $\sup\{|\mathbb{E}_{P}\{f\}-\mathbb{E}_{Q}\{f\}|:f\in\set{F}\}$  (see e.g., \cite{Mul:97,SriFukGre:12}). In particular, for families given by \acp{RKHS}, \acp{IPM} correspond to maximum mean discrepancy metrics while for families given by Lipschitz functions, \acp{IPM} correspond to \mbox{L1-Wasserstein} metrics (see e.g., \cite{Mul:97,SriFukGre:12}). 

The objective function in \eqref{opt-gen} denoted as $\Phi_{n,m}^\set{F}(\V{w})$ corresponds to the \ac{IPM} between the \mbox{distribution $\V{w}/n$} supported on the $nc$ possible instance-label calibration pairs, and the empirical distribution of the $m$ training samples. 
\acp{IPM} between empirical distributions have been used to design non-parametric two-sample tests in which two sets of samples are considered to be statistically different (draw from different distributions) if the \ac{IPM} between the corresponding empirical distributions is large enough \cite{GreBorRas:12,SriFukGre:12}. Therefore, the label weights obtained from \eqref{opt-gen} can be interpreted as those making calibration and training samples to be statistically indistinguishable by a non-parametric two-sample test. In particular, at the weights $\V{w}^*$ corresponding to the actual calibration samples the value of \eqref{opt-gen} is given by the test statistic achieved by two sets of  i.i.d.\ samples that are actually drawn from the same distribution.

A key methodological difference in the proposed approach is that the training samples are not only used to learn the score function but also to obtain label weights for calibration samples. In particular, the methods presented bypass the negative results in \cite{CauGupAliDuc:24} for weakly-supervised calibration because the set prediction function is obtained using unlabeled calibration samples and also labeled training samples. This reuse of the training samples may seem doomed to fail since scores and label weights for calibration samples are no longer independent (both depend on the training samples). The following briefly describes the rationale that allows the proposed approach to avoid such concerns, Theorem~\ref{th_general}  below provides the precise performance guarantees achieved by the methods proposed.

Let $\mathfrak{X}_{\set{C}}:\set{X}\times\set{Y}\to\{0,1\}\subset\mathbb{R}$ be the function
\begin{align}\label{chi}
\mathfrak{X}_{\set{C}}(x,y)\coloneqq\mathbb{I}\{y\in\set{C}(x)\}=\mathbb{I}\{S(x,y)\leq \widehat{q}\,\}\in\{0,1\}.
\end{align}
The coverage gap of the set-prediction rule from Algorithm~\ref{split-uns} is small as long as the weighted average of \mbox{function $\mathfrak{X}_{\set{C}}$} at the calibration instances is near the expectation of $\mathfrak{X}_{\set{C}}$ because 
\vspace{-0.cm}
{\setlength{\belowdisplayskip}{2pt}%
 \setlength{\belowdisplayshortskip}{0pt}\begin{align*}\mathbb{P}\{\up{Y}\in\set{C}(\up{X})\}=\mathbb{E}\{\mathfrak{X}_{\set{C}}\}\  \mbox{ and }\   \frac{1}{n}\sum_{i\in[n],y\in\set{Y}}w_i(y)\mathfrak{X}_{\set{C}}(\up{X}_i,y)\approx 1-\alpha\end{align*}}by definition of $\mathfrak{X}_{\set{C}}$ in \eqref{chi} and $\widehat{q}$ in the step 4 of Algorithm~\ref{split-uns}. Since the minimization of \eqref{opt-gen} ensures that \vspace{-0.cm}
{\setlength{\belowdisplayskip}{4pt}%
 \setlength{\belowdisplayshortskip}{0pt}
\begin{align*}\frac{1}{n}\sum_{i\in[n],y\in\set{Y}}w_i(y)f(\up{X}_i,y)\approx\frac{1}{m}\sum_{i\in[m]}f(\widetilde{\up{X}}_i,\widetilde{\up{Y}}_i),\   \forall f\in\set{F}\end{align*}}prediction sets given by label weights from \eqref{opt-gen} have a small coverage gap as long as the \mbox{family $\set{F}$} utilized: 1) can accurately approximate the function $\mathfrak{X}_{\set{C}}$ at the calibration instances (e.g., $\set{F}$ can interpolate finite sets of points), and 2) ensures the uniform convergence of i.i.d.\ averages (e.g., $\set{F}$ has reduced Rademacher complexity). This type of analysis for conformal prediction methods is novel and can be of independent interest beyond split conformal prediction with unsupervised calibration.

\vspace{-0.cm}
\subsection{Performance guarantees}\label{sec_guarantees}
\vspace{-0.cm}
The following provides performance bounds for the coverage probability of set-prediction rules obtained using the proposed approach in Algorithm~\ref{split-uns}. In particular, such performance guarantees are given in terms of the Rademacher complexity of the family $\set{F}$, and the quantity
\vspace{-0.cm}
{\setlength{\belowdisplayskip}{2pt}%
 \setlength{\belowdisplayshortskip}{0pt}
\begin{align}\label{M}
D(\mathfrak{X}_{\set{C}},\set{F})=\underset{f\in\set{F}}{\inf}\  \frac{1}{n}\sum_{i\in[n],y\in\set{Y}}\big|\mathfrak{X}_{\set{C}}(\up{X}_{i},y)-f(\up{X}_i,y)\big|
\end{align}}that describes the smallest error achieved approximating the function $\mathfrak{X}_{\set{C}}$ with functions in $\set{F}$ at the calibration instances. For the next result we denote by $\set{R}_t(\set{F})$ the Rademacher complexity of $\set{F}$ for $t$ i.i.d.\ samples, and consider general families of real-valued functions with bounded differences, (i.e.,  $|f(z)-f(z')|\leq B$, for any $f\in\set{F}$ and $z,z'\in\set{Z}=\set{X}\times\set{Y}$). The requirement of bounded differences is satisfied by common families including functions in \acp{RKHS} given by bounded kernels or Lipschitz functions defined in bounded sets.

\begin{theorem}\label{th_general}
Let $\V{w}$ be label weights with objective value in \eqref{opt-gen} upper bounded by $V_{\text{opt}}$ (i.e., $\Phi^\set{F}_{n,m}(\V{w})\leq V_{\text{opt}}$). For any data distribution, type of conformal score, and target coverage $1-\alpha$,  the set-prediction rule $\set{C}$ from Algorithm~\ref{split-uns} satisfies 
\vspace{-0.cm}
{\setlength{\belowdisplayskip}{4pt}%
 \setlength{\belowdisplayshortskip}{0pt}\begin{align}\label{train-coverage}
\mathbb{P}\{\up{Y}\in\set{C}(\up{X})|\set{D}\}\geq 1-\alpha-G_{n,m}^{\set{F}}-\sqrt{\frac{\log{(2/\delta)}}{2n}}\end{align}}with probability at least $1-\delta$, for $G_{n,m}^{\set{F}}$ given by\vspace{-0.cm}
{\setlength{\belowdisplayskip}{2pt}%
 \setlength{\belowdisplayshortskip}{0pt}
\begin{align}\label{G-F}
G_{n,m}^{\set{F}}=V_\text{opt}+D(\mathfrak{X}_{\set{C}},\set{F})+2(\set{R}_n(\set{F})+\set{R}_m(\set{F}))+B\sqrt{\Big(\frac{1}{n}+\frac{1}{m}\Big)\frac{\log{(2/\delta)}}{2}}.
\end{align}}In addition,  if the values $\{S(\up{X}_i,y)\}_{(i,y)\in[n]\times\set{Y}}$ are distinct with probability one, we have\vspace{-0.cm}
{\setlength{\belowdisplayskip}{2pt}%
 \setlength{\belowdisplayshortskip}{0pt}
\begin{align}\label{train-coverage-upper}
\mathbb{P}\{\up{Y}\in\set{C}(\up{X})|\set{D}\}\leq 1-\alpha+\frac{2}{n+1}+G_{n,m}^{\set{F}}+\sqrt{\frac{\log{(2/\delta)}}{2n}}\end{align}}with probability at least $1-\delta$.
\end{theorem}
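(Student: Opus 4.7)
The strategy is to relate the conditional coverage $\mathbb{E}[\mathfrak{X}_{\set{C}}(\up{X},\up{Y})\mid\set{D}]$ to the weighted empirical coverage $\widehat{C}(\V{w})\coloneqq\frac{1}{n}\sum_{i,y}w_i(y)\mathfrak{X}_{\set{C}}(\up{X}_i,y)$, which by the quantile design is pinned near $1-\alpha$. Indeed, Step~4 of Algorithm~\ref{split-uns} together with~\eqref{quantile} gives $\widehat{C}(\V{w})\geq(1-\alpha)(1+1/n)\geq 1-\alpha$ deterministically. Under distinct scores, the weighted CDF jumps at $\widehat{q}$ by at most $\max_{i,y}w_i(y)/n\leq 1/n$ (using $w_i(y)\in[0,1]$, which follows from $\V{w}\succeq\V{0}$ and $\V{A}\V{w}=\V{1}_n$), so $\widehat{C}(\V{w})\leq 1-\alpha+2/(n+1)$. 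It therefore suffices to control the deviation $|\widehat{C}(\V{w})-\mathbb{E}[\mathfrak{X}_{\set{C}}\mid\set{D}]|$ with high probability.

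To bound this deviation, I would pick a near-minimizer $f^*\in\set{F}$ of the inner sum defining $D(\mathfrak{X}_{\set{C}},\set{F})$ and telescope through $f^*$ and the training empirical:
\begin{align*}
\widehat{C}(\V{w})-\mathbb{E}[\mathfrak{X}_{\set{C}}\mid\set{D}]
&=\underbrace{\tfrac{1}{n}\sum_{i,y}w_i(y)(\mathfrak{X}_{\set{C}}-f^*)(\up{X}_i,y)}_{T_1}+\underbrace{\tfrac{1}{n}\sum_{i,y}w_i(y)f^*(\up{X}_i,y)-\tfrac{1}{m}\sum_{j}f^*(\widetilde{\up{X}}_j,\widetilde{\up{Y}}_j)}_{T_2}\\
&\quad+\underbrace{\tfrac{1}{m}\sum_{j}f^*(\widetilde{\up{X}}_j,\widetilde{\up{Y}}_j)-\mathbb{E}[f^*\mid\set{D}]}_{T_3}+\underbrace{\mathbb{E}[f^*-\mathfrak{X}_{\set{C}}\mid\set{D}]}_{T_4}.
\end{align*}
The first three terms are routine: $|T_1|\leq D(\mathfrak{X}_{\set{C}},\set{F})+\epsilon$ because $w_i(y)\in[0,1]$; $|T_2|\leq\Phi^\set{F}_{n,m}(\V{w})\leq V_{\text{opt}}$ directly from the IPM objective; and $|T_3|$ is bounded by a standard Rademacher-plus-McDiarmid uniform convergence argument for $\set{F}$ on the i.i.d.\ training samples, giving $|T_3|\leq 2\set{R}_m(\set{F})+B\sqrt{\log(4/\delta)/(2m)}$ uniformly over $f\in\set{F}$ with probability at least $1-\delta/4$.

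The main obstacle is $T_4=\mathbb{E}[f^*-\mathfrak{X}_{\set{C}}\mid\set{D}]$, since $f^*$ is tailored to the calibration $\up{X}_i$'s but $T_4$ is a joint expectation over fresh data without access to calibration labels. The key trick is the pointwise bound $|f^*-\mathfrak{X}_{\set{C}}|(x,y)\leq\sum_{y'}|f^*-\mathfrak{X}_{\set{C}}|(x,y')$ (each summand being non-negative), which pulls back the label and yields $|T_4|\leq\mathbb{E}_{\up{X}}\bigl[\sum_{y'}|f^*-\mathfrak{X}_{\set{C}}|(\up{X},y')\bigm|\set{D}\bigr]$. A uniform concentration argument for the class $\{x\mapsto\sum_{y'}|f(x,y')-\mathfrak{X}_{\set{C}}(x,y')|:f\in\set{F}\}$ on the i.i.d.\ calibration inputs $\up{X}_i$'s (Rademacher symmetrization plus Talagrand's contraction inequality, since $u\mapsto|u-v|$ is $1$-Lipschitz) then controls this expectation by its in-sample analogue at the $\up{X}_i$'s plus $2\set{R}_n(\set{F})+B\sqrt{\log(4/\delta)/(2n)}$, and the in-sample analogue is at most $D(\mathfrak{X}_{\set{C}},\set{F})+\epsilon$ by the choice of $f^*$. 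Combining the four bounds via a union bound, merging the concentration terms into a single $B\sqrt{(1/n+1/m)\log(2/\delta)/2}$ using one McDiarmid inequality over all $n+m$ samples, letting $\epsilon\downarrow 0$, and adding a Hoeffding term $\sqrt{\log(2/\delta)/(2n)}$ for the $\{0,1\}$-valued coverage indicator on the calibration samples (analogous to~\eqref{bounds-split2}) yields the lower bound~\eqref{train-coverage}; the upper bound~\eqref{train-coverage-upper} follows by the symmetric argument, with the extra $2/(n+1)$ coming from the upper quantile jump noted above.
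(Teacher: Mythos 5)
Your decomposition is genuinely different from the paper's proof, and it contains a gap at the step that your own writeup flags as ``the main obstacle.'' The paper proves Theorem~\ref{th_general} by first reducing to the \emph{supervised} split-conformal setting: Lemma~\ref{lemma1} shows (by a deterministic monotonicity argument) that $\widehat{q}$ dominates the standard conformal quantile of the true labeled calibration scores $\{S(\up{X}_i,\up{Y}_i)\}$ at the shifted level $(1-\alpha+nE/(n+1))(1+1/n)$, where $E$ compares the labeled in-sample coverage with the weighted in-sample coverage; Lemma~\ref{lemma2} bounds $|E|$ by $G_{n,m}^{\set{F}}$; and the final bound comes from the classical Beta-distribution training-conditional coverage result, which is where the extra $\sqrt{\log(2/\delta)/(2n)}$ arises. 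Crucially, in that route the data-dependent function $\mathfrak{X}_{\set{C}}$ is only ever evaluated at the calibration points themselves, and the comparison in Lemma~\ref{lemma2} is between two \emph{empirical} averages (labeled calibration vs.\ training), so standard uniform convergence over $\set{F}$ suffices.

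Your route instead tries to compare the weighted empirical coverage $\widehat{C}(\V{w})$ directly to the population coverage $\mathbb{E}[\mathfrak{X}_{\set{C}}\mid\set{D}]$, and the trouble is precisely in controlling $T_4$. You want to concentrate $\mathbb{E}_{\up{X}}\bigl[\sum_{y'}|f^*-\mathfrak{X}_{\set{C}}|(\up{X},y')\mid\set{D}\bigr]$ around its empirical analogue on the calibration inputs $\up{X}_i$, using uniform convergence over the class $\{x\mapsto\sum_{y'}|f(x,y')-\mathfrak{X}_{\set{C}}(x,y')|:f\in\set{F}\}$ with $\mathfrak{X}_{\set{C}}$ held fixed. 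But $\mathfrak{X}_{\set{C}}(x,y)=\mathbb{I}\{S(x,y)\leq\widehat{q}\}$, and $\widehat{q}$ is computed in Step~4 of Algorithm~\ref{split-uns} from the very calibration instances $\{\up{X}_i\}$ over which you want to concentrate. Treating $\mathfrak{X}_{\set{C}}$ as a fixed function when applying Rademacher symmetrization plus McDiarmid over the $\up{X}_i$'s is therefore not justified; you would need uniform convergence over the augmented class indexed by $(f,q)\in\set{F}\times\mathbb{R}$, which changes the complexity term (and also sums over $y'\in\set{Y}$, introducing a dependence on $c$ absent from $\set{R}_n(\set{F})$ as used in $T_3$). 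The paper sidesteps exactly this by never concentrating a data-dependent bias under a fresh expectation; it ships that dependence off to the classical Beta-coverage lemma, which handles it correctly.

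There are two further, more minor mismatches even if the above were repaired: your bound would accrue $D(\mathfrak{X}_{\set{C}},\set{F})$ twice (once from $T_1$ and once from $T_4$), whereas the paper pays $D$ once; and your extra ``Hoeffding term $\sqrt{\log(2/\delta)/(2n)}$'' is grafted on at the end without a natural place in your telescoping identity (in the paper it is produced by the Beta sub-Gaussian argument, which your route has replaced). In short, the conceptual pieces you assemble (IPM term, bias term, two-sample Rademacher concentration) match the paper's Lemma~\ref{lemma2}, but by routing through a fresh-sample expectation rather than through the supervised quantile of Lemma~\ref{lemma1}, you hit a dependency on $\widehat{q}$ that is not controlled, and the final constants would be worse.
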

\vspace{-0.2cm}
\begin{proof}
See Appendix~\ref{apd:proof_th_general}.
\end{proof}
\vspace{-0.2cm}
The result above presents performance guarantees for the coverage probability of the methods proposed comparable to those in 
\eqref{bounds-split2} for the conventional approach with supervised calibration.  The excess coverage gap $G_{n,m}^{\set{F}}$ in the bounds of Theorem~\ref{th_general} describes the effectiveness of Step 3 in \mbox{Algorithm~\ref{split-uns}}. In particular, the term $V_{\text{opt}}$ accounts for the objective value obtained by $\V{w}$ in \eqref{opt-gen}, and is small using appropriate optimization algorithms and number of calibration and training samples.  For instance, if the feasible set in optimization \eqref{opt-gen} includes the label weights $\V{w}^*$ corresponding to the actual calibration labels, the objective value $\Phi^{\set{F}}_{n,m}(\V{w})$ satisfies
%\mathbb{E}\{\Phi_{\set{F}}(\V{p})\}\leq &\   \mathbb{E}\{\Phi_{\set{F}}(\V{p})-\Phi_{\set{F}}^*\}+2(\set{R}_n(\set{F})+R_m(\set{F}))
 %\label{bound-Ph}\\
 \vspace{-0.cm}
{\setlength{\belowdisplayskip}{4pt}%
 \setlength{\belowdisplayshortskip}{0pt}
 \begin{align}
 \hspace{-0.2cm}\Phi^{\set{F}}_{n,m}(\V{w})\leq \varepsilon_{\text{opt}}+2(\set{R}_n(&\set{F})+\set{R}_m(\set{F}))+B\sqrt{\Big(\frac{1}{n}+\frac{1}{m}\Big)\frac{\log(1/\delta)}{2}},\  \  \mbox{ w.p. }\geq 1-\delta
 \label{bound-Phi}
\end{align}}for $\varepsilon_{\text{opt}}$ the optimization error achieved by the algorithm used to solve  \eqref{opt-gen} (see Appendix~\ref{appendix_additional} for a detailed proof). 
Hence, the term in \eqref{G-F} due to the optimization value is generally of the same order as the Rademacher complexity of the family. In addition, Theorem~\ref{th_general} shows that the methods proposed do not require to solve the optimization problem \eqref{opt-gen} with a high accuracy, as approximate optimization algorithms increase the bound only by their optimization error $\varepsilon_\text{opt}$. For instance optimization errors on the order of $10^{-3}$ do not significantly affect the final coverage probability.

The excess in coverage gap achieved using unsupervised calibration is given by the terms $D(\mathfrak{X}_\set{C},\set{F})$ and $\set{R}_n(\set{F})+\set{R}_m(\set{F})$ that describe a bias-variance trade-off for the family $\set{F}$. The bias term $D(\mathfrak{X}_\set{C},\set{F})$ corresponds to the distance between function $\mathfrak{X}_\set{C}$ and the family $\set{F}$ at the calibration instances, while the variance term $\set{R}_n(\set{F})+\set{R}_m(\set{F})$ accounts for the usage of finite sample sizes. Both terms are low using an appropriate family $\set{F}$ and a sufficient number of samples. The bias term $D(\mathfrak{X}_\set{C},\set{F})$ can be made arbitrarily small using functions that are Lipschitz or belong to an \ac{RKHS} given by a strictly positive kernel (e.g., Gaussian, Laplacian, or polynomial) since both types of functions can interpolate any finite set of points. The variance term $\set{R}_n(\set{F})+\set{R}_m(\set{F})$ decreases with the sample sizes at a rate $\set{O}(Rn^{-1/2}+Rm^{-1/2})$ using functions in an \ac{RKHS} with norm bounded by $R$ (see e.g, Chapter 6 in \cite{MehRos:18}). On the other hand, using $R$-Lipschitz functions such a rate becomes $\set{O}(Rn^{-1/d}+Rm^{-1/d})$ \cite{WanGaoXie:21,LuxBou:04} that is significantly affected by the instances' dimensionality $d$. 

Theorem~\ref{th_general} shows that the coverage gap decreases with both the number of calibration samples $n$ and training samples $m$. Although the total number of training samples available is commonly much larger than that of calibration samples, there is not a significant benefit in using $m>n$ for calibration, and label weights can be obtained using in \eqref{opt-gen} only a subset of the training samples available. 
The results in the theorem also expose an important limitation in the methods presented since the performance guarantees deteriorate for an increased number of classes \mbox{$|\set{Y}|=c$}. Specifically, both bias and variance terms increase with the number of classes, the bias term $D(\mathfrak{X}_\set{C},\set{F})$ is given by the average of approximation errors multiplied by $|\set{Y}|$, and the variance term $\set{R}_n(\set{F})+\set{R}_m(\set{F})$ is given by the complexity of functions over $\set{X}\times\set{Y}$ that generally increases with $|\set{Y}|$.  

The performance guarantees shown in Theorem~\ref{th_general} can also be compared with those of the naive approach that labels calibration instances using a classification rule. It is easy to show that for such a naive approach the coverage gap is $\set{O}(\text{Err}(\up{h}))$ for $\text{Err}(\up{h})$ the error probability of the classification rule $\up{h}$ used to label the calibration instances. In particular, such a bound for the coverage gap can be obtained by using the results in \cite{BarCan:23} since $\text{Err}(\up{h})$ coincides with the total variation distance between $(\up{X},\up{Y})$ and $(\up{X},\up{h}(\up{X}))$. A coverage gap near $\text{Err}(\up{h})$ is not acceptable in common scenarios because it would only be small in cases where a highly accurate classifier is available, which would commonly defeat the purpose of conformal prediction. 

In this section we have described the general approach presented together with the corresponding performance guarantees. As indicated above, kernel-based methods offer an advantageous balance for optimization efficiency and bias-variance trade-off in general scenarios, so that the rest of the paper focuses in such type of approaches. In addition, the kernel-based methods provide enhanced performance guarantees that enable effective model selection, as described in the following.
\vspace{-0.cm}
\section{Kernel-based implementation}\label{sec_kernel}
\vspace{-0.cm}
Let $\set{H}$ be an \ac{RKHS} of real-valued functions over $\set{Z}=\set{X}\times\set{Y}$ given by kernel $K:\set{Z}\times\set{Z}\to\mathbb{R}$. The optimization problem in \eqref{opt-gen} using family $\set{F}=\{f\in\set{H}:\|f\|_{\set{H}}\leq R\}$ with $R=1$ is equivalent to 
\vspace{-0.cm}
{\setlength{\belowdisplayskip}{2pt}%
 \setlength{\belowdisplayshortskip}{0pt}
\begin{align}\label{opt-ker}
\underset{\V{w}}\min &\  \Big(\frac{1}{n^2}\V{w}^\top\,\V{K}\,\V{w}-\frac{2}{nm}\V{v}^\top\V{w}+\frac{1}{m^2}\V{1}_m^\top\,\widetilde{\V{K}}\,\V{1}_m\Big)^{\frac{1}{2}}\\
\mbox{s.t.\hspace{0.07cm}}&\  \  \  \  \V{w}\succeq\V{0}, \V{A}\V{w}=\V{1}_n, \V{B}\V{w}\preceq\V{b}\nonumber\end{align}}where $\V{K}\in\mathbb{R}^{nc\times nc}$ is the kernel matrix corresponding to the $nc$ possible instance-label calibration pairs $Z_1,Z_2,\ldots,Z_{nc}$ with $Z_{(i-1)c+y}=(X_i,y)$ for $i\in[n],y\in\set{Y}$, that is, the $(i,j)$-th component of $\V{K}$ is given by $K_{i,j}=K(Z_i,Z_j)$ for $i,j\in[nc]$. Similarly, $\widetilde{\V{K}}\in\mathbb{R}^{m\times m}$ is the kernel matrix corresponding to the training samples $\widetilde{Z}_1,  \widetilde{Z}_2,\ldots,\widetilde{Z}_m$ with $\widetilde{Z}_i=(\widetilde{X}_i,\widetilde{Y}_i)$ for $i\in[m]$. In addition, $\V{v}\in\mathbb{R}^{nc}$ is the vector with $i$-th component given by $v_i=\sum_{j\in[m]}K(Z_i,\widetilde{Z}_j)$ for $i\in[nc]$.  The equivalence between \eqref{opt-gen} and \eqref{opt-ker} follows as a consequence of the dual characterization of the norm in $\set{H}$ (see the detailed derivation in Appendix~\ref{appendix_additional}). The kernel $K$ can be any joint kernel over \mbox{$\set{Z}=\set{X}\times\set{Y}$}, for instance $K$ can be given by a kernel $K_0$ over $\set{X}$ as the separable kernel $K\big((x,y),(x',y')\big)=K_0(x,x')\mathbb{I}\{y= y'\}$ (see \cite{AlvRosLaw:12} for other types of joint kernels). Separable kernels are simple and convenient computationally since they lead to highly sparse kernel matrices. 

The usage of a family with other maximum norm $R\neq 1$ results in an optimization problem equivalent to \eqref{opt-ker}, namely, a problem that has the same solution with optimum value multiplied by $R$. This fact has important consequences because for families of functions $\set{F}=\{f\in\set{H}: \|f\|_\set{H}\leq R\}$ the bias/variance trade-off described in Section~\ref{sec_guarantees} is given by the value of the norm $R$. Increased values of $R$ result in more general families of functions at the expenses of an increased Rademacher complexity. Therefore, kernel-based methods provide an automatic capacity/complexity control:  the trade-off for the value of $R$ does not need to be explicitly addressed because the optimization problem solved does not really change with $R$.  Therefore, a solution of \eqref{opt-ker} enjoys the performance guarantees of Theorem~\ref{th_general} corresponding to the value of $R$ that achieves the best bias/variance trade-off. In particular, kernel-based methods result in the following enhanced performance guarantees.

% the performance guarantees obtained using kernel-based methods can be enhanced as follows.

%prediction sets obtained addressing \eqref{opt-ker} in Step 3 of Algorithm~\ref{split-uns} satisfy the following enhanced performance guarantees.

\begin{theorem}\label{th_kernel}
Let $\set{H}$ be an \ac{RKHS} given by kernel $K$ over $\set{Z}=\set{X}\times\set{Y}$ with $0\leq K(z,z')\leq\kappa^2$ $\forall z,z'\in\set{Z}$, and $\V{w}$ be a solution of optimization problem \eqref{opt-ker} with feasible set that includes the label weights $\V{w}^*$ corresponding to the
actual calibration labels. For any data distribution, type of conformal score, and target coverage $1-\alpha$,  the set-prediction rule $\set{C}$ from Algorithm~\ref{split-uns} satisfies 
\vspace{-0.cm}
{\setlength{\belowdisplayskip}{2pt}%
 \setlength{\belowdisplayshortskip}{0pt}
\begin{align}\label{train-coverage-kernel}
\mathbb{P}\{\up{Y}\in\set{C}(\up{X})|\set{D}\}\geq 1-\alpha-G_{n,m}^{K}-\sqrt{\frac{\log{(2/\delta)}}{2n}}\end{align}}with probability at least $1-\delta$, for $G_{n,m}^{K}$ given by\vspace{-0.cm}
{\setlength{\belowdisplayskip}{2pt}%
 \setlength{\belowdisplayshortskip}{0pt}
\begin{align}\label{G-K}
G_{n,m}^{K}=\min_{f\in\set{H}}\frac{1}{n}\sum_{i\in[n],y\in\set{Y}}\big|\mathfrak{X}_{\set{C}}(\up{X}_i,y)-f(\up{X}_i,y)\big|+2\kappa\big(1+\sqrt{\log(2/\delta)}\big)\sqrt{\frac{1}{n}+\frac{1}{m}}\,\|f\|_{\set{H}}.
\end{align}}In addition,  if the values $\{S(\up{X}_i,y)\}_{(i,y)\in[n]\times\set{Y}}$ are distinct with probability one, we have\vspace{-0.cm}
{\setlength{\belowdisplayskip}{2pt}%
 \setlength{\belowdisplayshortskip}{0pt}
\begin{align}\label{train-coverage-upper-kernel}
\mathbb{P}\{\up{Y}\in\set{C}(\up{X})|\set{D}\}\leq 1-\alpha+\frac{2}{n+1}+G_{n,m}^{K}+\sqrt{\frac{\log{(2/\delta)}}{2n}}\end{align}}with probability at least $1-\delta$.
\end{theorem}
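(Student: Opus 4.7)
The plan is to specialize the guarantee of Theorem~\ref{th_general} to the family $\set{F}_R = \{f \in \set{H}: \|f\|_\set{H} \leq R\}$ and then to optimize the resulting bound over the norm radius $R > 0$. The central structural observation is that the kernel optimization problem \eqref{opt-ker} is independent of $R$, so a single $\V{w}$ can be paired with every $\set{F}_R$; as a consequence, optimizing over $R$ is equivalent to optimizing over $f \in \set{H}$, which produces the infimum in the definition of $G^K_{n,m}$.

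First I would verify a scaling identity. By the dual characterization of the RKHS norm (the same derivation that yields \eqref{opt-ker} from \eqref{opt-gen}), the objective $\Phi^{\set{F}_R}_{n,m}(\V{w})$ equals $R \cdot \Phi^{\set{F}_1}_{n,m}(\V{w})$, and $\Phi^{\set{F}_1}_{n,m}(\V{w})$ coincides with the objective of \eqref{opt-ker} at $\V{w}$. Hence the solution $\V{w}$ of \eqref{opt-ker} is simultaneously a minimizer of \eqref{opt-gen} with family $\set{F}_R$ for every $R > 0$, and Theorem~\ref{th_general} may be applied to $\set{F}_R$ using this common $\V{w}$.

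Next I would substitute the RKHS-specific expressions. For $f \in \set{F}_R$, the reproducing property and $K(z,z) \leq \kappa^2$ give $|f(z)| \leq R\kappa$, so $B = 2R\kappa$, and a standard argument for RKHS balls yields $\set{R}_t(\set{F}_R) \leq R\kappa/\sqrt{t}$. Since $\V{w}^*$ belongs to the feasible set, \eqref{bound-Phi} bounds $V_\text{opt} = \Phi^{\set{F}_R}_{n,m}(\V{w})$ by a quantity linear in $R$ of order $R\kappa(1+\sqrt{\log(1/\delta)})\sqrt{1/n+1/m}$ with probability at least $1-\delta/2$. Inserting these expressions into $G^{\set{F}_R}_{n,m}$ from \eqref{G-F} and combining with \eqref{train-coverage} via a union bound at level $\delta$, every term other than $D(\mathfrak{X}_\set{C}, \set{F}_R)$ becomes proportional to $R$. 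Because the left-hand side of \eqref{train-coverage} does not depend on $R$, the resulting bound can then be optimized over $R > 0$; parameterizing $R = \|f\|_\set{H}$ converts $\inf_{R>0}[D(\mathfrak{X}_\set{C}, \set{F}_R) + c R]$ into $\inf_{f \in \set{H}}[\tfrac{1}{n}\sum_{i,y}|\mathfrak{X}_\set{C}(\up{X}_i,y) - f(\up{X}_i,y)| + c\|f\|_\set{H}]$, which matches $G^K_{n,m}$ in \eqref{G-K}. The upper bound \eqref{train-coverage-upper-kernel} follows by the same reasoning from \eqref{train-coverage-upper}.

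The main obstacle I anticipate is tracking the exact constant $2\kappa(1+\sqrt{\log(2/\delta)})$ and justifying the passage from a bound valid for fixed $R$ to one with an infimum, since the data-dependent minimizer of $R$ may escape any prescribed grid. The cleanest way to handle both issues simultaneously is to rerun the decomposition used inside the proof of Theorem~\ref{th_general} but specialized to the RKHS: the IPM term becomes $|\langle f, \mu_m - \mu^{\V{w}}_n\rangle_\set{H}| \leq \|f\|_\set{H}\cdot\|\mu_m - \mu^{\V{w}}_n\|_\set{H}$ by Cauchy--Schwarz, where $\mu_m$ and $\mu^{\V{w}}_n$ are the mean embeddings of the training and weighted-calibration empirical distributions. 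Optimality of $\V{w}$ together with feasibility of $\V{w}^*$ yields $\|\mu_m - \mu^{\V{w}}_n\|_\set{H} \leq \|\mu_m - \mu^{\V{w}^*}_n\|_\set{H}$, and McDiarmid concentration applied to $\|\mu_t - \mu\|_\set{H}$ with bounded differences $2\kappa/t$ (combined with $\mathbb{E}\|\mu_t - \mu\|_\set{H} \leq \kappa/\sqrt{t}$) supplies the factor $\kappa(1 + \sqrt{\log(2/\delta)})\sqrt{1/n + 1/m}$ after triangle inequality between the two mean-embedding fluctuations. Because $f$ enters as a free variable throughout this argument, no discretization of $R$ is needed and the stated constant is obtained directly.
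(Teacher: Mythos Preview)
Your proposal is correct and follows essentially the same route as the paper: the proof in Appendix~\ref{apd:proof_th_kernel} reduces to a refined bound on $|\up{E}|$ (the kernel clause of Lemma~\ref{lemma2}), and your third paragraph reproduces precisely that argument---mean-embedding reformulation, the inequality $\|\mu_m-\mu_n^{\V{w}}\|_\set{H}\le\|\mu_m-\mu_n^{\V{w}^*}\|_\set{H}$ from optimality and feasibility of $\V{w}^*$, McDiarmid concentration, and the observation that the bound for $\set{F}_R$ is $R$ times a single data-dependent quantity so the infimum over $R$ (equivalently over $f\in\set{H}$) incurs no union-bound cost. Two small constant-tracking points to adjust: (i) under $0\le K\le\kappa^2$ the bounded-difference constant is $\sqrt{2}\,\kappa/t$, not $2\kappa/t$; (ii) to recover the exact factor $\sqrt{1/n+1/m}$ in \eqref{G-K}, apply McDiarmid directly to the two-sample quantity $\|\mu_n^{\V{w}^*}-\mu_m\|_\set{H}$ rather than bounding $\|\mu_n^{\V{w}^*}-\mu\|_\set{H}$ and $\|\mu_m-\mu\|_\set{H}$ separately and combining by triangle inequality, which yields the looser $1/\sqrt{n}+1/\sqrt{m}$; the overall factor $2\kappa$ then arises because both $\Phi_{n,m}^{\set{F}_R}(\V{w})$ and the supremum term in the decomposition are bounded by $R\,\|\mu_n^{\V{w}^*}-\mu_m\|_\set{H}$.
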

\vspace{-0.1cm}
\begin{proof}
See Appendix~\ref{apd:proof_th_kernel}.
\end{proof}
\vspace{-0.1cm}
The result above shows that the kernel-based approach in \eqref{opt-ker} yields an excess coverage gap given by the best bias-variance trade-off in the corresponding \ac{RKHS}. This best trade-off is described by the term $G_{n,m}^{K}$ in \eqref{G-K} as a combination of the approximation error and the \ac{RKHS} norm weighted by the number of samples used.  For simplicity, Theorem~\ref{th_kernel} is stated for the case where $\V{w}$ is an exact solution of \eqref{opt-ker} and $\V{w}^*$ is included in the feasible set. A similar result holds in general cases, by including the term $V_{\text{opt}}$ in the expression of $G_{n,m}^K$ to account for the optimization value attained,  as in Theorem~\ref{th_general}.
 
For \acp{RKHS} given by strictly positive definite kernels, Theorem~\ref{th_kernel} shows that the coverage gap is $\set{O}(M\sqrt{1/n+1/m})$ where $M$ denotes the minimum norm of interpolators of $\mathfrak{X}_{\set{C}}$ at the calibration instances. More generally, the excess coverage gap can be bounded as \mbox{$G_{n,m}^K\leq c\varepsilon+2\kappa(1+\sqrt{\log(2/\delta)}) M_\varepsilon\sqrt{1/n+1/m}$} where $M_\varepsilon$ denotes the minimum norm of \mbox{$\varepsilon$-approximations} of $\mathfrak{X}_{\set{C}}$ at the calibration instances, that is
%\begin{align}
%\begin{array}{lll}M_\varepsilon=&\underset{f\in\set{H}}{\min}&\|f\|_\set{H} \\
%&\mbox{s.t }&|\mathfrak{X}_{\set{C}}(X_i,y)-f(X_i,y)|\leq\varepsilon,\  i\in[n],y\in\set{Y}.
%\end{array}
%\end{align}
\vspace{0.1cm}
{\setlength{\belowdisplayskip}{2pt}%
 \setlength{\belowdisplayshortskip}{0pt}
\begin{align}
M_\varepsilon=\underset{f\in\set{H}}{\min}\{\|f\|_\set{H}: |\mathfrak{X}_{\set{C}}(X_i,y)-f(X_i,y)|\leq\varepsilon,\  i\in[n],y\in\set{Y}\}.
\end{align}}

The magnitude of the minimum norm $M$ (or $M_\varepsilon$) depends on the smoothness of the function $\mathfrak{X}_{\set{C}}$ at the calibration instances and on how well the selected kernel is adapted to such values. The smoothness of $\mathfrak{X}_{\set{C}}$ is determined by that of the conformal score function $S(X,Y)$ defined by the classification rule used. Such a function may be highly non-smooth at the training instances used to learn the classification rule. However, for classifiers that generalize well, the conformal score is expected to be fairly smooth at the calibration instances, which are drawn independently of the training samples. Cases in which the conformal score is highly non-smooth across the instance space are possible but of reduced practical relevance, as such conformal scores would be largely uninformative.

%function $\mathfrak{X}_{\set{C}}$ may be highly non-smooth at the training instances, as it strongly depends on the scoring function learned from training data. On the other hand, for prediction rules that generalize the function $\mathfrak{X}_{\set{C}}$ is expected to be fairly smooth at the calibration instances, since such samples are randomly drawn independently from the training samples. Cases in which the function $\mathfrak{X}_{\set{C}}$ is highly non-smooth 

The bounds in Theorem~\ref{th_kernel} not only provide theoretical guarantees but can also be used in practice for kernel selection. In particular, a kernel can be chosen from a candidate set (e.g., multiple choices of bandwidth for a Gaussian kernel) as the one achieving the smallest minimum norm interpolation (or $\varepsilon$-approximation).  The minimum norm of interpolating functions can be efficiently computed using standard properties of \acp{RKHS} (see e.g., \cite{BelMaMan:18}). Specifically, for kernel $K$ the minimum norm of interpolators for $\mathfrak{X}_\set{C}$ at the calibration instances is given by $\V{u}^\top\B{\gamma}$ for $\B{\gamma}$ that solves the linear system $\V{K}\B{\gamma}=\V{u}$, where the kernel matrix $\V{K}$ and vector $\V{u}$ are given by $K_{i,j}=K(Z_i,Z_j)$ and $u_i=\mathfrak{X}_\set{C}(Z_i)$, for $i,j\in[nc]$. Notice also that the preprocessing step that selects the most suitable kernel over $s$ candidates only change the term $\log(2/\delta)$ in the bounds of Theorem~\ref{th_kernel} to the term $\log(2s/\delta)$, as a direct consequence of the union bound.

Algorithm~\ref{alg} details the implementation of the proposed kernel-based methods to obtain label weights for the calibration instances. The kernel selection process requires to solve $s$ linear systems of equations of size $nc$ so that it has time complexity $\set{O}(sn^3c^3)$. Then, label weights are obtained by solving a quadratic program with $nc$ variables and $n+r$ linear constraints. Highly accurate solutions of such problems can be obtained using interior point methods with complexity per iteration $\set{O}(n^3c^3)$ due to solving a linear system (see Chapter 16 in \cite{NocWri:06}). These methods allow to efficiently solve the quadratic program for a number classes $c$ and unlabeled calibration instances $n$ in the order of tens and thousands, respectively. For cases with a higher number of classes or calibration samples, first-order methods can result in efficient implementations since their complexity per iteration is $\set{O}(n^2c^2)$ due to a matrix-vector multiplication (see Chapter 16 in \cite{NocWri:06}). The memory complexity of Algorithm~\ref{alg} is $\set{O}(n^2c^2)$ corresponding to the storage of a kernel matrix. Furthermore, the computational complexity of Algorithm~\ref{alg} can be significantly reduced taking into account the common sparsity of the matrices involved. The matrix $\V{A}$ corresponding to the $n$ linear equality constraints is always highly sparse (ratio of nonzero values $1/n$), and the kernel matrix $\V{K}$ is also highly sparse using separable kernels (ratio of nonzero values $1/c^2$).  Appendix~\ref{additional} shows that Algorithm~\ref{alg} using interior point methods achieves running times ranging from tens to hundreds of seconds for cases with up to 10 classes and up to 3,000 calibration samples.

\vspace{-0.cm}
\begin{algorithm}[H]\caption{\label{alg} Kernel-based method to obtain label weights for calibration instances}
\small
\begin{algorithmic}[1]
\vspace{-0.05cm}
\Statex\textbf{Input:} Unsupervised calibration samples $\{X_i\}_{i\in[n]}$, training samples $\{(\widetilde{X}_i,\widetilde{Y}_i)\}_{i\in[m]}$ \newline Kernels $K^{(1)},K^{(2)},\ldots,K^{(s)}$, matrix and vector for inequality constraints $\V{B}$ and $\V{b}$
\Statex \textbf{Output:} Label weights $w_i(y)$, $i\in[n],y\in\set{Y}$
%\Statex\rule{\linewidth}{0.4pt}
\setstretch{1.07}
%\begin{algorithmic}[1] 
\vspace{0.1cm}
%\STATE $\B{\alpha}\gets\V{0},\  \B{\beta}\gets\V{0}$
\State $Z_{(i-1)c+y}\gets(X_i,y)$ for $i\in[n],y\in\set{Y}$, $\widetilde{Z}_i\gets(\widetilde{X}_i,\widetilde{Y}_i)$ for $i\in[m]$
\vspace{0.1cm}\Statex \emph{\rule[0.5ex]{0.405\textwidth}{0.2pt}\  Kernel selection \rule[0.5ex]{0.405\textwidth}{0.2pt}}
\vspace{-0.25cm}
\State Compute a rough approximate $\widehat{q}_0$ of the quantile using Algorithm~\ref{split-uns} with the naive approach given by label weights $\big\{\widehat{w}_i(y)=\mathbb{I}\{y=\up{h}(X_i)\}\big\}$
\State Compute vector $\V{u}\in\mathbb{R}^{nc}$ given by $u_i=\mathfrak{X}_0(Z_i)=\mathbb{I}\{\,S(Z_i)\leq\widehat{q}_0\,\}$ for $i\in[nc]$
\For{$t\in[s]$}
\State Compute kernel matrix $\V{K}^{(t)}\in\mathbb{R}^{nc\times nc}$ given by $K^{(t)}_{i,j}=K^{(t)}(Z_i,Z_j)$ for $i,j\in[nc]$
\State  $\B{\gamma}^{(t)}\gets$ solution of $\V{K}^{(t)}\B{\gamma}^{(t)}=\V{u}$
\EndFor
\State $t^*\gets\underset{t\in[s]}{\arg\min}\{\V{u}^{\top}\B{\gamma}^{(t)}\}$
\vspace{0.1cm}\Statex \emph{\rule[0.5ex]{0.3323\textwidth}{0.2pt}\  Label weights computation \rule[0.5ex]{0.3323\textwidth}{0.2pt}}\vspace{0.1cm}
\State $\V{K}\gets\V{K}^{(t^*)}$, $\V{A}\gets\V{I}_n\otimes \V{1}_c^\top$
\State Compute vector $\V{v}\in\mathbb{R}^{nc}$ given by $v_i=\sum_{j\in[m]}K^{(t^*)}(Z_i,\widetilde{Z}_j)$ for $i\in[nc]$
\State$\V{w}\gets$ solution of \vspace{-0.45cm}\begin{align*}\underset{\V{w}}\min &\  \frac{1}{n}\V{w}^\top\V{K}\V{w}-\frac{2}{m}\V{v}^\top\V{w}\\
\mbox{s.t.\hspace{0.05cm}}&\  \  \V{w}\succeq\V{0}, \V{A}\V{w}=\V{1}_n, \V{B}\V{w}\preceq\V{b}\nonumber
\end{align*}
\vspace{-0.2cm}
\end{algorithmic}
\end{algorithm}\vspace{-0.5cm}
\section{Numerical results}\label{sec-numerical}
\vspace{-0.cm}
The experiments assess the performance obtained by the proposed approach for split conformal prediction with unsupervised classification using 9 common benchmark datasets. The coverage probabilities and prediction set sizes are compared with those provided by the conventional approach that uses supervised calibration samples and by the naive approach that uses label predictions for calibration samples. Due to the amount of theoretical results presented, this section is necessarily concise. The code implementing the methods presented and reproducing the experiments can be found at \url{https://github.com/MachineLearningBCAM/Unsupervised-conformal-prediction-NeurIPS2025}. The supplementary materials provide implementation details and additional results in Appendix~\ref{additional}, including running time assessments, as well as results for different target coverages and types of conformal scores.

For each dataset, $400$ random realizations are generated by randomly partitioning the samples in training, calibration, and test sets. The training samples are used to learn classification rules: random forests for tabular datasets and neural networks for image datasets (fine-tuned Resnet50 for `CIFAR10' and `ImageNet10' datasets). The results in the main paper use the adaptive conformal score proposed in \cite{AngBat:23,AngBatJor:21}, and the Appendix~\ref{additional} reports results with the conformal score directly given by probability estimates \cite{VovGamSha:05}. Set-prediction rules are obtained from the scoring functions using $n$ calibration samples for multiple values of $n$, and coverage probabilities and prediction sets sizes are then computed using the test samples.  The presented methods are implemented using Algorithm~\ref{split-uns} with label weights obtained by Algorithm~\ref{alg} using $m=n$ training samples randomly drawn from the samples used to learn the classification rule, and a Gaussian kernel selected from a set given by $10$ candidate bandwidth (scale) parameters. Full implementation details are provided in Appendix~\ref{additional}.

%In addition, Algorithm~\ref{alg} uses  Gaussian kernels given by $10$ bandwidth (scale) parameters $\sigma\in\{10^{-1+t/3},t=0,1,\ldots,9\}$. 

Table~\ref{table} shows the coverage probability and prediction set size obtained in the 9 datasets using \mbox{$n=1,000$} calibration samples for a target coverage of $1-\alpha=0.9$. The results in the table indicate that the performance of the methods presented is only slightly inferior to that of conventional methods with supervised calibration. In accordance with the discussion in Section~\ref{sec_guarantees} regarding the weakened guarantees in scenarios with many classes, `Letter' --which includes 26 classes-- is the only dataset showing significantly poorer performance. The table also shows that the naive approach can only provide acceptable performance in cases where the learned classification rule is highly accurate (e.g., the classification error in `CIFAR10' is $4\%$). Figure~\ref{fig-exp} shows the average coverage gap ($\mathbb{E}\{\big|1-\alpha-\mathbb{P}\{\up{Y}\in\set{C}(\up{X})|\set{D}\}\big|\}$ with $\alpha=0.1$) obtained with different number of calibration samples. The results in the figure exhibit that coverage gaps smaller than $0.015$ can be obtained with few thousand unsupervised calibration samples. In accordance with the theoretical results in Section~\ref{sec_guarantees}, the coverage gap decreases with the number of unsupervised calibration samples comparably to cases with supervised calibration.
\begin{table}
\small
\caption{\small Coverage probability and prediction set size for conventional approach with supervised calibration (`Supervised'), proposed approach with unsupervised calibration (`Unsupervised'), and naive unsupervised approach (`Unsup. Naive'). Results are shown as: mean (interquartile interval), and correspond to $1-\alpha=0.9$.\label{table}}
\vspace{0.1cm}
\begin{tabular}{l|ccc|cc}
\toprule
\multirow{2}{*}{Dataset}&\multicolumn{3}{c|}{Coverage probability}&\multicolumn{2}{c}{Prediction set size}\\
 & Supervised & Unsupervised & Unsup. Naive & Supervised & Unsupervised \\
\midrule
Drybean & 0.90  (0.89,0.91) & 0.91  (0.90,0.91) & 0.84  (0.83,0.85) & 1.23  (1.20,1.25) & 1.25  (1.22,1.27) \\
Forestcov & 0.90  (0.89,0.91) & 0.89  (0.89,0.90) & 0.62  (0.61,0.63) & 1.90  (1.87,1.94) & 1.87  (1.85,1.89) \\
Satellite & 0.90  (0.89,0.91) & 0.90  (0.89,0.91) & 0.84  (0.83,0.85) & 1.45  (1.42,1.48) & 1.44  (1.41,1.48) \\
USPS & 0.90  (0.89,0.91) & 0.90  (0.89,0.91) & 0.88  (0.87,0.89) & 1.11  (1.09,1.13) & 1.11  (1.09,1.14) \\
MNIST & 0.90  (0.89,0.91) & 0.90  (0.90,0.91) & 0.86  (0.85,0.87) & 1.19  (1.16,1.21) & 1.20  (1.18,1.23) \\
Fashion & 0.90  (0.89,0.91) & 0.91  (0.90,0.91) & 0.80  (0.79,0.81) & 1.43  (1.39,1.47) & 1.45  (1.42,1.48) \\
CIFAR10 & 0.90  (0.89,0.91) & 0.91  (0.90,0.92) & 0.88  (0.87,0.89) & 1.00  (0.98,1.01) & 1.01  (0.99,1.02) \\
Imagnet10 & 0.90  (0.89,0.91) & 0.90  (0.89,0.91) & 0.86  (0.85,0.87) & 1.65  (1.66,1.70) & 1.62  (1.57,1.68) \\
Letter & 0.90  (0.89,0.91) & 0.85  (0.84,0.87) & 0.69  (0.68,0.71) & 5.58  (5.24,5.90) & 4.14  (3.80,4.48) \\
\bottomrule
\end{tabular}
\end{table}

\begin{figure}%[H]

\vspace{-0.cm}
\centering
\psfrag{Supervised Calibration}[][][0.66]{\hspace{0.0cm}Supervised Calibration}
\psfrag{Unsupervised Calibration}[][][0.66]{\hspace{0.0cm}Unsupervised Calibration}
\psfrag{Unsup. Cal. Naive}[][][0.66]{\hspace{0.0cm}Unsup. Cal. Naive}
\psfrag{y}[b][][0.7]{Average coverage gap}
\psfrag{0.01}[][][0.53]{0.01\,\,}
\psfrag{0.02}[][][0.53]{0.02\,\,}
\psfrag{0.04}[][][0.53]{0.04\,\,}
\psfrag{0.07}[][][0.53]{0.07\,\,\,\,}
\psfrag{0.1}[][][0.53]{0.1\,\,\,}
\psfrag{10}[][][0.53]{10}
\psfrag{100}[][][0.53]{100}
\psfrag{500}[][][0.53]{500}
\psfrag{1000}[][][0.53]{1000}
\psfrag{3000}[][][0.53]{3000}
\psfrag{0.96}[][][0.53]{0.96\,}
\psfrag{x}[t][][0.7]{Number of calibration samples $n$}
\subfigure[`CIFAR10' Dataset]{\includegraphics[width=.444\textwidth]{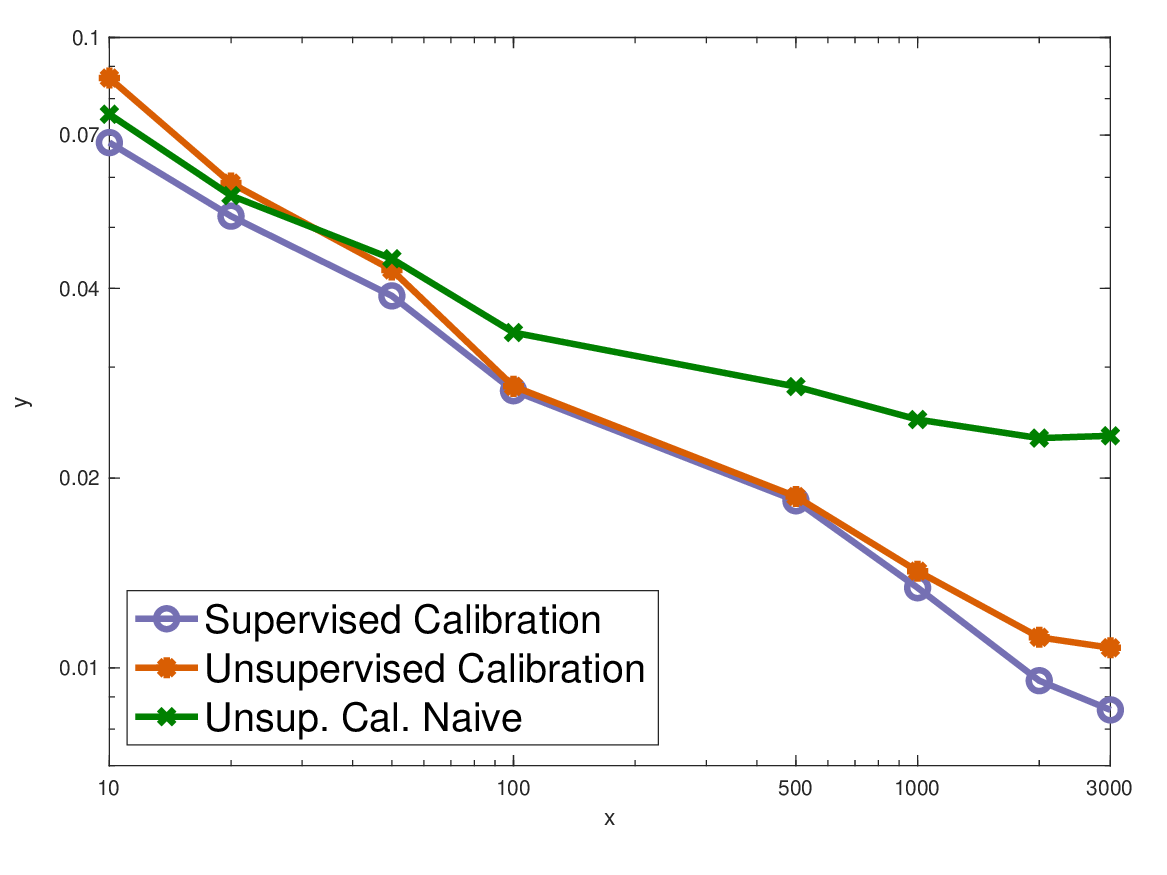}}\hspace{0.5cm}
  \subfigure[`Drybean' Dataset]{\includegraphics[width=.45\textwidth]{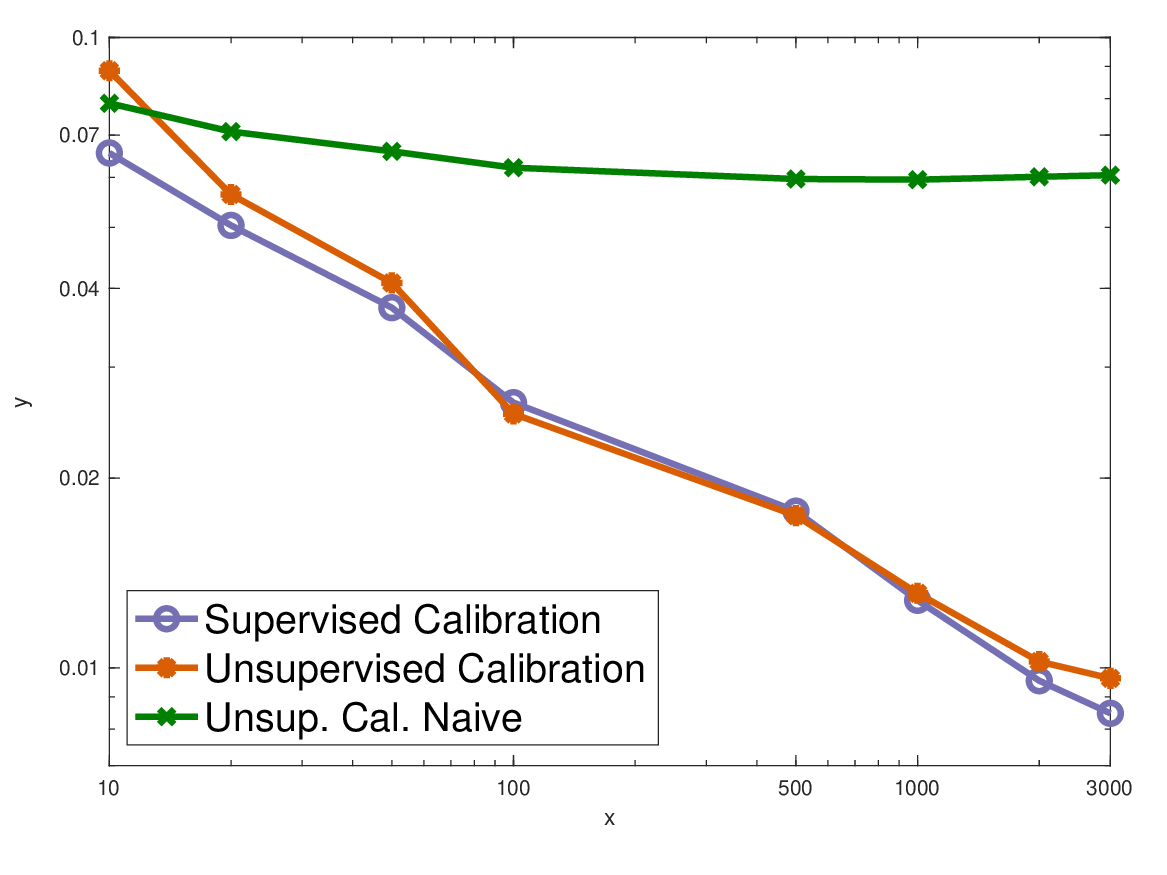}}\vspace{-0.cm}
\caption{\small Decrease of average coverage gap with the number of calibration samples.}\label{fig-exp}\vspace{-0.3cm}
\end{figure}

%$\mathbb{E}\{|1-\alpha-\mathbb{P}\{\up{Y}\in\set{C}(\up{X})|\set{D})|\}$ for $\alpha=0.1$)
\vspace{-0.2cm}
\section{Conclusion}
\vspace{-0.2cm}
The paper presents methods for split conformal classification with unsupervised calibration. In the proposed approach, label weights for calibration samples are obtained so that training and calibration samples become statistically indistinguishable by a non-parametric two-sample test. The theoretical and experimental results show that the methods presented can achieve performance comparable to that  with supervised calibration, at the expenses of a moderate degradation in performance guarantees and computational efficiency. The methodology presented can lead to methods that remove the need to acquire labeled examples at the calibration stage, thus making conformal prediction more cost-efficient in general and viable even when new labels cannot be obtained after training.

%\paragraph{Limitations:} The presented methodology is effective with broad generality but cannot address scenarios with a large number of classes, as discussed both in the theoretical and experimental results. As shown in Sections \ref{sec-3} and \ref{sec_kernel}, the excess coverage gap provided by the methods presented decreases with the number of training and calibration samples as $\set{O}(1/\sqrt{n}+1/\sqrt{m})$ in general scenarios. However, the constants in such a rate of decrease 
%can be large in cases where the conformal score used is highly non-smooth at calibration samples. Such cases are uncommon because the conformal score is learned using the training samples, not the calibration samples, but can result in a slow decrease of the excess coverage gap if the conformal score is highly non-smooth everywhere (and hence quite impractical anyway).  The other main limitation of the methods presented is that their computational complexity is much larger than that in the conventional split conformal methods, which only require to compute one quantile. However, such a drawback is of limited relevance in practice since the complexity of the methods presented is still significantly lower than that of usual learning processes.
\vspace{-0.2cm}
\paragraph{Limitations:} The proposed methodology is broadly effective but does not scale well with the number of classes, as discussed in the theoretical and experimental results. As shown in \mbox{Sections \ref{sec-3} and \ref{sec_kernel}}, the excess coverage gap in the methods presented decreases at a rate $\set{O}(1/\sqrt{n}+1/\sqrt{m})$ in general scenarios, but the constants can be large if the conformal score used is highly non-smooth at the calibration samples. Such cases are rare since the score function is learned without calibration samples, and globally non-smooth scores are largely uninformative in the first place. The other main limitation of the methods is a significantly higher computational cost compared to conventional split conformal methods, which only require computing one quantile. This drawback is of limited relevance in practice since the computational cost remains small relative to typical learning processes.

\newpage
\section*{Acknowledgements}
Funding in direct support of this work has been provided by project PID2022-137063NB-I00 funded by MCIN/AEI/10.13039/501100011033 and the European Union ``NextGenerationEU''/PRTR, BCAM Severo Ochoa accreditation CEX2021-001142-S/MICIN/AEI/10.13039/501100011033 funded by the Ministry of Science and Innovation
(Spain), and program BERC-2022-2025 funded by the Basque Government.

\bibliography{IEEEabrv,StringDefinitions,BiblioCV,WGroup,bib-santi}

\begin{thebibliography}{10}

\bibitem{VovGamSha:05}
Vladimir Vovk, Alexander Gammerman, and Glenn Shafer.
\newblock {\em Algorithmic learning in a random world}.
\newblock Springer, New York, NY, 2005.

\bibitem{AngBat:23}
Anastasios~N. Angelopoulos and Stephen Bates.
\newblock Conformal prediction: A gentle introduction.
\newblock {\em Foundations and Trends in Machine Learning}, 16(4):494--591,
  2023.

\bibitem{LeiSelRinTibWas:18}
Jing Lei, Max {G'Sell}, Alessandro Rinaldo, Ryan~J. Tibshirani, and Larry
  Wasserman.
\newblock Distribution-free predictive inference for regression.
\newblock {\em Journal of the American Statistical Association},
  113(523):1094--1111, 2018.

\bibitem{CauGupDuc:21}
Maxime Cauchois, Suyash Gupta, and John~C. Duchi.
\newblock Knowing what you know: valid and validated confidence sets in
  multiclass and multilabel prediction.
\newblock {\em Journal of Machine Learning Research}, 22(81):1--42, 2021.

\bibitem{SesWanTon:24}
Matteo Sesia, Y~X~Rachel Wang, and Xin Tong.
\newblock Adaptive conformal classification with noisy labels.
\newblock {\em Journal of the Royal Statistical Society Series B: Statistical
  Methodology}, Dec 2024.

\bibitem{CauGupAliDuc:24}
Maxime Cauchois, Suyash Gupta, Alnur Ali, and John~C. Duchi.
\newblock Predictive inference with weak supervision.
\newblock {\em Journal of Machine Learning Research}, 25(118):1--45, 2024.

\bibitem{EinFelBat:24}
Bat-Sheva Einbinder, Shai Feldman, Stephen Bates, Anastasios~N. Angelopoulos,
  Asaf Gendler, and Yaniv Romano.
\newblock Label noise robustness of conformal prediction.
\newblock {\em Journal of Machine Learning Research}, 25(328):1--66, 2024.

\bibitem{FelRom:24}
Shai Feldman and Yaniv Romano.
\newblock Robust conformal prediction using privileged information.
\newblock In {\em Advances in Neural Information Processing Systems},
  volume~37, pages 117813--117852, 2024.

\bibitem{AngBatJor:21}
Anastasios~Nikolas Angelopoulos, Stephen Bates, Michael Jordan, and Jitendra
  Malik.
\newblock Uncertainty sets for image classifiers using conformal prediction.
\newblock In {\em International Conference on Learning Representations}, 2021.

\bibitem{Vov:12}
Vladimir Vovk.
\newblock Conditional validity of inductive conformal predictors.
\newblock In {\em Proceedings of the Asian Conference on Machine Learning},
  volume~25, pages 475--490, 2012.

\bibitem{GreBorRas:12}
Arthur Gretton, Karsten~M. Borgwardt, Malte~J. Rasch, Bernhard Sch{{\"o}}lkopf,
  and Alexander Smola.
\newblock A kernel two-sample test.
\newblock {\em Journal of Machine Learning Research}, 13(25):723--773, 2012.

\bibitem{SriFukGre:12}
Bharath~K. Sriperumbudur, Kenji Fukumizu, Arthur Gretton, Bernhard
  Sch{\"o}lkopf, and Gert R.~G. Lanckriet.
\newblock {On the empirical estimation of integral probability metrics}.
\newblock {\em Electronic Journal of Statistics}, 6:1550 -- 1599, 2012.

\bibitem{BoyVan:04}
Stephen Boyd and Lieven Vandenberghe.
\newblock {\em Convex Optimization}.
\newblock Cambridge University Press, New York, NY, USA, 2004.

\bibitem{Vil:21}
C{\'e}dric Villani.
\newblock {\em Topics in optimal transportation}, volume~58.
\newblock American Mathematical Soc., Providence, Rhode Island, 2021.

\bibitem{Mul:97}
Alfred M\"{u}ller.
\newblock Integral probability metrics and their generating classes of
  functions.
\newblock {\em Advances in Applied Probability}, 29(2):429--443, 1997.

\bibitem{MehRos:18}
Mehryar Mohri, Afshin Rostamizadeh, and Ameet Talwalkar.
\newblock {\em Foundations of machine learning}.
\newblock MIT press, Cambridge, MA, second edition, 2018.

\bibitem{WanGaoXie:21}
Jie Wang, Rui Gao, and Yao Xie.
\newblock Two-sample test using projected {W}asserstein distance.
\newblock In {\em 2021 {IEEE} International Symposium on Information Theory
  ({ISIT})}, pages 3320--3325, 2021.

\bibitem{LuxBou:04}
Ulrike~von Luxburg and Olivier Bousquet.
\newblock Distance-based classification with {L}ipschitz functions.
\newblock {\em Journal of Machine Learning Research}, 5(Jun):669--695, 2004.

\bibitem{BarCan:23}
Rina~Foygel Barber, Emmanuel~J. Cand{\`e}s, Aaditya Ramdas, and Ryan~J.
  Tibshirani.
\newblock {Conformal prediction beyond exchangeability}.
\newblock {\em The Annals of Statistics}, 51(2):816 -- 845, 2023.

\bibitem{AlvRosLaw:12}
Mauricio~A. \'Alvarez, Lorenzo Rosasco, and Neil~D. Lawrence.
\newblock Kernels for vector-valued functions: A review.
\newblock {\em Foundations and Trends in Machine Learning}, 4(3):195--266,
  2012.

\bibitem{BelMaMan:18}
Mikhail Belkin, Siyuan Ma, and Soumik Mandal.
\newblock To understand deep learning we need to understand kernel learning.
\newblock In {\em Proceedings of the 35th International Conference on Machine
  Learning}, 2018.

\bibitem{NocWri:06}
Jorge Nocedal and Stephen~J. Wright.
\newblock {\em Numerical Optimization}.
\newblock Springer, New York, NY, 2006.

\bibitem{DuaGra:2019}
Dheeru Dua and Casey Graff.
\newblock {UCI} {M}achine {L}earning {R}epository, 2017.

\bibitem{tensorflowdatasets}
Tom Hope, Oscar Chang, Ehsan Amid, Yoni Halpern, Zachary Nado, Douglas Eck, and
  Rif~A. Saurous.
\newblock {TensorFlow Datasets}: A collection of ready-to-use datasets.
\newblock \url{https://www.tensorflow.org/datasets}, 2022.

\end{thebibliography}
\bibliographystyle{unsrt}

\newpage
\appendix
\section*{Appendices}

\section{Auxiliary lemmas}\label{auxiliary}
\begin{lemma}\label{lemma1}
For $i\in[n]$, let $(X_i,Y_i)$ be an instance-label pair in $\set{X}\times\set{Y}$, and $w_i(1),w_i(2),\ldots,w_i(c)$ be positive label weights that add to one. If $\widehat{q}$ is the quantile
\begin{align}\label{def-q}\widehat{q}=\text{Quantile}\big(\{(S(X_i,y),w_i(y)/n))\}_{(i,y)\in[n]\times\set{Y}};(1-\alpha)(1+1/n)\big)\end{align}
for some $\alpha\in[0,1]$, and $E$ is given by 
$$E=\frac{1}{n}\sum_{i\in[n]}\mathfrak{X}_\set{C}(X_i,Y_i)-\frac{1}{n}\sum_{i\in[n],y\in\set{Y}}w_i(y)\mathfrak{X}_{\set{C}}(X_i,y)$$
for the function $\mathfrak{X}_\set{C}$ defined in \eqref{chi}.
 Then, we have 
\begin{align}\label{ineq-ep}
\widehat{q}\geq \text{Quantile}\big(\{(S(X_i,Y_i),1/n)\}_{i\in[n]};(1-\alpha+nE/(n+1))(1+1/n)\big).
\end{align}

%for 
%$$\varepsilon=\frac{1}{n}\sum_{i=1}^n\mathbb{I}\{s(x_i,y_i)\leq \hat{q}\}-\frac{1}{n}\sum_{i\in[n],y\in\set{Y}}\up{p}_i(y)\mathbb{I}\{s(x_i,y)\leq\hat{q}\}.$$
%$$E=\left\{\begin{array}{cl} 1+(\alpha-1)\frac{n+1}{n} &\mbox{if } R\geq  1+(\alpha-1)\frac{n+1}{n}\\
%(\alpha-1)\frac{n+1}{n}&\mbox{if } R\leq  (\alpha-1)\frac{n+1}{n}\\
%R&\mbox{otherwise.}\end{array}\right.$$

%$$E=\min(\max(\frac{1}{n}\sum_{i\in[n]}\mathfrak{X}_\set{C}(X_i,Y_i)-\frac{1}{n}\sum_{i\in[n],y\in\set{Y}}\up{p}_i(y)\mathfrak{X}_{\set{C}}(X_i,y),(\alpha-1)\frac{n+1}{n}),1+(\alpha-1)\frac{n+1}{n}).$$
In addition, if the values $\{S(X_i,y)\}_{(i,y)\in[n]\times\set{Y}}$ are distinct, we have
\begin{align}\label{eq-ep}
\widehat{q}\leq\text{Quantile}\big(\{(S(X_i,Y_i),1/n)\}_{i\in[n]};(1-\alpha+1/(n+1)+nE/(n+1))(1+1/n)\big).
\end{align}
\end{lemma}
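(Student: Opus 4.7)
The plan is to reduce both inequalities to statements about whether $\widehat{q}$ lies in the feasible set defining the right-hand quantile, exploiting only the definition of the quantile in \eqref{quantile} as an infimum. For convenience I introduce the non-decreasing right-continuous step functions
\begin{align*}
f(q)=\frac{1}{n}\sum_{i\in[n],y\in\set{Y}}w_i(y)\,\mathbb{I}\{S(X_i,y)\leq q\},\qquad g(q)=\frac{1}{n}\sum_{i\in[n]}\mathbb{I}\{S(X_i,Y_i)\leq q\}.
\end{align*}
By \eqref{quantile} applied to \eqref{def-q}, $\widehat{q}$ satisfies $f(\widehat{q})\geq(1-\alpha)(1+1/n)$ and $f(q)<(1-\alpha)(1+1/n)$ for every $q<\widehat{q}$. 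Evaluating the definition of $\mathfrak{X}_\set{C}$ in \eqref{chi} at $\widehat{q}$ and plugging into the definition of $E$ yields the bridging identity $g(\widehat{q})=f(\widehat{q})+E$.

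For \eqref{ineq-ep}, I combine $g(\widehat{q})=f(\widehat{q})+E$ with $f(\widehat{q})\geq(1-\alpha)(1+1/n)$ to obtain $g(\widehat{q})\geq(1-\alpha)(1+1/n)+E$. The elementary simplification $(1-\alpha+nE/(n+1))(1+1/n)=(1-\alpha)(1+1/n)+E$ then shows that $\widehat{q}$ itself lies in the set whose infimum defines the right-hand quantile of \eqref{ineq-ep}, so that infimum is at most $\widehat{q}$.

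For \eqref{eq-ep}, I exploit the distinctness assumption to control the single jump of $f$ at $\widehat{q}$. Since $f$ must have a jump at $\widehat{q}$ (otherwise $f$ would equal $f(\widehat{q})\geq(1-\alpha)(1+1/n)$ on a left neighborhood of $\widehat{q}$, contradicting the infimum characterisation), and since all the values $\{S(X_i,y)\}$ are distinct, exactly one pair $(i^{*},y^{*})$ satisfies $S(X_{i^{*}},y^{*})=\widehat{q}$. Therefore the left limit of $f$ at $\widehat{q}$ equals $f(\widehat{q})-w_{i^{*}}(y^{*})/n$, and by taking $q<\widehat{q}$ in the immediate left neighborhood of $\widehat{q}$ this left limit is strictly less than $(1-\alpha)(1+1/n)$. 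Because the weights satisfy $0\leq w_{i^{*}}(y^{*})\leq 1$, this gives $f(\widehat{q})<(1-\alpha)(1+1/n)+1/n$, hence $g(\widehat{q})=f(\widehat{q})+E<(1-\alpha)(1+1/n)+1/n+E$. Using the analogous simplification $(1-\alpha+1/(n+1)+nE/(n+1))(1+1/n)=(1-\alpha)(1+1/n)+1/n+E$, I conclude that $\widehat{q}$ is not in the feasible set defining the right-hand quantile of \eqref{eq-ep}; by right-continuity of $g$ its infimum is therefore strictly larger than $\widehat{q}$, which yields \eqref{eq-ep}.

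The only delicate step is the upper bound: the strict inequality $g(\widehat{q})<\beta'$ must be lifted to a bound on the infimum, which relies on the step-function/right-continuity structure of $g$, and the distinctness of the scores is essential precisely because it ensures the jump of $f$ at $\widehat{q}$ is the single weight $w_{i^{*}}(y^{*})/n\leq 1/n$ rather than a sum of overlapping weights that could exceed $1/n$.
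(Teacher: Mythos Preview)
Your proof is correct and follows essentially the same approach as the paper: both arguments derive $f(\widehat{q})\geq(1-\alpha)(1+1/n)$ from the quantile definition, use the identity $g(\widehat{q})=f(\widehat{q})+E$ to transfer this to $g$, and then read off the quantile comparisons; for the upper bound both use distinctness to isolate the single jump $w_{i^{*}}(y^{*})/n\leq 1/n$ of $f$ at $\widehat{q}$. Your treatment is in fact slightly more careful than the paper's in one place: by observing that the left limit $f(\widehat{q}^{-})$ is \emph{strictly} below $(1-\alpha)(1+1/n)$ (since $f$ is a step function and equals its left limit on a left neighborhood of $\widehat{q}$), you obtain the strict inequality $g(\widehat{q})<\beta'$, which cleanly yields $\widehat{q}<\text{Quantile}(\cdot;\beta')$ without any residual ambiguity; the paper states only the non-strict bound and then appeals somewhat tersely to the earlier step.
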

\begin{proof}

From the definition of $\widehat{q}$ in \eqref{def-q}, we have 
\begin{align}\frac{1}{n}\sum_{i\in[n],y\in\set{Y}}w_i(y)\mathfrak{X}_{\set{C}}(X_i,y)=\frac{1}{n}\sum_{i\in[n],y\in\set{Y}}w_i(y)\mathbb{I}\{S(X_i,y)\leq \widehat{q}\,\}\geq\frac{(n+1)(1-\alpha)}{n}\end{align}
then, we directly get 
\begin{align}\frac{1}{n}\sum_{i=1}^n\mathfrak{X}_\set{C}(X_i,Y_i)&=\frac{1}{n}\sum_{i=1}^n\mathbb{I}\{S(X_i,Y_i)\leq\widehat{q}\,\}\nonumber\\
&=\frac{1}{n}\sum_{i\in[n],y\in\set{Y}}w_i(y)\mathbb{I}\{S(X_i,y)\leq \widehat{q}\,\}+E\geq \frac{(n+1)(1-\alpha+\frac{nE}{n+1})}{n}\label{step-lem1}\end{align}
that gives \eqref{ineq-ep}. 

If the values $\{S(X_i,y)\}_{(i,y)\in[n]\times\set{Y}}$ are distinct, by the definition of $\widehat{q}$ in \eqref{def-q} we have 
\begin{align}\frac{1}{n}\sum_{i\in[n],y\in\set{Y}}w_i(y)\mathfrak{X}_{\set{C}}(X_i,y)=\frac{1}{n}\sum_{i\in[n],y\in\set{Y}}w_i(y)\mathbb{I}\{S(X_i,y)\geq \widehat{q}\,\}\leq\frac{(n+1)(1-\alpha)}{n}+\frac{1}{n}\end{align}
because $0\leq w_i(y)\leq 1$ for any $y\in\set{Y}, i\in[n]$. Then, analogously to the previous case we get 
\begin{align}\frac{1}{n}\sum_{i=1}^n\mathfrak{X}_\set{C}(X_i,Y_i)&=\frac{1}{n}\sum_{i=1}^n\mathbb{I}\{S(X_i,Y_i)\leq\widehat{q}\,\}\\
&\leq \frac{(n+1)(1-\alpha+\frac{nE}{n+1})}{n}+\frac{1}{n}=\frac{(n+1)(1-\alpha+\frac{1}{n+1}+\frac{nE}{n+1})}{n}\end{align}
that together with \eqref{step-lem1} gives \eqref{eq-ep}.
\end{proof}

%For $i\in[n]$, let $\up{p}_i$ be a probability distribution for the labels $y\in\set{Y}$,

\begin{lemma}\label{lemma2}
Let \mbox{$\V{w}=[w_1(1),w_1(2),\ldots,w_1(c),w_2(1),\ldots,w_n(c)]^\top\in\mathbb{R}^{nc}$} be label weights for the calibration samples that satisfy $\Phi_{n,m}^\set{F}(\V{w})\leq V_{\text{opt}}$, and $\set{F}$ be a family of real-valued functions with bounded differences, i.e., $|f(z)-f(z')|\leq B$, for any $f\in\set{F}$ and   $z,z'\in\set{Z}=\set{X}\times\set{Y}$.

If $\up{E}$ is the \ac{RV}
\begin{align*}\up{E}=\frac{1}{n}\sum_{i\in[n]}\mathfrak{X}_\set{C}(\up{X}_i,\up{Y}_i)-\frac{1}{n}\sum_{i\in[n],y\in\set{Y}}w_i(y)\mathfrak{X}_{\set{C}}(\up{X}_i,y).\end{align*}
%For any family of functions $\set{F}$, we have
%\begin{align}\label{exp-epsilon}
%\mathbb{E}\{|E|\}\leq \mathbb{E}\{D(\mathfrak{X}_{\set{C}},\set{F})\}+\mathbb{E}\{\Phi_{\set{F}}(\V{p})\}+2(\set{R}_n(\set{F})+\set{R}_t(\set{F})).
%\end{align}
%In addition, if the feasible set in \eqref{opt-gen} includes the label probabilities $\V{p}^*$ corresponding to the actual calibration
%labels, and $\Phi_{\set{F}}^*$ is the optimum value of \eqref{opt-gen}, we have
%\begin{align}
 %\mathbb{E}\{\Phi_{\set{F}}(\V{p})\}\leq  \mathbb{E}\{\Phi_{\set{F}}(\V{p})-\Phi_{\set{F}}^*\}+2(\set{R}_n(\set{F})+R_t(\set{F})).
 %\label{bound-EPhi-lemma}
%\end{align}

%where $D(g,\set{F})$ denotes the distance to $\set{F}$ of function $g$ as given by \eqref{g} and \eqref{M}, $\Phi(\cdot)$ denotes the objective function in \eqref{opt-gen}, and $\set{R}_{k}(\set{F})$ denotes the Rademacher complexity of family $\set{F}$ for sample size $k$. 
%For a family $\set{F}$ composed by functions with bounded differences, i.e., $|f(x,y)-f(x',y')|\leq B$, for any $f\in\set{F}$ and   $(x,y),(x',y')\in\set{X}\times\set{Y}$. 
Then, with probability at least $1-\delta$ we have
\begin{align}
|\up{E}|\leq V_{\text{opt}} + D(\mathfrak{X}_{\set{C}},\set{F})+2(\set{R}_n(\set{F})+\set{R}_m(\set{F}))+B\sqrt{\Big(\frac{1}{n}+\frac{1}{m}\Big)\frac{\log(1/\delta)}{2}}\label{delta-epsilon}.
\end{align}

Let $\set{H}$ be the \ac{RKHS} given by kernel $K$ that satisfies $0\leq K(z,z')\leq\kappa^2$ for any $z,z'\in\set{Z}$. If $\V{w}$ is a solution of \eqref{opt-gen} using the family $\set{F}_R=\{f\in\set{H}:\|f\|_{\set{H}}\leq R\}$, and the weights $\V{w}^*$ corresponding to the actual calibration labels are included in the feasible set of \eqref{opt-gen}, we have
%For kernel-based families $\set{F}_R=\{f\in\set{H}:\|f\|_{\set{H}}\leq R\}$ 
%with $\set{H}$ the \ac{RKHS} given by kernel a $K$ that satisfies $0\leq K(z,z')\leq\kappa^2$ for any $z,z'\in\set{Z}$, if $\V{w}$ is a solution of \eqref{opt-ker} and $\V{w}^*$ is included in the feasible set of \eqref{opt-ker}, we have
\begin{align}
|\up{E}|\leq D(\mathfrak{X}_{\set{C}},\set{F}_R)+2\kappa\Big(1+\sqrt{\log(1/\delta)}\Big)R\sqrt{\frac{1}{n}+\frac{1}{m}}\  ,\mbox{ for all }R>0\label{delta-epsilon-kernel}\end{align}
with probability at least $1-\delta$.
\end{lemma}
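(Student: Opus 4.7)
\textbf{Proof plan for Lemma~\ref{lemma2}.} The plan is to express $\up{E}$ as a four–term telescoping sum anchored at an arbitrary function $f\in\set{F}$, bound each piece by either the IPM constraint, a uniform-concentration argument, or an approximation-error argument, and then take the infimum over $f$. Concretely, for any $f\in\set{F}$ I would write
\begin{align*}
\up{E}=\underbrace{\Big[\tfrac{1}{n}\!\sum_i (\mathfrak{X}_{\set{C}}-f)(\up{X}_i,\up{Y}_i)\Big]}_{T_A}+\underbrace{\Big[\tfrac{1}{n}\!\sum_i f(\up{X}_i,\up{Y}_i)-\tfrac{1}{m}\!\sum_j f(\widetilde{\up{X}}_j,\widetilde{\up{Y}}_j)\Big]}_{T_B}+\underbrace{\Big[\tfrac{1}{m}\!\sum_j f(\widetilde{\up{X}}_j,\widetilde{\up{Y}}_j)-\tfrac{1}{n}\!\sum_{i,y}w_i(y)f(\up{X}_i,y)\Big]}_{T_C}+\underbrace{\Big[\tfrac{1}{n}\!\sum_{i,y}w_i(y)(f-\mathfrak{X}_{\set{C}})(\up{X}_i,y)\Big]}_{T_D}.
\end{align*}
By construction, $|T_C|\leq \Phi^{\set{F}}_{n,m}(\V{w})\leq V_\text{opt}$. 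Using $\mathbb{I}\{y=\up{Y}_i\}, w_i(y)\in[0,1]$ and $\sum_y w_i(y)=\sum_y \mathbb{I}\{y=\up{Y}_i\}=1$, the terms $T_A$ and $T_D$ together are controlled by the $\ell_1$ discrepancy $\tfrac{1}{n}\sum_{i,y}|f(\up{X}_i,y)-\mathfrak{X}_{\set{C}}(\up{X}_i,y)|$, which, after taking the infimum over $f\in\set{F}$, yields $D(\mathfrak{X}_{\set{C}},\set{F})$.

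For $T_B$, I would invoke the standard two-sample Rademacher bound for i.i.d.\ samples drawn from the same distribution: applying bounded-differences (McDiarmid) to the function $(z_1,\ldots,z_n,\tilde z_1,\ldots,\tilde z_m)\mapsto\sup_{f\in\set{F}}|\tfrac{1}{n}\sum f(z_i)-\tfrac{1}{m}\sum f(\tilde z_j)|$ with per-coordinate differences $B/n$ and $B/m$, together with the symmetrization bound $2(\set{R}_n(\set{F})+\set{R}_m(\set{F}))$ on its expectation, gives the uniform bound $|T_B|\leq 2(\set{R}_n(\set{F})+\set{R}_m(\set{F}))+B\sqrt{(1/n+1/m)\log(1/\delta)/2}$ with probability at least $1-\delta$. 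Summing the four pieces and taking the infimum over $f$ establishes \eqref{delta-epsilon}.

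For the RKHS statement \eqref{delta-epsilon-kernel}, I would specialize to $\set{F}=\set{F}_R=\{f\in\set{H}:\|f\|_\set{H}\leq R\}$ and exploit that $\V{w}^*$ is feasible so that $V_\text{opt}=\Phi^{\set{F}_R}_{n,m}(\V{w})\leq\Phi^{\set{F}_R}_{n,m}(\V{w}^*)$. By the reproducing property, $\Phi^{\set{F}_R}_{n,m}(\V{w}^*)=R\,\|\widehat\mu_n-\widehat\mu_m\|_\set{H}$ with $\widehat\mu_n=\tfrac{1}{n}\sum_i K(\cdot,\up{Z}_i)$, $\widehat\mu_m=\tfrac{1}{m}\sum_j K(\cdot,\widetilde{\up{Z}}_j)$. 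A direct expansion using $K(z,z')\leq\kappa^2$ gives $\mathbb{E}\|\widehat\mu_n-\widehat\mu_m\|_\set{H}\leq\kappa\sqrt{1/n+1/m}$, and McDiarmid (bounded-difference $2\kappa/n$ and $2\kappa/m$) concentrates $\|\widehat\mu_n-\widehat\mu_m\|_\set{H}$ around its expectation at the rate $\kappa\sqrt{(1/n+1/m)\log(1/\delta)}$ up to constants. Combining this bound on $V_\text{opt}$ with the standard RKHS Rademacher estimate $\set{R}_t(\set{F}_R)\leq R\kappa/\sqrt{t}$ and the diameter bound $B\leq 2R\kappa$, then plugging into \eqref{delta-epsilon} and absorbing constants into the form $2\kappa(1+\sqrt{\log(1/\delta)})R\sqrt{1/n+1/m}$, yields \eqref{delta-epsilon-kernel}.

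The main technical hurdle will be bookkeeping: ensuring that the four-term decomposition gives exactly the stated coefficient on $D(\mathfrak{X}_{\set{C}},\set{F})$ (the naive bound on $T_A+T_D$ introduces a factor $2$ that must be folded in tightly), and properly combining the McDiarmid events arising from $T_B$ and from bounding $V_\text{opt}$ in the RKHS case via a single concentration step rather than a union bound, so that the final constants match the form claimed in the statement.
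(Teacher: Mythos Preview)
Your decomposition for the first bound \eqref{delta-epsilon} is essentially the paper's argument: the paper writes $\up{E}=\frac{1}{n}\sum_{i,y}(\mathbb{I}\{y=\up{Y}_i\}-w_i(y))\,\mathfrak{X}_{\set{C}}(\up{X}_i,y)$, inserts $\pm f$ to obtain your $(T_A+T_D)$ and $(T_B+T_C)$, bounds $|T_A+T_D|\leq\frac{1}{n}\sum_{i,y}|\mathfrak{X}_{\set{C}}-f|$ via $|\mathbb{I}\{y=\up{Y}_i\}-w_i(y)|\leq 1$ (so no factor $2$ ever appears), and controls $\sup_{f\in\set{F}}|T_B|$ by symmetrization plus McDiarmid exactly as you describe. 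This part is correct.

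For the RKHS bound \eqref{delta-epsilon-kernel} your plan has a real gap. Plugging $\set{R}_t(\set{F}_R)\leq R\kappa/\sqrt{t}$, $B\leq 2R\kappa$, and a separate McDiarmid bound on $V_{\text{opt}}$ into \eqref{delta-epsilon} produces a leading constant strictly larger than $2$; there is nothing to ``absorb,'' and you would also be invoking two concentration events rather than one. The paper does \emph{not} pass through \eqref{delta-epsilon}. It returns to the intermediate inequality
\[
|\up{E}|\;\leq\; \Phi^{\set{F}_R}_{n,m}(\V{w})+D(\mathfrak{X}_{\set{C}},\set{F}_R)+\sup_{f\in\set{F}_R}\Big|\tfrac{1}{n}\sum_i f(\up{Z}_i)-\tfrac{1}{m}\sum_j f(\widetilde{\up{Z}}_j)\Big|
\]
and uses the observation you did not record: by the reproducing property, that supremum \emph{equals} $R\,\|\widehat\mu_n-\widehat\mu_m\|_{\set{H}}$, i.e.\ the very same random variable you used to bound $\Phi^{\set{F}_R}_{n,m}(\V{w})$ via feasibility of $\V{w}^*$. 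Hence $|\up{E}|\leq D(\mathfrak{X}_{\set{C}},\set{F}_R)+2R\,\|\widehat\mu_n-\widehat\mu_m\|_{\set{H}}$, and a \emph{single} McDiarmid application to $\|\widehat\mu_n-\widehat\mu_m\|_{\set{H}}$ (bounded differences $\sqrt{2}\kappa/n$ and $\sqrt{2}\kappa/m$, since $K\geq 0$ gives $\|K(\cdot,z)-K(\cdot,z')\|_{\set{H}}^2\leq 2\kappa^2$; expectation bounded by $\kappa\sqrt{1/n+1/m}$ via Jensen) yields exactly $2\kappa(1+\sqrt{\log(1/\delta)})R\sqrt{1/n+1/m}$. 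This is also why the bound holds simultaneously for all $R>0$ under a single event.
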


%,  if $\V{p}$ is a solution of 
%\eqref{opt-ker}, we have \begin{align}\label{exp-epsilon-kernel}
%\mathbb{E}\{|E|\}\leq \mathbb{E}\{D(\mathfrak{X}_{\set{C}},\set{F})\}+2R\kappa_1\sqrt{\frac{1}{n}+\frac{1}{t}}
%\end{align}
%for $\kappa_1=\sqrt{\mathbb{E}_{Z,Z'}\{K(Z,Z)-K(Z,Z')\}}$. In addition, if $K$ satisfies $0\leq K(z,z')\leq\kappa_2^2$ for any $z,z'\in\set{Z}$

%with probability at least $1-\delta$ we have
%\begin{align}
%|E|\leq|\set{Y}|D(g,\set{F})+\varepsilon_{\text{opt}}+4(\set{R}_n(\set{F})+R_t(\set{F}))+2B\sqrt{\frac{1}{2}\Big(\frac{1}{n}+\frac{1}{t}\Big)\log\frac{2}{\delta}}.\label{delta-epsilon2}
%\end{align}

\vspace{-0.3cm}
\begin{proof}

For any $f\in\set{F}$ we have 
\begin{align}|\up{E}|&=\Big|\frac{1}{n}\sum_{i\in[n]}\mathfrak{X}_{\set{C}}(\up{X}_{i},\up{Y}_{i})-\frac{1}{n}\sum_{i\in[n],y\in\set{Y}}w_i(y)\mathfrak{X}_\set{C}(\up{X}_{i},y)\Big|\nonumber\\&=\Big|\frac{1}{n}\sum_{i\in[n],y\in\set{Y}}\big(\mathbb{I}\{y=\up{Y}_{i}\}-w_i(y)\big)\mathfrak{X}_{\set{C}}(\up{X}_i,y)\}\Big|\nonumber\\
&\leq\Big|\frac{1}{n}\sum_{i\in[n],y\in\set{Y}}\big(\mathbb{I}\{y=\up{Y}_i\}-w_i(y)\big)\big(\mathfrak{X}_{\set{C}}(\up{X}_i,y)\}-f(\up{X}_{i},y)\big)\Big|\label{term1}\\
&\hspace{0.4cm}+\Big|\frac{1}{n}\sum_{i\in[n],y\in\set{Y}}\big(\mathbb{I}\{y=\up{Y}_i\}-w_i(y)\big)f(\up{X}_i,y)\Big|.\label{term2}
\end{align}
We bound separately the two terms in \eqref{term1} and \eqref{term2}. 
For \eqref{term1} we directly have \begin{align}\label{step2}
\Big|\frac{1}{n}\sum_{i\in[n],y\in\set{Y}}&\big(\mathbb{I}\{y=\up{Y}_{i}\}-w_i(y))\big)\big(\mathfrak{X}_\set{C}(\up{X}_i,y)-f(\up{X}_{i},y)\big)\Big|\leq\frac{1}{n}\sum_{i\in[n],y\in\set{Y}}\big|\mathfrak{X}_\set{C}(\up{X}_i,y)-f(\up{X}_i,y)\big|
\end{align}
because $0\leq w_i(y)\leq 1$ for any $i\in[n]$ and $y\in\set{Y}$.
For \eqref{term2}, we have
\begin{align}
\Big|\frac{1}{n}&\sum_{i\in[n],y\in\set{Y}}\big(\mathbb{I}\{y=\up{Y}_{i}\}-w_i(y)\big)f(\up{X}_i,y)\Big|\nonumber\\
&=\Big|\frac{1}{n}\sum_{i\in[n]}f(\up{X}_i,\up{Y}_i)-\frac{1}{m}\sum_{i\in[m]}f(\widetilde{\up{X}}_i,\widetilde{\up{Y}}_i)+\frac{1}{m}\sum_{i\in[m]}f(\widetilde{\up{X}}_i,\widetilde{\up{Y}}_i)-\frac{1}{n}\sum_{i\in[n],y\in\set{Y}}w_i(y)f(\up{X}_i,y)\Big|\nonumber\\
&\leq\Big|\frac{1}{n}\sum_{i\in[n]}f(\up{X}_i,\up{Y}_i)-\frac{1}{m}\sum_{i\in[m]}f(\widetilde{\up{X}}_i,\widetilde{\up{Y}}_i)\Big|+\Big|\frac{1}{m}\sum_{i\in[m]}f(\widetilde{\up{X}}_i,\widetilde{\up{Y}}_i)-\frac{1}{n}\sum_{i\in[n],y\in\set{Y}}w_i(y)f(\up{X}_i,y)\Big|\nonumber\\
&\leq\sup_{f\in\set{F}}\Big|\frac{1}{n}\sum_{i\in[n]}f(\up{X}_i,\up{Y}_i)-\frac{1}{m}\sum_{i\in[m]}f(\widetilde{\up{X}}_i,\widetilde{\up{Y}}_i)\Big|+\Phi_{n,m}^{\set{F}}(\V{w})\label{lemma2step}
\end{align}

Therefore, we have
\begin{align}|\up{E}|\leq \Phi_{n,m}^{\set{F}}(\V{w})+D(\mathfrak{X}_{\set{C}},\set{F})+\sup_{f\in\set{F}}\Big|\frac{1}{n}\sum_{i\in[n]}f(\up{X}_i,\up{Y}_i)-\frac{1}{m}\sum_{i\in[m]}f(\widetilde{\up{X}}_i,\widetilde{\up{Y}}_i)\Big|.\label{lemma2step3}\end{align}
Then, the bound in \eqref{delta-epsilon} follows using that $\Phi_{n,m}^{\set{F}}(\V{w})\leq V_{\text{opt}}$, and upper bounding the `sup' in \eqref{lemma2step3}. Specifically, to bound the expectation of the `sup' in \eqref{lemma2step3} we use a usual argument based on symmetrization. If $(\bar{\up{X}}_1,\bar{\up{Y}}_1),(\bar{\up{X}}_2,\bar{\up{Y}}_2),\ldots,(\bar{\up{X}}_{t},\bar{\up{Y}}_{t})$ for \mbox{$t=\max(n,m)$} form a different set of i.i.d. samples, we have
%The bound for the `sup' in \eqref{lemma2step3} is obtained similarly as those for two-sample non-parametric tests based on \acp{IPM} \cite{SriFukGre:12,WanGaoXie:21,GreBorRas:12}.
% Specifically, to bound the expectation of the `sup' in \eqref{lemma2step3} we use a common argument based on symmetrization. If $(\bar{\up{X}}_1,\bar{\up{Y}}_1),(\bar{\up{X}}_2,\bar{\up{Y}}_2),\ldots,(\bar{\up{X}}_{t},\bar{\up{Y}}_{t})$ for \mbox{$t=\max(n,m)$} form a different set of i.i.d. samples, we have
\begin{align}
&\mathbb{E}\Big\{\sup_{f\in\set{F}}\Big|\frac{1}{n}\sum_{i\in[n]}f(\up{X}_i,\up{Y}_i)-\frac{1}{m}\sum_{i\in[m]}f(\widetilde{\up{X}}_i,\widetilde{\up{Y}}_i)\Big|\Big\}\nonumber\\
&=\mathbb{E}\Big\{
\sup_{f\in\set{F}}\Big|\frac{1}{n}\sum_{i\in[n]}f(\up{X}_i,\up{Y}_i)-\mathbb{E}\Big\{\frac{1}{n}\sum_{i\in[n]}f(\bar{\up{X}}_i,\bar{\up{Y}}_i)\Big\}\nonumber\\
&\hspace{1.7cm}+\mathbb{E}\Big\{\frac{1}{m}\sum_{i\in[m]}f(\bar{\up{X}}_i,\bar{\up{Y}}_i)\Big\}-\frac{1}{m}\sum_{i\in[m]}f(\widetilde{\up{X}}_i,\widetilde{\up{Y}}_i)\Big|\Big\}\nonumber\\
&\leq\mathbb{E}\Big\{\sup_{f\in\set{F}}\Big|\frac{1}{n}\sum_{i\in[n]}f(\up{X}_i,\up{Y}_i)-f(\bar{\up{X}}_i,\bar{\up{Y}}_i)\Big|\Big\}+\mathbb{E}\Big\{\sup_{f\in\set{F}}\Big|\frac{1}{m}\sum_{i\in[m]}f(\bar{\up{X}}_i,\bar{\up{Y}}_i)-f(\up{X}_i,\up{Y}_i)\Big|\Big\}\label{step0}\\
&=\mathbb{E}\Big\{\sup_{f\in\set{F}}\Big|\frac{1}{n}\sum_{i\in[n]}\sigma_i(f(\up{X}_i,\up{Y}_i)-f(\bar{\up{X}}_i,\bar{\up{Y}}_i))\Big|\Big\}\nonumber\\
&\hspace{0.3cm}+\mathbb{E}\Big\{\sup_{f\in\set{F}}\Big|\frac{1}{m}\sum_{i\in[m]}\sigma_i(f(\bar{\up{X}}_i,\bar{\up{Y}}_i)-f(\up{X}_i,\up{Y}_i))\Big|\Big\}\nonumber\\
&\leq 2(\set{R}_n(\set{F})+\set{R}_m(\set{F}))\label{lemma2step4}
\end{align}
where in \eqref{step0} we have used Jensen's and triangular inequalities, and $\sigma_1,\sigma_2,\ldots,\sigma_k$ are i.i.d. Rademacher variables $\sigma_{i}\in\{-1,1\}$. Then, using McDiarmid's inequality (see e.g., \cite{MehRos:18}), with probability at least $1-\delta$ we get
\begin{align}\sup_{f\in\set{F}}\Big|&\frac{1}{n}\sum_{i\in[n]}f(\up{X}_i,\up{Y}_i)-\frac{1}{m}\sum_{i\in[m]}f(\widetilde{\up{X}}_i,\widetilde{\up{Y}}_i)\Big|\leq 2(\set{R}_n(\set{F})+\set{R}_m(\set{F}))+B\sqrt{\frac{1}{2}\Big(\frac{1}{n}+\frac{1}{m}\Big)\log\frac{1}{\delta}}\label{step1}
\end{align} 
because changing any $(\up{X}_i,\up{Y}_i)$ or $(\widetilde{\up{X}}_i,\widetilde{\up{Y}}_i)$ by other sample in the left hand side of \eqref{step1}, changes the `sup' at most $B/n$ or $B/m$, since $|f(x,y)-f(x',y')|\leq B$, for any $f\in\set{F}$ and pairs $(x,y),(x',y')$. Therefore, the bound in \eqref{delta-epsilon} is directly obtained from \eqref{lemma2step3} and \eqref{step1}.

The bound in \eqref{delta-epsilon-kernel} for families in \acp{RKHS} is obtained similarly from \eqref{lemma2step}. The term corresponding to maximum deviations of empirical averages can be bounded more tightly in the \ac{RKHS} case, without resorting to Rademacher complexities, as follows. Firstly, for the case $R=1$ we have
\begin{align}\sup_{f\in\set{F}_1}\Big|\frac{1}{n}\sum_{i\in[n]}f(\up{X}_i,\up{Y}_i)-\frac{1}{m}&\sum_{i\in[m]}f(\widetilde{\up{X}}_i,\widetilde{\up{Y}}_i)\Big|\nonumber\\
&=\underset{f\in\set{H},\|f\|_\set{H}\leq 1}{\sup}\Big|\frac{1}{n}\sum_{i\in[n]}f(\up{X}_i,\up{Y}_i)-\frac{1}{m}\sum_{i\in[m]}f(\widetilde{\up{X}}_i,\widetilde{\up{Y}}_i)\Big|\nonumber\\
&=\Big\|\frac{1}{n}\sum_{i\in[n]}K(\up{X}_i,\up{Y}_i,\cdot)-\frac{1}{m}\sum_{i\in[m]}K(\widetilde{\up{X}}_i,\widetilde{\up{Y}}_i,\cdot)\Big\|_\set{H}\label{steplemma2end0}
\end{align}
as a consequence of the dual characterization of the norm in $\set{H}$. In addition, the expression in \eqref{steplemma2end0} also bounds $\Phi_{n,m}^{\set{F}_1}(\V{w})$ since $\V{w}$ is solution of \eqref{opt-ker} and $\V{w}^*$ is feasible in \eqref{opt-ker}. 

The expectation of \eqref{steplemma2end0} can be bounded using Jensen's inequality
($\mathbb{E}\{s\}\leq \sqrt{\mathbb{E}\{s^2\}}$) and the bound
%Hence, the bound in \eqref{exp-epsilon-kernel} is obtained taking into account that
\begin{align}\mathbb{E}\Big\{\Big\|\frac{1}{n}\sum_{i\in[n]}&K(\up{X}_{i},\up{Y}_{i},\cdot)-\frac{1}{m}\sum_{i\in[m]}K(\widetilde{\up{X}}_i,\widetilde{\up{Y}}_i,\cdot)\Big\|^2_{\set{H}}\Big\}\nonumber\\
&=\mathbb{E}\Big\{\frac{1}{n^2}\sum_{i_1,i_2\in[n]}K(\up{X}_{i_1},\up{Y}_{i_1},\up{X}_{i_2},\up{Y}_{i_2})+\frac{1}{m^2}\sum_{j_1,j_2\in[m]}K(\widetilde{\up{X}}_{j_1},\widetilde{\up{Y}}_{j_1},\widetilde{\up{X}}_{j_2},\widetilde{\up{Y}}_{j_2})\nonumber\\
&\hspace{1cm}-\frac{2}{nm}\sum_{i\in[n],j\in[m]}K(\up{X}_{i},\up{Y}_{i},\widetilde{\up{X}}_{j},\widetilde{\up{Y}}_{j})\Big\}\nonumber\\
&=\Big(\frac{1}{n}+\frac{1}{m}\Big)\mathbb{E}\{K(\up{Z},\up{Z})-K(\up{Z},\up{Z}')\}\leq \Big(\frac{1}{n}+\frac{1}{m}\Big)\kappa^2\end{align}
where $\up{Z}$ and $\up{Z}'$ denote two independent copies of the \ac{RV} $(\up{X},\up{Y})$.

Therefore, the bound in \eqref{delta-epsilon-kernel} follows using McDiarmid's inequality. Specifically, with probability at least $1-\delta$ we get\vspace{-0.1cm}
\begin{align}\sup_{f\in\set{F}_1}\Big|\frac{1}{n}\sum_{i\in[n]}f(\up{X}_i,\up{Y}_i)-&\frac{1}{m}\sum_{i\in[m]}f(\widetilde{\up{X}}_i,\widetilde{\up{Y}}_i)\Big|\nonumber\\
&=\Big\|\frac{1}{n}\sum_{i\in[n]}K(\up{X}_i,\up{Y}_i,\cdot)-\frac{1}{m}\sum_{i\in[m]}K(\widetilde{\up{X}}_i,\widetilde{\up{Y}}_i,\cdot)\Big\|_\set{H}\label{steplemma2end}\\&\leq \kappa\sqrt{\frac{1}{n}+\frac{1}{m}}+R\kappa\sqrt{\Big(\frac{1}{n}+\frac{1}{m}\Big)\log\frac{1}{\delta}}
\end{align}
because changing any $(\up{X}_i,\up{Y}_i)$ or $(\widetilde{\up{X}}_i,\widetilde{\up{Y}}_i)$ by other sample in \eqref{steplemma2end}, changes the norm at most $\sqrt{2}\kappa/n$ or $\sqrt{2}\kappa/m$, as a consequence of the triangular inequality. Moreover, the bound in \eqref{delta-epsilon-kernel} is satisfied simultaneously for all $R>0$ with probability at least $1-\delta$ because \vspace{-0.1cm}
$$\sup_{f\in\set{F}_R}\Big|\frac{1}{n}\sum_{i\in[n]}f(\up{X}_i,\up{Y}_i)-\frac{1}{m}\sum_{i\in[m]}f(\widetilde{\up{X}}_i,\widetilde{\up{Y}}_i)\Big|=R\sup_{f\in\set{F}_1}\Big|\frac{1}{n}\sum_{i\in[n]}f(\up{X}_i,\up{Y}_i)-\frac{1}{m}\sum_{i\in[m]}f(\widetilde{\up{X}}_i,\widetilde{\up{Y}}_i)\Big|$$
for any $R>0$.
\end{proof}
\vspace{-0.3cm}
\section{Proof of Theorem~\ref{th_general}}\label{apd:proof_th_general}
\vspace{-0.2cm}
Let $\widehat{q}$ be the quantile obtained in Step 4 of Algorithm~\ref{split-uns}. Using \eqref{ineq-ep} in Lemma~\ref{lemma1} we have
$$\widehat{q}\geq \text{Quantile}\big(\{(S(\up{X}_i,\up{Y}_i),1/n)\}_{i\in[n]}; (1-\alpha+n\up{E}/(n+1))(1+1/n)\big)$$
for \vspace{-0.2cm}$$\up{E}=\frac{1}{n}\sum_{i\in[n]}\mathfrak{X}_\set{C}(\up{X}_i,\up{Y}_i)-\frac{1}{n}\sum_{i\in[n],y\in\set{Y}}w_i(y)\mathfrak{X}_{\set{C}}(\up{X}_i,y).$$
In addition, using \eqref{delta-epsilon} in Lemma~\ref{lemma2} we have\vspace{-0.2cm}
\begin{align}|\up{E}|\leq V_{\text{opt}}+D(\mathfrak{X}_{\set{C}},\set{F})+2(\set{R}_n(\set{F})+\set{R}_m(\set{F}))+B\sqrt{\Big(\frac{1}{n}+\frac{1}{m}\Big)\log\frac{\log(2/\delta)}{2}}=G_{n,m}^{\set{F}}\label{step_th_gen1}\end{align}
with probability at least $1-\delta/2$. Therefore, with probability at least $1-\delta/2$ we have\vspace{-0.2cm}
$$\widehat{q}\geq \text{Quantile}\big(\{(S(\up{X}_i,\up{Y}_i),1/n)\}_{i\in[n]}; (1-\alpha-nG_{n,m}^{\set{F}}/(n+1))(1+1/n)\big)$$
 due to the monotonicity of the quantile function.

Therefore, with probability at least $1-\delta/2$ we have
\begin{align}&\mathbb{P}\{\up{Y}\in\set{C}(\up{X})|\set{D}\}=\mathbb{P}\{\up{S}(\up{X},\up{Y})\leq\widehat{q}\,|\set{D}\}\nonumber\\
&\geq\mathbb{P}\{\up{S}(\up{X},\up{Y})\leq\text{Quantile}\big(\{\up{S}(\up{X}_i,\up{Y}_i),1/n)\}_{i\in[n]};(1- \alpha-nG_{n,m}^{\set{F}}/(n+1))(1+1/n)\big)\,|\set{D}\}\label{step_th_gen2}.
\end{align}
%\begin{align}&\mathbb{P}\{\up{Y}\in\set{C}(\up{X})|\set{D})=\mathbb{P}\{\up{S}(\up{X},\up{Y})\leq\widehat{q}\,|\set{D}\}\\
%&\geq\mathbb{P}\{\up{S}(\up{X},\up{Y})\leq\text{Quantile}\big(\{\up{S}(\up{X}_i,\up{Y}_i),1/n)\}_{i\in[n]};(1-\alpha+n\up{E}/(n+1))(1+1/n)\big)\,|\set{D}\}\label{stepp}
%\end{align}
Using standard results for training-conditional coverage probabilities in conventional split conformal prediction (see e.g., \cite{AngBat:23,Vov:12}),  in cases with $0\leq 1-\alpha-G_{n,m}^{\set{F}}n/(n+1)\leq 1$, the probability in \eqref{step_th_gen2} is lower bounded by
\begin{align}1-\alpha-\frac{n}{n+1}G_{n,m}^{\set{F}}-\sqrt{\frac{\log(2/\delta)}{2n}}\geq1-\alpha-G_{n,m}^{\set{F}}-\sqrt{\frac{\log(2/\delta)}{2n}}\label{step_th_gen3}\end{align}
with probability at least $1-\delta/2$. In addition, the lower bound in \eqref{step_th_gen3} is also satisfied in
other cases because if $1-\alpha-G_{n,m}^{\set{F}}n/(n+1)<0$ the lower bound is trivially valid and the case $ 1-\alpha-G_{n,m}^{F}n/(n+1)> 1$ is not possible because both $\alpha$ and $G_{n,m}^{\set{F}}$ are positive. Therefore, the inequality in \eqref{train-coverage} follows directly from the lower bound in \eqref{step_th_gen3} using the union bound.

In case the values $\{S(\up{X}_i,y)\}_{(i,y)\in[n]\times\set{Y}}$ are distinct with probability one, using \eqref{eq-ep} in Lemma~\ref{lemma1} we have
$$\widehat{q}\leq \text{Quantile}\big(\{(S(\up{X}_i,\up{Y}_i),1/n)\}_{i\in[n]}; (1-\alpha+nG_{n,m}^{\set{F}}(n+1)+1/(n+1))(1+1/n)\big)$$
with probability at least $1-\delta/2$.

Therefore, with probability at least $1-\delta/2$ we have
\begin{align}&\mathbb{P}\{\up{Y}\in\set{C}(\up{X})|\set{D})=\mathbb{P}\{\up{S}(\up{X},\up{Y})\leq\widehat{q}\,|\set{D}\}\nonumber\\
&\leq\mathbb{P}\{\up{S}(\up{X},\up{Y})\leq\text{Quantile}\big(\{\up{S}(\up{X}_i,\up{Y}_i),1/n)\}_{i\in[n]};(1- A)(1+1/n)\big)\,|\set{D}\}\label{step_th_gen4}
\end{align}
with $A=\alpha-nG_{n,m}^{\set{F}}/(n+1)-1/(n+1)$.

%\begin{align}&\mathbb{P}\{\up{Y}\in\set{C}(\up{X})|\set{D})=\mathbb{P}\{\up{S}(\up{X},\up{Y})\leq\widehat{q}\,|\set{D}\}\\
%&\geq\mathbb{P}\{\up{S}(\up{X},\up{Y})\leq\text{Quantile}\big(\{\up{S}(\up{X}_i,\up{Y}_i),1/n)\}_{i\in[n]};(1-\alpha+n\up{E}/(n+1))(1+1/n)\big)\,|\set{D}\}\label{stepp}
%\end{align}
Using standard results for training-conditional coverage probabilities in conventional split conformal prediction (see e.g., \cite{AngBat:23,Vov:12}),  in cases with $0\leq 1-\alpha+G_{n,m}^{\set{F}}n/(n+1)+1/(n+1)\leq 1$, the probability in \eqref{step_th_gen4} follows a Beta distribution $\text{Beta}(n+1-\lfloor(n+1)A\rfloor,\lfloor(n+1)A\rfloor)$ where $\lfloor\,\cdot\,\rfloor$ denotes the largest integer that lower bounds the argument. Such a Beta distribution has mean 
$$\frac{n+1-\lfloor(n+1)A\rfloor}{n+1}\leq1-A+\frac{1}{n+1}=1-\alpha+\frac{2}{n+1}+\frac{n}{n+1}G_{n,m}^{\set{F}}$$
and is sub-Gaussian with parameter at most $1/4n$. Hence, using a Chernoff bound the probability in \eqref{step_th_gen4} can be upper bounded by
\begin{align}1-\alpha+\frac{2}{n+1}+\frac{n}{n+1}G_{n,m}^{\set{F}}+\sqrt{\frac{\log(2/\delta)}{2n}}\leq 1-\alpha+\frac{2}{n+1}+G_{n,m}^{\set{F}}+\sqrt{\frac{\log(2/\delta)}{2n}}\label{step_th_gen5}\end{align}
with probability at least $1-\delta/2$. In addition, the upper bound in \eqref{step_th_gen5} is also satisfied in
other cases because if $1-\alpha+G_{n,m}^{\set{F}}n/(n+1)+1/(n+1)> 1$ the upper bound is trivially valid and the case $ 1-\alpha+G_{n,m}^{\set{F}}n/(n+1)+1/(n+1)< 0$ is not possible because $\alpha\leq 1$ and $G_{n,m}^{\set{F}}\geq 0$. Therefore, the inequality in \eqref{train-coverage-upper} follows directly from the upper bound in \eqref{step_th_gen5} using the union bound.

\vspace{-0.3cm}
\section{Proof of Theorem~\ref{th_kernel}}\label{apd:proof_th_kernel}
\vspace{-0.2cm}
The proof is analogous to that of Theorem~\ref{th_general} above. The main difference lies in the high-probability bound of $|\up{E}|$ for
$$\up{E}=\frac{1}{n}\sum_{i\in[n]}\mathfrak{X}_\set{C}(\up{X}_i,\up{Y}_i)-\frac{1}{n}\sum_{i\in[n],y\in\set{Y}}w_i(y)\mathfrak{X}_{\set{C}}(\up{X}_i,y).$$

%Let $\widehat{q}$ be the quantile obtained in Step 4 of Algorithm~\ref{split-uns}. Using \eqref{ineq-ep} in Lemma~\ref{lemma1} we have
%$$\widehat{q}\geq \text{Quantile}\big(\{(S(\up{X}_i,\up{Y}_i),1/n)\}_{i\in[n]}; (1-\alpha+n\up{E}/(n+1))(1+1/n)\big).$$

If $\set{F}_R=\{f\in\set{H}:\|f\|_{\set{H}}\leq R\}$, using \eqref{delta-epsilon-kernel} in Lemma~\ref{lemma2}, we have 
$$|\up{E}|\leq D(\mathfrak{X}_{\set{C}},\set{F}_R)+2\kappa R\sqrt{\frac{1}{n}+\frac{1}{m}}\Big(1+\sqrt{\log(2/\delta)}\Big),\mbox{ for all }R>0$$
with probability at least $1-\delta/2$. 
Therefore, 
\begin{align}\label{step1_th2}|\up{E}|&\leq \inf_{R>0}D(\mathfrak{X}_{\set{C}},\set{F}_R)+2\kappa R\sqrt{\frac{1}{n}+\frac{1}{m}}\Big(1+\sqrt{\log(2/\delta)}\Big)\nonumber\\
&=\min_{f\in\set{H}}\frac{1}{n}\sum_{i\in[n],y\in\set{Y}}\big|\mathfrak{X}_{\set{C}}(\up{X}_i,y)-f(\up{X}_i,y)\big|+2\kappa\big(1+\sqrt{\log(2/\delta)}\big)\sqrt{\frac{1}{n}+\frac{1}{m}}\,\|f\|_{\set{H}}=
G_{n,m}^{K}\end{align}
with probability at least $1-\delta/2$. 

The rest of the proof is totally analogous to that for Theorem~\ref{th_general} (see Appendix~\ref{apd:proof_th_general} for the details). In particular, the bound for $|\up{E}|$ in \eqref{step1_th2} results in 
$$\widehat{q}\geq \text{Quantile}\big(\{(S(\up{X}_i,\up{Y}_i),1/n)\}_{i\in[n]}; (1-\alpha-nG_{n,m}^{K}/(n+1))(1+1/n)\big),\mbox{ w.p. }\geq 1-\frac{\delta}{2}$$
using \eqref{ineq-ep} in Lemma~\ref{lemma1}, and 
$$\widehat{q}\leq \text{Quantile}\big(\{(S(\up{X}_i,\up{Y}_i),1/n)\}_{i\in[n]}; (1-\alpha+nG_{n,m}^{K}(n+1)+1/(n+1))(1+1/n)\big),\mbox{ w.p. }\geq 1-\frac{\delta}{2}$$
if $\{S(\up{X}_i,y)\}_{(i,y)\in[n]\times\set{Y}}$ are distinct with probability one, using \eqref{eq-ep} in Lemma~\ref{lemma1}.

Therefore, \eqref{train-coverage-kernel} and \eqref{train-coverage-upper-kernel} are obtained taking into account that
$$\mathbb{P}\{\up{Y}\in\set{C}(\up{X})|\set{D}\}=\mathbb{P}\{\up{S}(\up{X},\up{Y})\leq\widehat{q}\,|\set{D}\}$$
and that, for any $0\leq\alpha'\leq 1$, the \ac{RV}
$$\mathbb{P}\{\up{S}(\up{X},\up{Y})\leq \text{Quantile}\big(\{(S(\up{X}_i,\up{Y}_i),1/n)\}_{i\in[n]}; (1-\alpha')(1+1/n)\big)|\set{D}\}$$
follows a distribution that is dominated by a $\text{Beta}(n+1-\lfloor(n+1)\alpha'\rfloor,\lfloor(n+1)\alpha'\rfloor)$, and is equal to such a Beta distribution in the case where $\{S(\up{X}_i,y)\}_{(i,y)\in[n]\times\set{Y}}$ are distinct with probability one. 
\vspace{-0.cm}
\section{Additional results}\label{appendix_additional}
\vspace{-0.2cm}
The following provides the detailed derivations leading to \eqref{bound-Phi} and \eqref{opt-ker} in the main paper. \begin{proposition}\label{proposition_bound}
Let $\varepsilon_{\text{opt}}$ be an upper bound for the difference between $\Phi_{\set{F}}(\V{w})$ and the optimum value of \eqref{opt-gen}.
If the feasible set in \eqref{opt-gen} includes the label weights $\V{w}^*$ corresponding to the actual calibration labels, with probability at least $1-\delta$ we have\vspace{-0.2cm}
\begin{align}
\Phi^{\set{F}}_{n,m}(\V{w})\leq  \varepsilon_{\text{opt}}+2(\set{R}_n(\set{F})+R_m(\set{F}))+B\sqrt{\frac{1}{2}\Big(\frac{1}{n}+\frac{1}{m}\Big)\log\frac{1}{\delta}}.\label{prop1}
\end{align}
\end{proposition}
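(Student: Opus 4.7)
The plan is to reduce the bound on $\Phi^{\set{F}}_{n,m}(\V{w})$ to a standard two-sample concentration argument by comparing with the ``oracle'' label weights $\V{w}^*$, and then invoke the symmetrization plus McDiarmid machinery already used in Lemma~\ref{lemma2}.

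First I would exploit the near-optimality of $\V{w}$: since $\V{w}^*$ lies in the feasible set of \eqref{opt-gen} and $\V{w}$ attains objective within $\varepsilon_{\text{opt}}$ of the optimum, we obtain directly
\begin{align*}
\Phi^{\set{F}}_{n,m}(\V{w})\leq \Phi^{\set{F}}_{n,m}(\V{w}^*)+\varepsilon_{\text{opt}}.
\end{align*}
Then I would observe that $\Phi^{\set{F}}_{n,m}(\V{w}^*)$ is exactly the empirical IPM between the true calibration sample and the training sample, because $w^*_i(y)=\mathbb{I}\{y=Y_i\}$ collapses the double sum over $(i,y)$ into a single sum over $(X_i,Y_i)$:
\begin{align*}
\Phi^{\set{F}}_{n,m}(\V{w}^*)=\underset{f\in\set{F}}{\sup}\  \Big|\frac{1}{n}\sum_{i\in[n]}f(X_i,Y_i)-\frac{1}{m}\sum_{i\in[m]}f(\widetilde{X}_i,\widetilde{Y}_i)\Big|.
\end{align*}

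Next I would bound this empirical IPM by the standard symmetrization argument carried out in the proof of Lemma~\ref{lemma2} (equations \eqref{step0}--\eqref{lemma2step4}): introducing an independent ghost sample, applying Jensen and the triangle inequality, and inserting Rademacher signs yields
\begin{align*}
\mathbb{E}\{\Phi^{\set{F}}_{n,m}(\V{w}^*)\}\leq 2(\set{R}_n(\set{F})+\set{R}_m(\set{F})).
\end{align*}
Finally, since replacing any one of the $n+m$ i.i.d.\ samples changes the supremum by at most $B/n$ or $B/m$ (owing to the bounded-difference hypothesis $|f(z)-f(z')|\leq B$), McDiarmid's inequality gives, with probability at least $1-\delta$,
\begin{align*}
\Phi^{\set{F}}_{n,m}(\V{w}^*)\leq 2(\set{R}_n(\set{F})+\set{R}_m(\set{F}))+B\sqrt{\frac{1}{2}\Big(\frac{1}{n}+\frac{1}{m}\Big)\log\frac{1}{\delta}}.
\end{align*}
Combining this with the first display yields \eqref{prop1}.

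There is no real obstacle here; the result is essentially a repackaging of the empirical IPM concentration bound already embedded in Lemma~\ref{lemma2} (cf.\ \eqref{step1}), combined with the trivial inequality coming from $\varepsilon_{\text{opt}}$-optimality and feasibility of $\V{w}^*$. The only thing to be careful about is to verify that the bounded-difference constant used in McDiarmid is indeed $B$ and not $2B$: because $|f(z)-f(z')|\leq B$ for all $f\in\set{F}$, swapping one sample perturbs each normalized average by at most $B/n$ or $B/m$, and hence perturbs the supremum by the same amount, which is exactly what is needed to produce the $1/2$ factor under the square root.
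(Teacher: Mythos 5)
Your proposal is correct and follows essentially the same route as the paper's own proof: use $\varepsilon_{\text{opt}}$-near-optimality plus feasibility of $\V{w}^*$ to reduce to bounding the empirical IPM $\Phi^{\set{F}}_{n,m}(\V{w}^*)$, then apply symmetrization for the expectation and McDiarmid for concentration, with the bounded-difference constant $B/n$ (resp.\ $B/m$) exactly as you verify. The only cosmetic difference is that the paper passes through the intermediate quantity $\Phi^*$ explicitly rather than combining the two inequalities in one step.
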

\vspace{-0.3cm}
\begin{proof}
If $\Phi^*$ denotes the optimum value of \eqref{opt-gen}, we have 
\begin{align}\label{step_prop11}\Phi^{\set{F}}_{n,m}(\V{w})=(\Phi^{\set{F}}_{n,m}(\V{w})-\Phi^*)+\Phi^*\leq\varepsilon_{\text{opt}}+\Phi^*\end{align}
and since $\V{w}^*$ is included in the feasible set of \eqref{opt-gen}, we have
\begin{align}\label{step_prop12}\Phi^*\leq \sup_{f\in\set{F}}\Big|\frac{1}{n}\sum_{i\in[n]}f(\up{X}_i,\up{Y}_i)-\frac{1}{m}\sum_{i\in[m]}f(\widetilde{\up{X}}_i,\widetilde{\up{Y}}_i)\Big|.\end{align}
Using McDiarmid's inequality (see e.g., \cite{MehRos:18}) with probability at least $1-\delta$ we have 
\begin{align}\sup_{f\in\set{F}}\Big|&\frac{1}{n}\sum_{i\in[n]}f(\up{X}_i,\up{Y}_i)-\frac{1}{m}\sum_{i\in[m]}f(\widetilde{\up{X}}_i,\widetilde{\up{Y}}_i)\Big|\leq 2(\set{R}_n(\set{F})+\set{R}_m(\set{F}))+B\sqrt{\frac{1}{2}\Big(\frac{1}{n}+\frac{1}{m}\Big)\log\frac{1}{\delta}}\label{step_prop13}
\end{align}
because changing any $(\up{X}_i,\up{Y}_i)$ or $(\widetilde{\up{X}}_i,\widetilde{\up{Y}}_i)$ by other sample in the left hand side of \eqref{step_prop13}, changes the `sup' at most $B/n$ or $B/m$, and we have 
\begin{align}
\mathbb{E}\Big\{\sup_{f\in\set{F}}\Big|\frac{1}{n}\sum_{i\in[n]}f(\up{X}_i,\up{Y}_i)-\frac{1}{m}\sum_{i\in[m]}f(\widetilde{\up{X}}_i,\widetilde{\up{Y}}_i)\Big|\Big\}\leq 2(\set{R}_n(\set{F})+\set{R}_m(\set{F}))
\end{align}
using a symmetrization argument, as in the proof of Lemma~\ref{lemma2}.

Therefore, the result in \eqref{prop1} is obtained by combining the inequalities in \eqref{step_prop11}, \eqref{step_prop12}, and \eqref{step_prop13}.
\end{proof}

\begin{proposition}\label{proposition_kernel}
Let $\set{H}$ be an \ac{RKHS} given by kernel $K$ over $\set{Z}=\set{X}\times\set{Y}$. If $\set{F}$ is the family of real-valued functions $\set{F}=\{f\in\set{H}:\|f\|_\set{H}\leq 1\}$, we have
\begin{align}\label{prop_2}\underset{f\in\set{F}}{\sup}\  \Big|\frac{1}{n}\sum_{(i,y)\in[n]\times\set{Y}}\hspace{-0.3cm}w_i(y)f(X_i,y)-\frac{1}{m}\sum_{i\in[m]} f(\widetilde{X}_{i},\widetilde{Y}_i)\Big|\hspace{-0.05cm}=\hspace{-0.05cm}\Big(\frac{1}{n^2}\V{w}^\top\V{K}\V{w}-\frac{2}{nm}\V{v}^\top\V{w}+\frac{1}{m^2}\V{1}^\top\widetilde{\V{K}}\V{1}\Big)^{\frac{1}{2}}\hspace{-0.05cm}.\end{align}
\end{proposition}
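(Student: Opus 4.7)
The plan is to rewrite the objective inside the supremum as a single inner product in $\set{H}$ via the reproducing property, then apply the dual characterization of the norm, and finally expand the resulting squared norm using the reproducing kernel to recover the quadratic form on the right-hand side.

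First I would use the reproducing property $f(z)=\langle f,K(z,\pun)\rangle_{\set{H}}$ for every $f\in\set{H}$ and $z\in\set{Z}$. This lets me rewrite
\begin{align*}
\frac{1}{n}\sum_{(i,y)\in[n]\times\set{Y}} w_i(y)f(X_i,y)-\frac{1}{m}\sum_{i\in[m]}f(\widetilde{X}_i,\widetilde{Y}_i)=\langle f,g\rangle_{\set{H}}
\end{align*}
where the mean embedding difference is
\begin{align*}
g=\frac{1}{n}\sum_{(i,y)\in[n]\times\set{Y}} w_i(y)K\big((X_i,y),\pun\big)-\frac{1}{m}\sum_{i\in[m]}K\big((\widetilde{X}_i,\widetilde{Y}_i),\pun\big)\in\set{H}.
\end{align*}

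Next I would invoke the standard dual characterization of the $\set{H}$-norm, namely $\sup_{\|f\|_{\set{H}}\leq 1}|\langle f,g\rangle_{\set{H}}|=\|g\|_{\set{H}}$, which follows from Cauchy--Schwarz with equality attained at $f=g/\|g\|_{\set{H}}$ (or trivially if $g=0$). This converts the supremum on the left of \eqref{prop_2} into $\|g\|_{\set{H}}$.

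Finally I would compute $\|g\|_{\set{H}}^2=\langle g,g\rangle_{\set{H}}$ by expanding the three cross terms and applying $\langle K(z,\pun),K(z',\pun)\rangle_{\set{H}}=K(z,z')$. Using the indexing $Z_{(i-1)c+y}=(X_i,y)$ for the calibration pairs and $\widetilde{Z}_i=(\widetilde{X}_i,\widetilde{Y}_i)$ for the training pairs, the three terms give $\frac{1}{n^2}\V{w}^\top\V{K}\V{w}$, $-\frac{2}{nm}\V{v}^\top\V{w}$, and $\frac{1}{m^2}\V{1}_m^\top\widetilde{\V{K}}\V{1}_m$, respectively, matching the expression in \eqref{prop_2} after taking square roots. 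There is no real obstacle here, the only point requiring care is keeping the indexing of $\V{K}$, $\V{v}$, and $\widetilde{\V{K}}$ consistent so that the double sums collapse into the stated matrix-vector products.
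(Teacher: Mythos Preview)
Your proposal is correct and follows essentially the same approach as the paper: apply the reproducing property to write the objective as $\langle f,g\rangle_{\set{H}}$, use the dual characterization $\sup_{\|f\|_{\set{H}}\leq 1}|\langle f,g\rangle_{\set{H}}|=\|g\|_{\set{H}}$, and expand $\|g\|_{\set{H}}^2$ via $\langle K(z,\pun),K(z',\pun)\rangle_{\set{H}}=K(z,z')$ to obtain the three matrix terms. The paper's proof is just a more explicit unrolling of the same steps.
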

\vspace{-0.5cm}
\begin{proof}
Using the reproducing property of \acp{RKHS} and the dual characterization of the norm in $\set{H}$, we have\vspace{-0.2cm}
\begin{align}&\underset{f:\|f\|_\set{H}\leq1}{\sup}\  \Big|\frac{1}{n}\sum_{(i,y)\in[n]\times\set{Y}}w_i(y)f(X_i,y)-\frac{1}{m}\sum_{i\in[m]} f(\widetilde{X}_{i},\widetilde{Y}_i)\Big|^2\nonumber\\
&=\underset{f:\|f\|_\set{H}\leq1}{\sup}\  \Big|\frac{1}{n}\sum_{(i,y)\in[n]\times\set{Y}}w_i(y)<f,K((X_i,y),\cdot)>_{\set{H}}-\frac{1}{m}\sum_{i\in[m]}<f,K((\widetilde{X}_i,\widetilde{Y}_i),\cdot)>_{\set{H}}\Big|^2\nonumber\\
&=\underset{f:\|f\|_\set{H}\leq1}{\sup}\  \Big|<f,\frac{1}{n}\sum_{(i,y)\in[n]\times\set{Y}}w_i(y)K((X_i,y),\cdot)>_{\set{H}}-<f,\frac{1}{m}\sum_{i\in[m]}K((\widetilde{X}_i,\widetilde{Y}_i),\cdot)>_{\set{H}}\Big|^2\nonumber\\\end{align}
\begin{align}
&=\underset{f:\|f\|_\set{H}\leq1}{\sup}\  \Big|<f,\frac{1}{n}\sum_{(i,y)\in[n]\times\set{Y}}w_i(y)K((X_i,y),\cdot)-\frac{1}{m}\sum_{i\in[m]}K((\widetilde{X}_i,\widetilde{Y}_i),\cdot)>_{\set{H}}\Big|^2\nonumber\\
&=\Big\|\frac{1}{n}\sum_{(i,y)\in[n]\times\set{Y}}w_i(y)K((X_i,y),\cdot)-\frac{1}{m}\sum_{i\in[m]}K((\widetilde{X}_i,\widetilde{Y}_i),\cdot)\Big\|_\set{H}^2\nonumber\\
&=\frac{1}{n^2}\sum_{(i_1,y_1),(i_2,y_2)\in[n]\times\set{Y}}w_{i_1}(y_1)w_{i_2}(y_2)K((X_{i_1},y_1),(X_{i_2},y_2))\nonumber\\
&\hspace{0.4cm}-\frac{2}{nm}\sum_{(i_1,y_1)\in[n]\times\set{Y},i_2\in[m]}w_{i_1}(y_1)K((X_{i_1},y_1),(\widetilde{X}_{i_2},\widetilde{Y}_j))\\
&\hspace{0.4cm}+\frac{1}{m^2}\sum_{i_1,i_2\in[m]}K((\widetilde{X}_{i_1},\widetilde{Y}_{i_1}),(\widetilde{X}_{i_2},\widetilde{Y}_{i_2}))\nonumber
\end{align}
where $<\,\cdot\,,\,\cdot\,>_{\set{H}}$ denotes the inner product in $\set{H}$. Then, the result in \eqref{prop_2} is directly obtained using the definition of $\V{K}\in\mathbb{R}^{nc\times nc}$, $\widetilde{\V{K}}\in\mathbb{R}^{m\times m}$, and $\V{v}\in\mathbb{R}^{nc}$ as
\begin{align*}K_{i,j}&=K(Z_i,Z_j), \mbox{ for }i,j\in[nc],\  
  \widetilde{K}_{i,j}=K(\widetilde{Z}_i,\widetilde{Z}_j),\mbox{ for }i,j\in[m]\\
    v_i&=\sum_{j\in[m]}K(Z_i,\widetilde{Z}_j),\mbox{ for }i\in[nc]\end{align*}
for $Z_{(i-1)c+y}=(X_i,y)$ for $i\in[n],y\in\set{Y}$, and $\widetilde{Z}_i=(\widetilde{X}_i,\widetilde{Y}_i)$ for $i\in[m]$.
\end{proof}
\vspace{-0.3cm}
\section{Implementation details and additional experimental results}\label{additional}
\vspace{-0.2cm}

In the following we provide further implementation details and describe the datasets used in the experimental results. Then, we complement the results in the main paper by including the results for the running times of the methods presented as well as additional results with other target coverages and conformal scores. In addition, the Github \url{https://github.com/MachineLearningBCAM/Unsupervised-conformal-prediction-NeurIPS2025} and the folder Implementation\_Unsupervised\_Conformal in the supplementary materials provide Matlab code corresponding to the presented methods with the setting used in the numerical results.
\vspace{-0.cm}
\subsection{Implementation and datasets details}

We utilize 9 publicly available and common benchmark datasets for classification tasks: `Drybean', `Forestcov', `Satellite', `USPS', `MNIST', `FashionMNIST', `CIFAR10', `ImageNet10' (first 10 classes of ImageNet), and `Letter'. These datasets can be found in the UCI repository \cite{DuaGra:2019}, Tensor Flow datasets \cite{tensorflowdatasets}, and Kaggle website \url{https://www.kaggle.com}. The main characteristics of the datasets used is provided in Table~\ref{tab:datasets}.

% that shows the number of samples and the instances dimensionality.

\vspace{-0.2cm}
\begin{table}[H]
\centering
\small
\caption{Key characteristics of benchmark datasets used in the experiments.}
\label{tab:datasets}
\begin{tabular}{lrcc}
\toprule
Dataset & \# Samples & \# Classes & Input Dimensionality \\
\midrule
Drybean         & 13,611   & 7    & 16  \\
Forestcov       & 581,012  & 7    & 54  \\
Satellite       & 6,435    & 6    & 36 \\
Letter          & 20,000   & 26   & 16 \\
USPS            & 9,298    & 10   & 256  \\
MNIST           & 70,000   & 10   & 784  \\
FashionMNIST    & 70,000   & 10   & 784  \\
CIFAR10         & 60,000   & 10   & 3,072  \\
ImageNet10         & 13,000   & 10   & 150,528  \\
\bottomrule
\end{tabular}
\vspace{-0.2cm}
\end{table}

\vspace{-0.cm}
In each random realization, the datasets are randomly partitioned in training, calibration, and test sets. The sizes of the training and test sets are $3,000$ samples and $1,000$, and that of the calibration set is varied from $10$ to $3,000$. The classification rules are obtained using random forests in the tabular datasets (`Drybean', `Forestcov', `Satellite', and `Letter') and using neural networks in the image datasets (`USPS', `MNIST', `FashionMNIST', `CIFAR10', and `ImageNet10'). Specifically, the random forests are given by 200 decision trees and learning is carried out with 20 maximum number of splits and 10 minimum leaf size. The neural networks for `USPS', `MNIST', and \mbox{`FashionMNIST'} datasets have two hidden layers of sizes 128 and 64 and learning is carried out with regularization parameter $0.001$. The implementation for `CIFAR10' and `ImageNet10' datasets is slightly different due their higher complexity. In particular, the classification rule for those datasets is learned only once by fine-tuning a Resnet50 until a validation error of $4\%$ in `CIFAR10' and $11\%$ in `ImageNet10' datasets, using SGD with momentum, initial learning rate of $0.001$, minibatch size of $64$, and regularization parameter $0.001$. 

%using $50,000$ training samples

Set-prediction rules are obtained using the adaptive conformal score proposed in \cite{AngBat:23,AngBatJor:21} and also using the conformal score directly given by probability estimates \cite{VovGamSha:05}. Such scores are computed from the probability estimates provided by the classification rules (proportion of votes from trees in random forest methods, and soft-max outputs in neural networks methods). The conventional approach for split conformal prediction with supervised calibration is implemented using Algorithm~\ref{split} (see also, e.g., \cite{AngBat:23}). The naive approach with unsupervised calibration is implemented using Algorithm~\ref{split} with labels corresponding to the predictions obtained by the classification rules. 

The presented method is implemented using Algorithm~\ref{split-uns} with label weights obtained by Algorithm~\ref{alg} using a Gaussian kernel selected from a set given by $10$ candidate bandwidth (scale) parameters $\sigma=\sigma_0\sqrt{d/2}$, for $\sigma_0\in\{10^{-1+t/3},t=0,1,\ldots,9\}$,  and $m=n$ training samples randomly drawn from the samples used to learn the classification rule. The inequality constraints in \eqref{opt-ker} given by $\V{B}$ and $\V{b}$ correspond to estimates for the cross-entropy loss of the classification rule obtained in the training stage. Specifically, $\V{B}\in\mathbb{R}^{1\times nc}$ is given as in \eqref{B} and $\V{b}=nL\in\mathbb{R}$ with $L$ the estimate of the cross-entropy plus its sample standard deviation obtained using validation samples (10-fold cross validation in all datasets but in `CIFAR10' and `ImageNet10' datasets for which $1,000$ validation samples were used for those estimates). In addition, for `CIFAR10' and `ImageNet10' datasets the instances $\{\widetilde{X}_i\}_{i\in[m]}$ and $\{X_i\}_{i\in[m]}$ used in Algorithm~\ref{alg} are given by the $512$-dimensional feature representations given by the penultimate layer of the pre-trained Resnet18 network.

\begin{figure}
\vspace{-0.cm}
\centering
\psfrag{Drybean}[][][0.75]{\hspace{0.0cm}Drybean}
\psfrag{data1}[][][0.75]{\hspace{0.7cm}CIFAR10}
\psfrag{data2}[][][0.75]{\hspace{0.15cm}Letter}
\psfrag{y}[b][][0.7]{Average coverage gap}
\psfrag{1000}[][][0.6]{1,000}
\psfrag{2000}[][][0.6]{2,000}
\psfrag{3000}[][][0.6]{3,000}
\psfrag{100}[][][0.6]{100\,\,}
\psfrag{200}[][][0.6]{200\,\,}
\psfrag{300}[][][0.6]{300\,\,}
\psfrag{y}[b][][0.75]{Running time [seconds]}
\psfrag{x}[t][][0.75]{Number of calibration samples $n$}
\includegraphics[width=.65\textwidth]{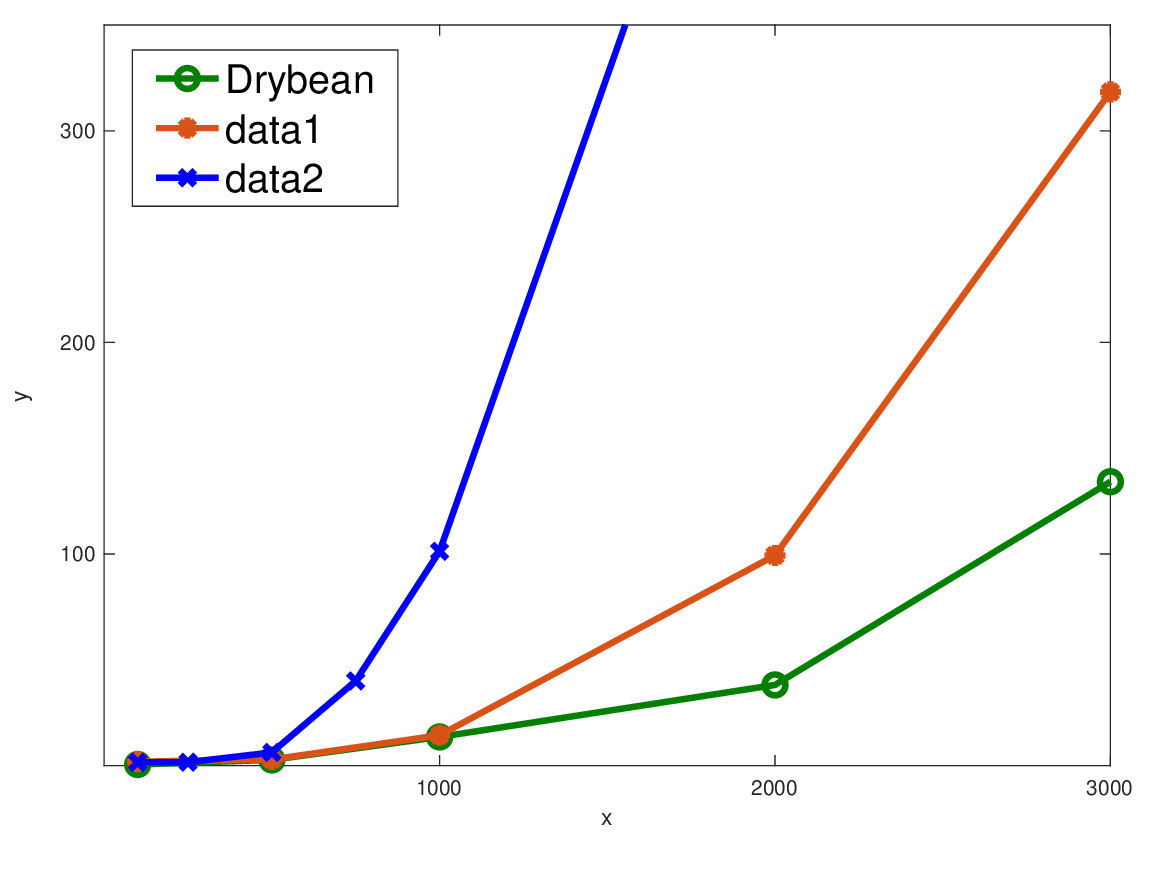}\vspace{-0.1cm}
\caption{\small Running times of the methods presented as the number of calibration samples increases. For datasets with up to 10 classes, the running times range from tens to hundreds of seconds using few thousands of samples.}\label{fig-time}\vspace{-0.4cm}
\end{figure}
\vspace{-0.cm}

\vspace{-0.cm}
\subsection{Running times of the methods presented}
\vspace{-0.cm}
The main computational burden of the methods presented corresponds to the Step 11 in Algorithm~\ref{alg} that solves a quadratic optimization problem. In the experimental results carried out, such optimization problem is solved using interior point methods with Mosek solver \url{https://www.mosek.com}. Figure~\ref{fig-time} shows the running time achieved by Algorithm~\ref{alg} using a regular desktop machine in the datasets `Drybean', `CIFAR10', and `Letter'. These datasets are representative of the overall running time because they have assorted characteristics. In particular the number of classes in those datasets is 7 (`Drybean'), 10 (`CIFAR10'), and 26 (`Letter'). As shown in the figure, the presented methods have running times ranging from tens to hundreds of seconds for cases with up to 10 classes and up to 3,000 calibration samples. In line with the discussion in Section~\ref{sec_kernel} of the paper, the only problematic cases are those composed by a sizable number of classes since the computational complexity of the methods presented increases cubically with the number of classes.
\subsection{Additional results with the adaptive conformal score}
\vspace{-0.cm}
The main paper shows experimental results using the adaptive conformal score \cite{AngBat:23,AngBatJor:21} given by\vspace{0cm}
$$S(X,Y)=\sum_{i\in[c]} \widehat{p}\,(i|X)\mathbb{I}\{\widehat{p}\,(i|X)>\widehat{p}\,(Y|X)\}+\up{U}\,\widehat{p}\,(Y|X)$$
for $\up{U}$ a random value from $\text{Unif}(0,1)$ and $\widehat{p}$ the probability estimate provided by the classification rule used.
This section complements the results in the main paper with other target probabilities and with other datases for figures such as those in Fig.~\ref{fig-intro} and Fig.~\ref{fig-exp}. Specifically, Tables~\ref{table-95} and \ref{table-85} show coverage probabilities and prediction set sizes analogous to those in Table~\ref{table} in the main paper but using target probabilities $1-\alpha=0.95$ and $1-\alpha=0.85$. In addition, Figures~\ref{fig-violin-imagenet10-score}, \ref{fig-violin-fashion-score}, and \ref{fig-violin-forestcov-score} show violin plots such as those in Fig.~\ref{fig-intro} using `ImageNet10',  `FashionMNIST', and datasets, and Figure~\ref{fig-decrease-score} shows the decrease of coverage gap with the number of calibration samples using `ImageNet10' and `MNIST' datasets.
\vspace{0.cm}
\begin{table}[H]
\small
\caption{\small Coverage probability and prediction set size for conventional approach with supervised calibration (`Supervised'), proposed approach with unsupervised calibration (`Unsupervised'), and naive unsupervised approach (`Unsup. Naive'). Results are shown as: mean (interquartile interval), and correspond to $1-\alpha=.95$.\label{table-95}}
\vspace{0.1cm}
\begin{tabular}{l|ccc|cc}
\toprule
\multirow{2}{*}{Dataset}&\multicolumn{3}{c|}{Coverage probability}&\multicolumn{2}{c}{Prediction set size}\\
 & Supervised & Unsupervised & Unsup. Naive & Supervised & Unsupervised \\
\midrule
Drybean & 0.95  (0.94,0.96) & 0.95  (0.94,0.96) & 0.91  (0.91,0.92) & 1.52  (1.47,1.56) & 1.51  (1.47,1.55) \\
Forestcov & 0.95  (0.94,0.96) & 0.94  (0.93,0.95) & 0.68  (0.67,0.69) & 2.24  (2.18,2.29) & 2.15  (2.11,2.19) \\
Satellite & 0.95  (0.94,0.96) & 0.95  (0.94,0.96) & 0.91  (0.90,0.92) & 1.77  (1.71,1.82) & 1.79  (1.73,1.84) \\
USPS & 0.95  (0.94,0.96) & 0.95  (0.95,0.96) & 0.94  (0.93,0.94) & 1.32  (1.28,1.35) & 1.33  (1.29,1.36) \\
MNIST & 0.95  (0.94,0.95) & 0.95  (0.95,0.96) & 0.93  (0.92,0.93) & 1.43  (1.38,1.48) & 1.45  (1.41,1.50) \\
Fashion & 0.95  (0.94,0.96) & 0.95  (0.95,0.96) & 0.87  (0.86,0.88) & 1.84  (1.76,1.90) & 1.84  (1.79,1.89) \\
CIFAR10 & 0.95  (0.94,0.96) & 0.95  (0.95,0.96) & 0.93  (0.93,0.94) & 1.10  (1.08,1.12) & 1.11  (1.10,1.13) \\
ImagNet10 & 0.95  (0.94,0.96) & 0.94  (0.94,0.95) & 0.93  (0.92,0.94) & 2.20  (2.13,2.27) & 2.07  (2.00,2.14) \\
Letter & 0.95  (0.94,0.96) & 0.90  (0.89,0.92) & 0.80  (0.79,0.82) & 9.06  (8.38,9.64) & 5.63  (4.99,6.33) \\
\bottomrule
\end{tabular}
\end{table}
\vspace{0.cm}
\begin{table}[H]
\small
\caption{\small Coverage probability and prediction set size for conventional approach with supervised calibration (`Supervised'), proposed approach with unsupervised calibration (`Unsupervised'), and naive unsupervised approach (`Unsup. Naive'). Results are shown as: mean (interquartile interval), and correspond to $1-\alpha=.85$.\label{table-85}}
\vspace{0.1cm}
\begin{tabular}{l|ccc|cc}
\toprule
\multirow{2}{*}{Dataset}&\multicolumn{3}{c|}{Coverage probability}&\multicolumn{2}{c}{Prediction set size}\\
 & Supervised & Unsupervised & Unsup. Naive & Supervised & Unsupervised \\
\midrule
Drybean & 0.85  (0.84,0.86) & 0.86  (0.85,0.87) & 0.78  (0.76,0.79) & 1.09  (1.06,1.11) & 1.11  (1.09,1.13) \\
Forestcov & 0.85  (0.84,0.86) & 0.86  (0.85,0.87) & 0.57  (0.56,0.58) & 1.65  (1.61,1.69) & 1.70  (1.67,1.72) \\
Satellite & 0.85  (0.84,0.86) & 0.85  (0.84,0.86) & 0.78  (0.76,0.79) & 1.27  (1.24,1.30) & 1.26  (1.23,1.29) \\
USPS & 0.85  (0.84,0.86) & 0.85  (0.84,0.86) & 0.83  (0.81,0.84) & 1.00  (0.97,1.01) & 1.01  (0.98,1.03) \\
MNIST & 0.85  (0.84,0.86) & 0.86  (0.84,0.87) & 0.81  (0.80,0.82) & 1.06  (1.03,1.08) & 1.07  (1.05,1.09) \\
Fashion & 0.85  (0.84,0.86) & 0.86  (0.85,0.87) & 0.74  (0.72,0.75) & 1.24  (1.21,1.27) & 1.26  (1.24,1.29) \\
CIFAR10 & 0.85  (0.84,0.86) & 0.86  (0.85,0.87) & 0.82  (0.81,0.83) & 0.93  (0.91,0.94) & 0.94  (0.92,0.95) \\
ImagNet10 & 0.85  (0.84,0.86) & 0.85  (0.84,0.87) & 0.80  (0.79,0.81) & 1.37  (1.32,1.41) & 1.38  (1.33,1.43) \\
Letter & 0.85  (0.84,0.86) & 0.81  (0.80,0.83) & 0.60  (0.58,0.62) & 4.07  (3.84,4.28) & 3.38  (3.18,3.58) \\

\bottomrule
\end{tabular}
\end{table}
\newpage

\begin{figure}[H]

\centering
\psfrag{Supervised Calibration}[][][0.7]{\hspace{0.0cm}Supervised Calibration}
\psfrag{Unsupervised Calibration}[][][0.7]{\hspace{0.0cm}Unsupervised Calibration}
\psfrag{C1}[t][][0.7]{100 samples}
\psfrag{C2}[t][][0.7]{500 samples}
\psfrag{C3}[t][][0.7]{2,000 samples}
\psfrag{y}[][][0.7]{Coverage probability}
\psfrag{z}[][][0.7]{Prediction set size}
\psfrag{0.8}[][][0.53]{0.80\,\,}
\psfrag{0.82}[][][0.53]{0.82\,\,}
\psfrag{0.86}[][][0.53]{0.86\,\,}
\psfrag{0.9}[][][0.53]{0.90\,\,\,\,}
\psfrag{0.94}[][][0.53]{0.94\,\,}
\psfrag{0.98}[][][0.53]{0.98\,\,}
\psfrag{0.85}[][][0.53]{0.85\,\,}
\psfrag{0.90}[][][0.53]{0.90\,\,}
\psfrag{0.95}[][][0.53]{0.95\,\,}
\psfrag{1.05}[][][0.53]{1.05\,\,}
\psfrag{1.15}[][][0.53]{1.15\,\,}
\psfrag{1.2}[][][0.53]{1.2\,\,}
\psfrag{1.4}[][][0.53]{1.4\,\,}
\psfrag{1.6}[][][0.53]{1.6\,\,}
\psfrag{1.8}[][][0.53]{1.8\,\,}
\psfrag{1}[][][0.53]{1\,\,}
\psfrag{0.96}[][][0.53]{0.96\,}
\subfigure{\includegraphics[width=.48\textwidth]{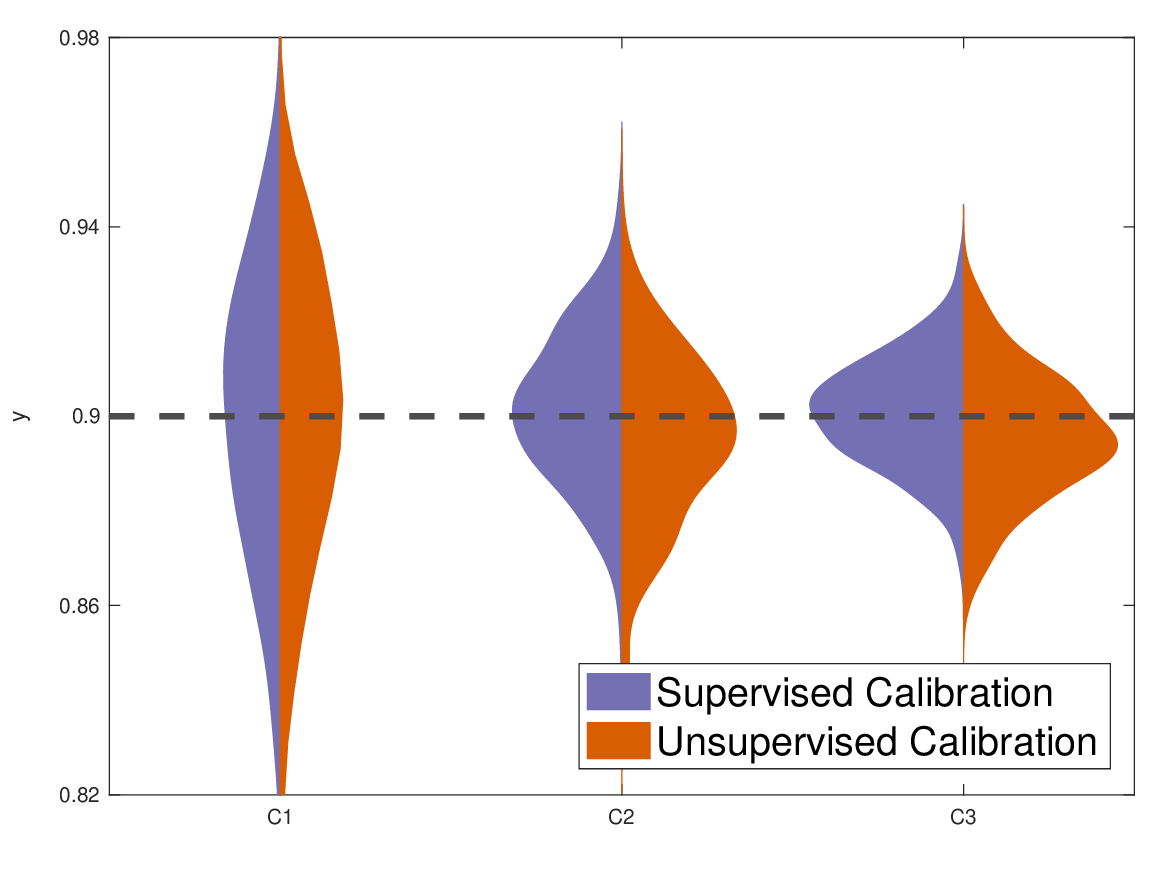}}
\psfrag{y}[][][0.7]{Prediction set size}
\hspace{0.1cm}  \subfigure{\includegraphics[width=.48\textwidth]{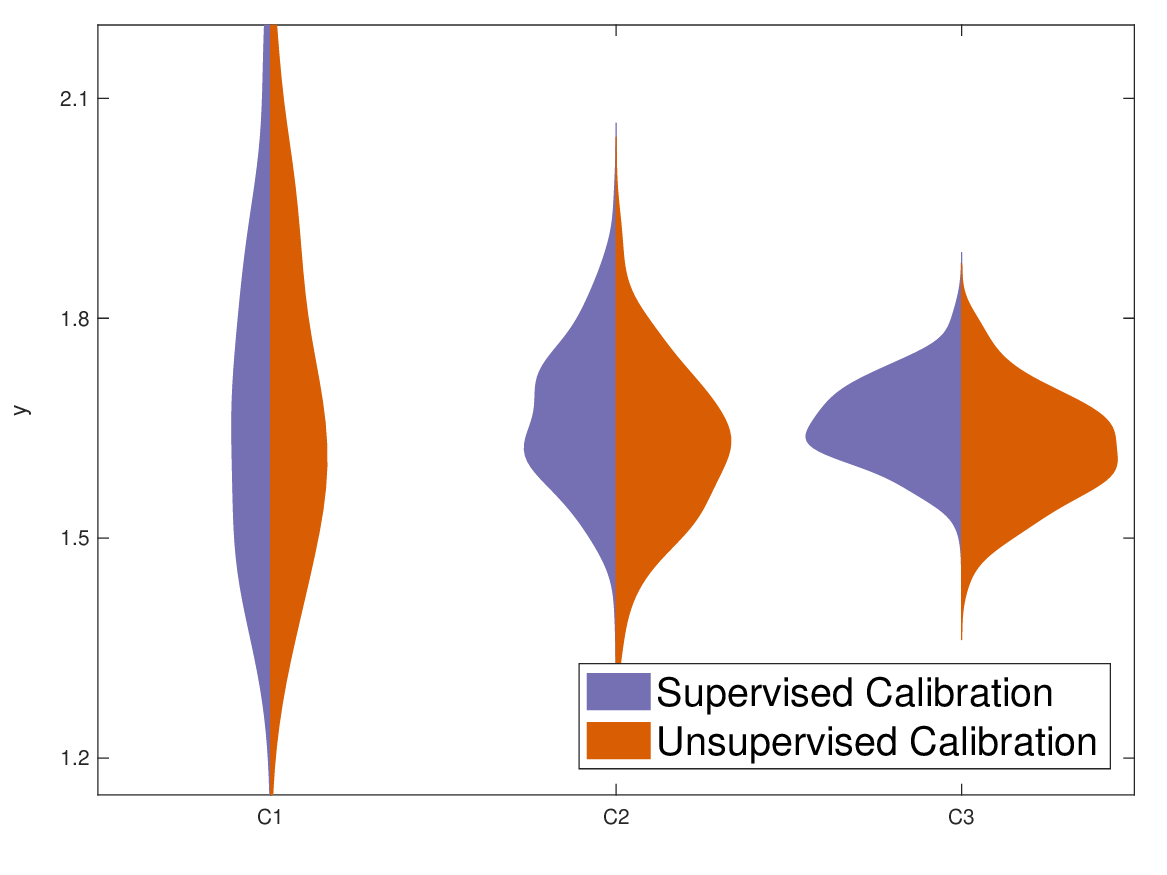}}%\vspace{-0.4cm}
\caption{\small Coverage probabilities and prediction set sizes over $400$ random partitions of `ImageNet10' dataset with target coverage $1 - \alpha = 0.9$ and number of calibration samples ranging from $100$ to $2,000$.}\label{fig-violin-imagenet10-score}
\end{figure}
\begin{figure}[H]

\centering
\psfrag{Supervised Calibration}[][][0.7]{\hspace{0.0cm}Supervised Calibration}
\psfrag{Unsupervised Calibration}[][][0.7]{\hspace{0.0cm}Unsupervised Calibration}
\psfrag{C1}[t][][0.7]{100 samples}
\psfrag{C2}[t][][0.7]{500 samples}
\psfrag{C3}[t][][0.7]{2,000 samples}
\psfrag{y}[][][0.7]{Coverage probability}
\psfrag{z}[][][0.7]{Prediction set size}
\psfrag{0.8}[][][0.53]{0.80\,\,}
\psfrag{0.82}[][][0.53]{0.82\,\,}
\psfrag{0.86}[][][0.53]{0.86\,\,}
\psfrag{0.9}[][][0.53]{0.90\,\,\,\,}
\psfrag{0.94}[][][0.53]{0.94\,\,}
\psfrag{0.98}[][][0.53]{0.98\,\,}
\psfrag{0.85}[][][0.53]{0.85\,\,}
\psfrag{0.90}[][][0.53]{0.90\,\,}
\psfrag{0.95}[][][0.53]{0.95\,\,}
\psfrag{1.05}[][][0.53]{1.05\,\,}
\psfrag{1.15}[][][0.53]{1.15\,\,}
\psfrag{1.2}[][][0.53]{1.2\,\,}
\psfrag{1.4}[][][0.53]{1.4\,\,}
\psfrag{1.6}[][][0.53]{1.6\,\,}
\psfrag{1.8}[][][0.53]{1.8\,\,}
\psfrag{1}[][][0.53]{1\,\,}
\psfrag{0.96}[][][0.53]{0.96\,}
\subfigure{\includegraphics[width=.48\textwidth]{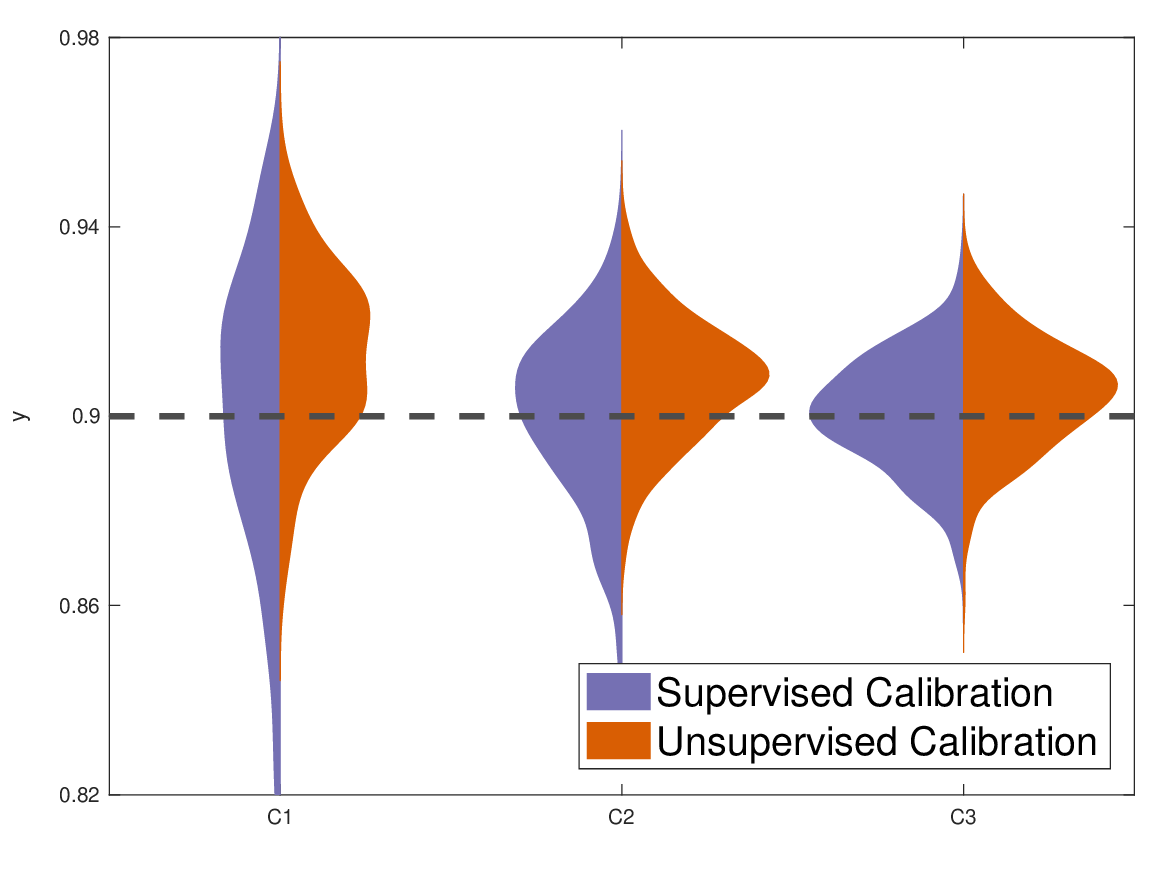}}
\psfrag{y}[][][0.7]{Prediction set size}
\hspace{0.1cm}  \subfigure{\includegraphics[width=.48\textwidth]{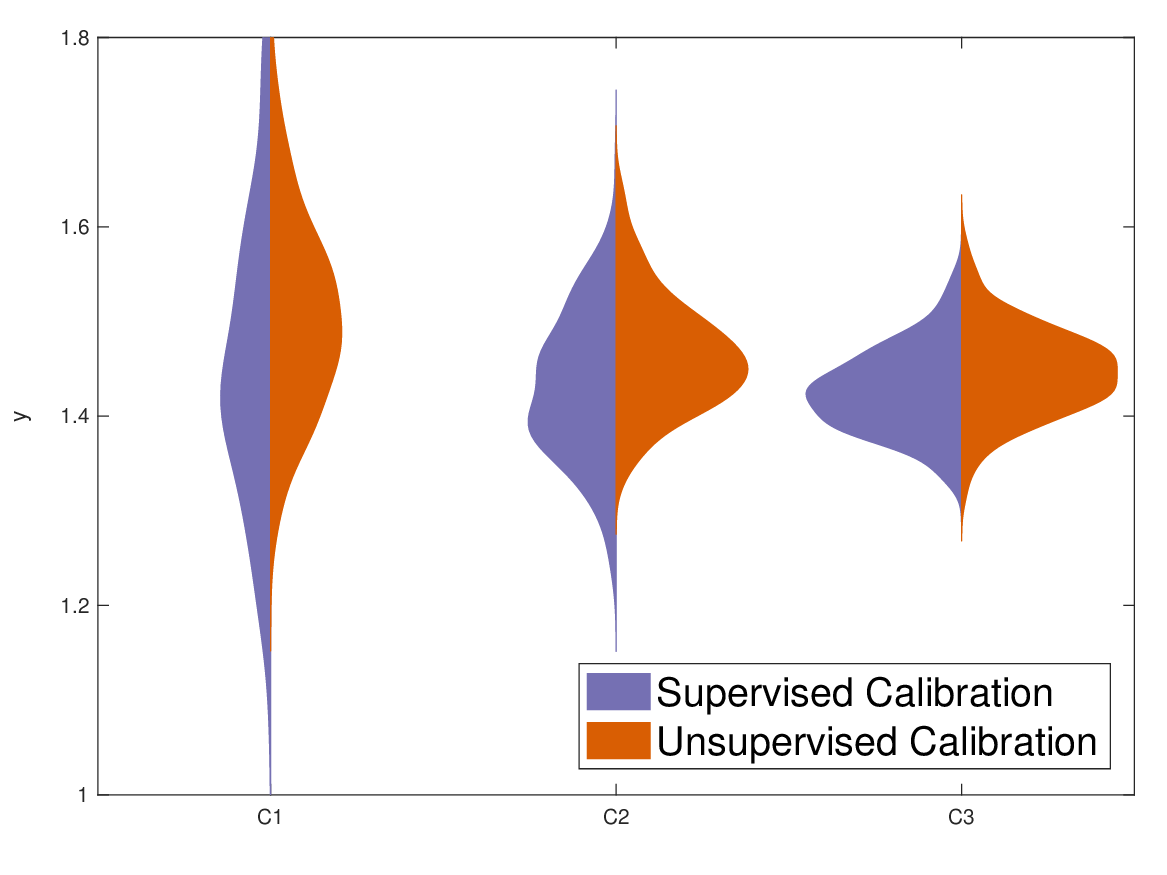}}%\vspace{-0.4cm}
\caption{\small Coverage probabilities and prediction set sizes over $400$ random partitions of `FashionMNIST' dataset with target coverage $1 - \alpha = 0.9$ and number of calibration samples ranging from $100$ to $2,000$.}\label{fig-violin-fashion-score}
\end{figure}

\begin{figure}[H]

\centering
\psfrag{Supervised Calibration}[][][0.7]{\hspace{0.0cm}Supervised Calibration}
\psfrag{Unsupervised Calibration}[][][0.7]{\hspace{0.0cm}Unsupervised Calibration}
\psfrag{C1}[t][][0.7]{100 samples}
\psfrag{C2}[t][][0.7]{500 samples}
\psfrag{C3}[t][][0.7]{2,000 samples}
\psfrag{y}[][][0.7]{Coverage probability}
\psfrag{z}[][][0.7]{Prediction set size}
\psfrag{0.8}[][][0.53]{0.80\,\,}
\psfrag{0.82}[][][0.53]{0.82\,\,}
\psfrag{0.86}[][][0.53]{0.86\,\,}
\psfrag{0.9}[][][0.53]{0.90\,\,\,\,}
\psfrag{0.94}[][][0.53]{0.94\,\,}
\psfrag{0.98}[][][0.53]{0.98\,\,}
\psfrag{0.85}[][][0.53]{0.85\,\,}
\psfrag{0.90}[][][0.53]{0.90\,\,}
\psfrag{0.95}[][][0.53]{0.95\,\,}
\psfrag{1.05}[][][0.53]{1.05\,\,}
\psfrag{1.15}[][][0.53]{1.15\,\,}
\psfrag{1.5}[][][0.53]{1.5\,\,}
\psfrag{1.7}[][][0.53]{1.7\,\,}
\psfrag{1.9}[][][0.53]{1.9\,\,}
\psfrag{2.1}[][][0.53]{2.1\,\,}
\psfrag{2.3}[][][0.53]{2.3\,\,}
\psfrag{0.96}[][][0.53]{0.96\,}
\subfigure{\includegraphics[width=.48\textwidth]{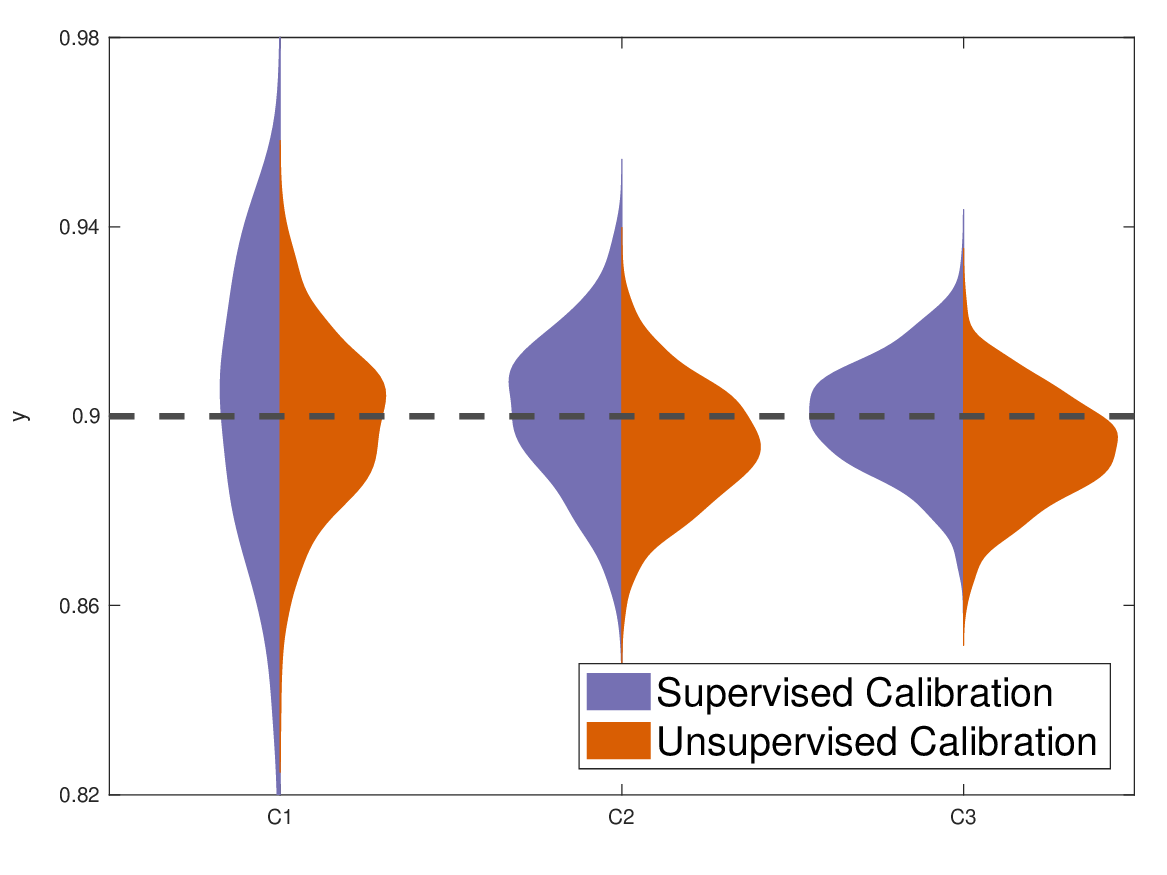}}
\psfrag{y}[][][0.7]{Prediction set size}
\hspace{0.1cm}  \subfigure{\includegraphics[width=.48\textwidth]{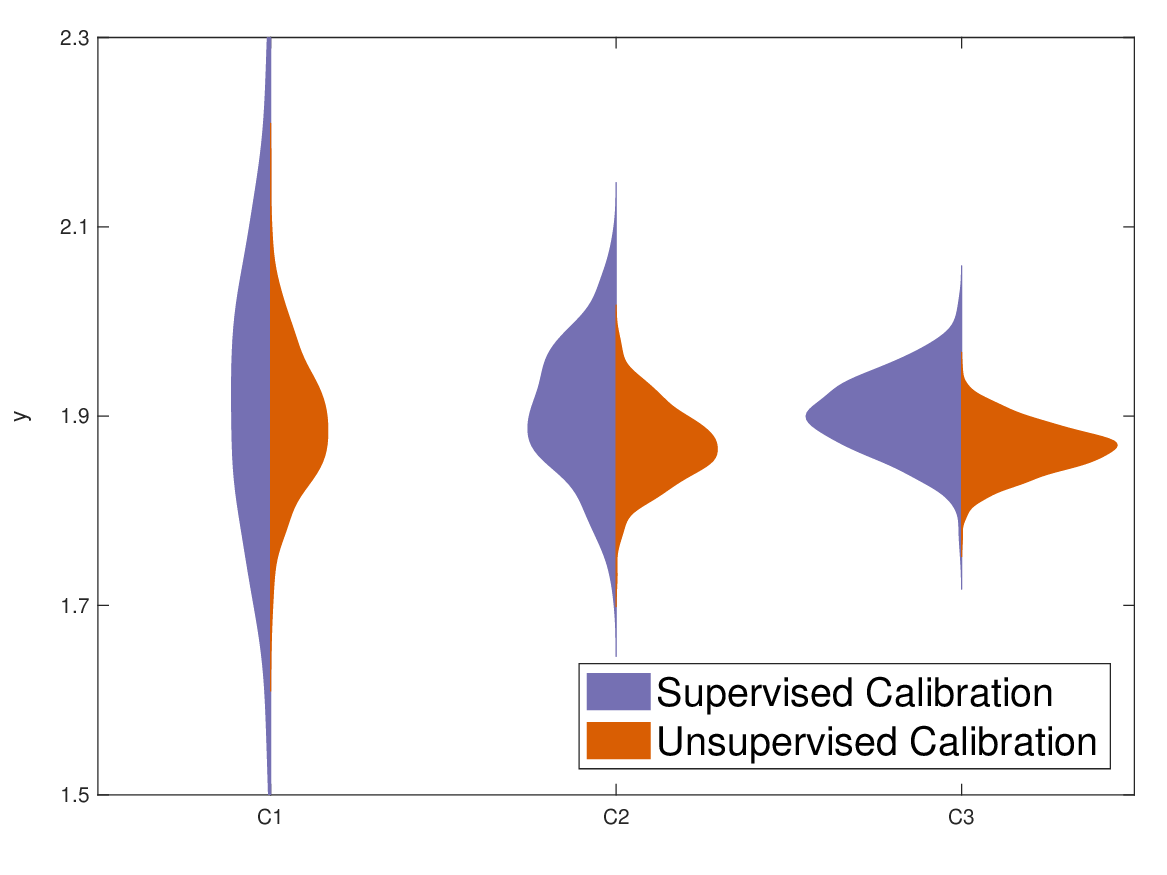}}%\vspace{-0.4cm}
\caption{\small Coverage probabilities and prediction set sizes over $400$ random partitions of `Forestcov' dataset with target coverage $1 - \alpha = 0.9$ and number of calibration samples ranging from $100$ to $2,000$.}\label{fig-violin-forestcov-score}
\end{figure}

\begin{figure}[H]

\centering
\psfrag{Supervised Calibration}[][][0.7]{\hspace{0.0cm}Supervised Calibration}
\psfrag{Unsupervised Calibration}[][][0.7]{\hspace{0.0cm}Unsupervised Calibration}
\psfrag{Unsup. Cal. Naive}[][][0.7]{\hspace{0.0cm}Unsup. Cal. Naive}
\psfrag{y}[b][][0.7]{Average coverage gap}
\psfrag{0.01}[][][0.53]{0.01\,\,}
\psfrag{0.02}[][][0.53]{0.02\,\,}
\psfrag{0.04}[][][0.53]{0.04\,\,}
\psfrag{0.07}[][][0.53]{0.07\,\,\,\,}
\psfrag{0.1}[][][0.53]{0.1\,\,\,}
\psfrag{10}[][][0.53]{10}
\psfrag{100}[][][0.53]{100}
\psfrag{500}[][][0.53]{500}
\psfrag{1000}[][][0.53]{1000}
\psfrag{3000}[][][0.53]{3000}
\psfrag{0.96}[][][0.53]{0.96\,}
\psfrag{x}[t][][0.7]{Number of calibration samples $n$}
\subfigure[`ImageNet10' Dataset]{\includegraphics[width=.48\textwidth]{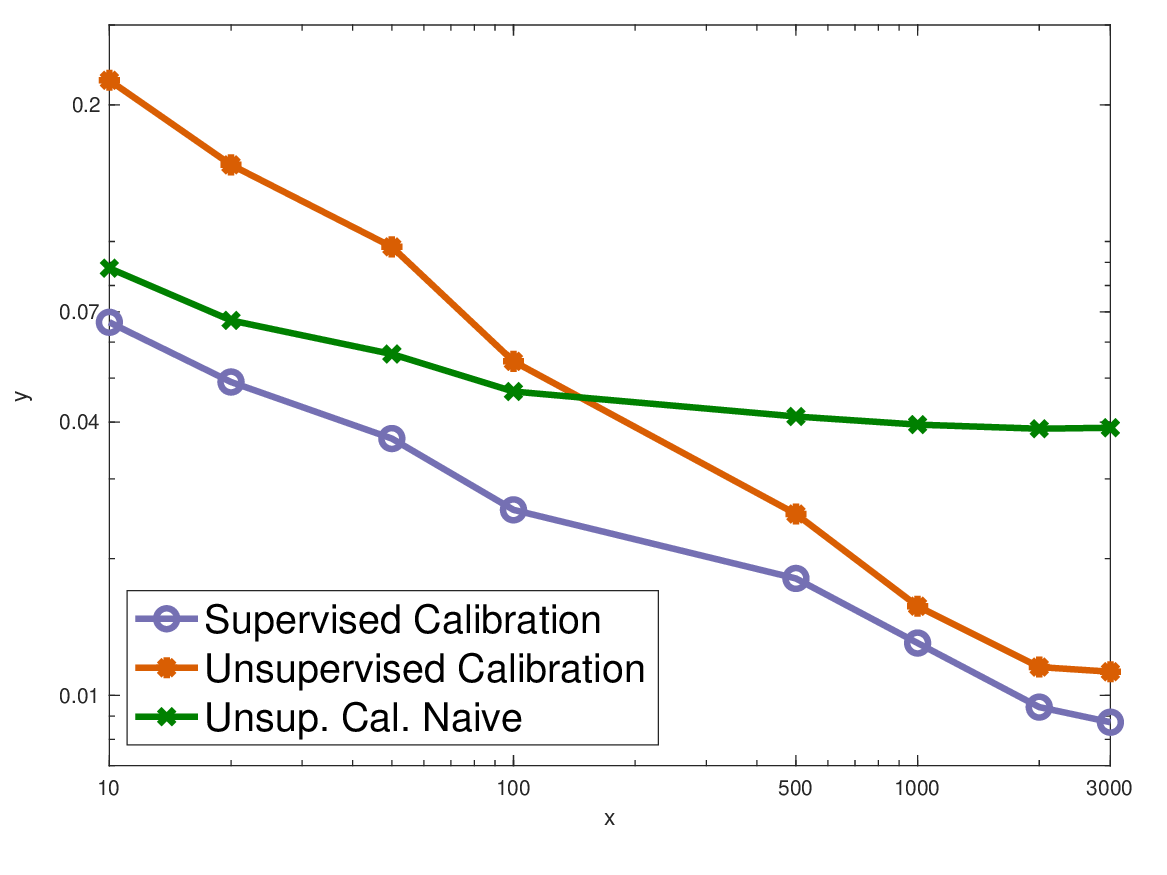}}
\hspace{0.1cm}  \subfigure[`MNIST' Dataset]{\includegraphics[width=.48\textwidth]{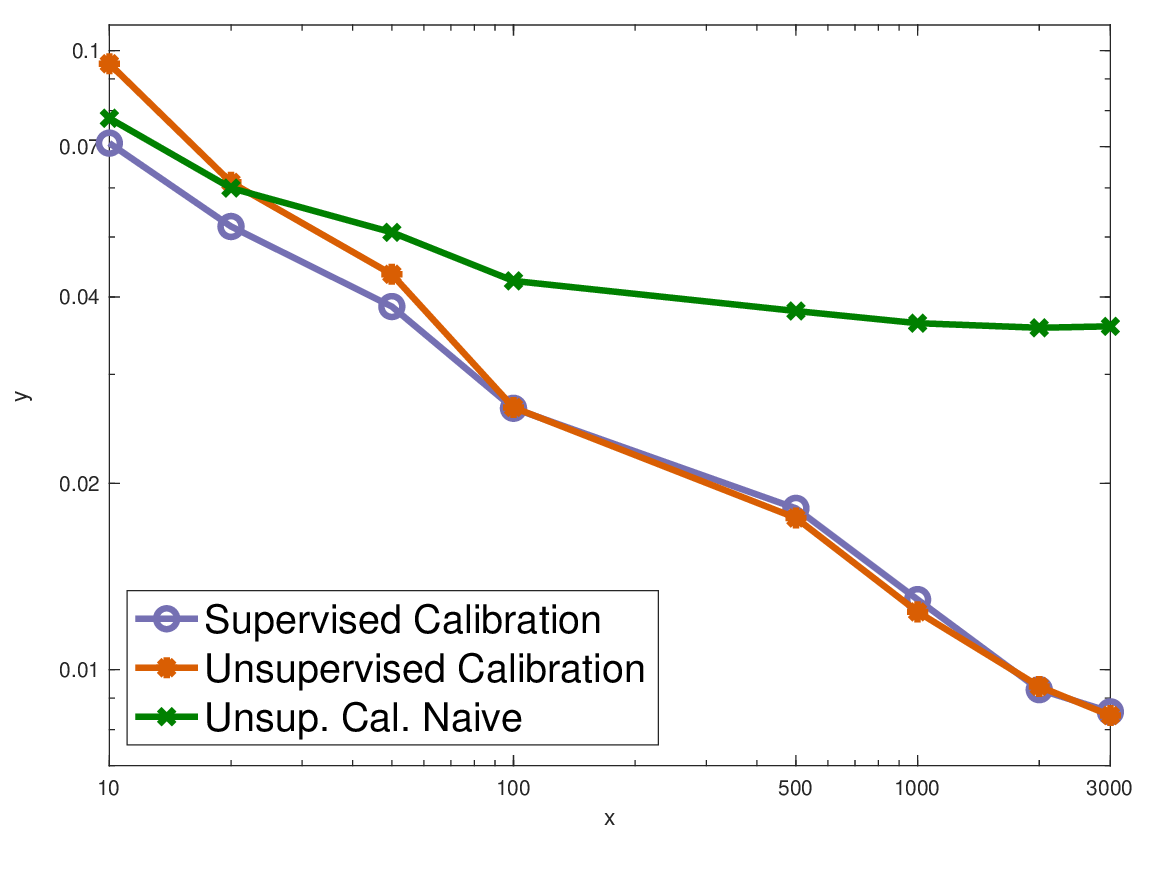}}%\vspace{-0.4cm}
\caption{\small Decrease of average coverage gap with the number of calibration samples.}\label{fig-decrease-score}
\end{figure}

%\newpage
\subsection{Additional results with the conformal score given by probability estimates}

This section complements the results provided above by using the conformal score given by probability estimates, i.e., $S(X,Y)=1-\widehat{p}\,(Y|X)$ for $\widehat{p}$ the probability estimate provided by the classification rule used. Specifically, Tables~\ref{table-probs-9}, \ref{table-probs-95}, and \ref{table-probs-85} show coverage probabilities and prediction set for target coverages $1-\alpha\in\{0.9, 0.95,0.85\}$, Figures~\ref{fig-violin-cifar10}, \ref{fig-violin-fashion}, and \ref{fig-violin-forestcov} show 
violin plots using `CIFAR10', `FashionMNIST', and `Forestcov' datasets, and Figures~\ref{fig-decrease-probs1} and  \ref{fig-decrease-probs1} show decrease of coverage gap with the number of calibration samples using `CIFAR10', `Drybean', `FashionMNIST', and `MNIST' datasets.

\begin{table}[H]
\small
\caption{\small Coverage probability and prediction set size for conventional approach with supervised calibration (`Supervised'), proposed approach with unsupervised calibration (`Unsupervised'), and naive unsupervised approach (`Unsup. Naive'). Results are shown as: mean (interquartile interval), and correspond to the conformal score given by probability estimates with $1-\alpha=0.9$.\label{table-probs-9}}
\vspace{0.1cm}
\begin{tabular}{l|ccc|cc}
\toprule
\multirow{2}{*}{Dataset}&\multicolumn{3}{c|}{Coverage probability}&\multicolumn{2}{c}{Prediction set size}\\
 & Supervised & Unsupervised & Unsup. Naive & Supervised & Unsupervised \\
\midrule
Drybean & 0.90  (0.89,0.91) & 0.89  (0.89,0.90) & 0.84  (0.83,0.85) & 1.02  (1.01,1.04) & 1.01  (1.00,1.02) \\
Forestcov & 0.90  (0.89,0.91) & 0.90  (0.89,0.91) & 0.64  (0.63,0.65) & 1.74  (1.69,1.79) & 1.75  (1.72,1.77) \\
Satellite & 0.90  (0.89,0.91) & 0.90  (0.89,0.91) & 0.84  (0.83,0.86) & 1.44  (1.40,1.48) & 1.44  (1.41,1.48) \\
USPS & 0.90  (0.89,0.91) & 0.91  (0.90,0.92) & 0.89  (0.88,0.90) & 0.91  (0.90,0.92) & 0.93  (0.92,0.94) \\
MNIST & 0.90  (0.89,0.91) & 0.92  (0.91,0.92) & 0.87  (0.86,0.88) & 0.94  (0.93,0.95) & 0.97  (0.96,0.98) \\
Fashion & 0.90  (0.89,0.91) & 0.91  (0.90,0.92) & 0.79  (0.78,0.80) & 1.19  (1.16,1.22) & 1.24  (1.23,1.26) \\
CIFAR10 & 0.90  (0.89,0.91) & 0.91  (0.90,0.92) & 0.89  (0.88,0.90) & 0.91  (0.90,0.92) & 0.92  (0.91,0.93) \\
Letter & 0.90  (0.89,0.91) & 0.86  (0.85,0.87) & 0.72  (0.70,0.73) & 5.22  (4.85,5.56) & 3.91  (3.72,4.09) \\
\bottomrule
\end{tabular}
\end{table}

\begin{table}[H]
\small
\caption{\small Coverage probability and prediction set size for conventional approach with supervised calibration (`Supervised'), proposed approach with unsupervised calibration (`Unsupervised'), and naive unsupervised approach (`Unsup. Naive'). Results are shown as: mean (interquartile interval), and correspond to the conformal score given by probability estimates with $1-\alpha=0.95$.\label{table-probs-95}}
\vspace{0.1cm}
\begin{tabular}{l|ccc|cc}
\toprule
\multirow{2}{*}{Dataset}&\multicolumn{3}{c|}{Coverage probability}&\multicolumn{2}{c}{Prediction set size}\\
 & Supervised & Unsupervised & Unsup. Naive & Supervised & Unsupervised \\
\midrule
Drybean & 0.95  (0.94,0.96) & 0.95  (0.94,0.96) & 0.86  (0.86,0.87) & 1.16  (1.13,1.18) & 1.16  (1.14,1.18) \\
Forestcov & 0.95  (0.94,0.96) & 0.94  (0.93,0.94) & 0.67  (0.66,0.68) & 2.10  (2.06,2.14) & 2.00  (1.98,2.02) \\
Satellite & 0.95  (0.94,0.96) & 0.95  (0.94,0.96) & 0.91  (0.90,0.92) & 1.78  (1.72,1.83) & 1.79  (1.71,1.84) \\
USPS & 0.95  (0.94,0.96) & 0.96  (0.95,0.97) & 0.93  (0.92,0.93) & 0.99  (0.98,1.00) & 1.01  (1.00,1.02) \\
MNIST & 0.95  (0.94,0.96) & 0.96  (0.95,0.96) & 0.90  (0.90,0.91) & 1.07  (1.05,1.09) & 1.11  (1.09,1.13) \\
Fashion & 0.95  (0.94,0.96) & 0.95  (0.94,0.96) & 0.82  (0.81,0.82) & 1.55  (1.49,1.63) & 1.56  (1.53,1.60) \\
CIFAR10 & 0.95  (0.94,0.96) & 0.96  (0.96,0.97) & 0.93  (0.92,0.94) & 0.98  (0.98,0.99) & 1.01  (1.00,1.02) \\
Letter & 0.95  (0.94,0.96) & 0.90  (0.90,0.91) & 0.75  (0.74,0.76) & 8.66  (8.18,9.12) & 5.46  (5.18,5.72) \\
\bottomrule
\end{tabular}
\end{table}

\begin{table}[H]
\small
\caption{\small Coverage probability and prediction set size for conventional approach with supervised calibration (`Supervised'), proposed approach with unsupervised calibration (`Unsupervised'), and naive unsupervised approach (`Unsup. Naive'). Results are shown as: mean (interquartile interval), and correspond to the conformal score given by probability estimates with $1-\alpha=0.85$.\label{table-probs-85}}
\vspace{0.1cm}
\begin{tabular}{l|ccc|cc}
\toprule
\multirow{2}{*}{Dataset}&\multicolumn{3}{c|}{Coverage probability}&\multicolumn{2}{c}{Prediction set size}\\
 & Supervised & Unsupervised & Unsup. Naive & Supervised & Unsupervised \\
\midrule
Drybean & 0.85  (0.84,0.86) & 0.85  (0.84,0.86) & 0.81  (0.80,0.82) & 0.92  (0.90,0.93) & 0.92  (0.91,0.93) \\
Forestcov & 0.85  (0.84,0.86) & 0.86  (0.85,0.87) & 0.62  (0.61,0.63) & 1.49  (1.46,1.52) & 1.54  (1.51,1.56) \\
Satellite & 0.85  (0.84,0.86) & 0.86  (0.85,0.87) & 0.62  (0.61,0.63) & 1.49  (1.46,1.52) & 1.54  (1.51,1.56) \\
USPS & 0.85  (0.84,0.86) & 0.86  (0.85,0.87) & 0.84  (0.83,0.85) & 0.86  (0.85,0.87) & 0.87  (0.86,0.88) \\
MNIST & 0.85  (0.84,0.86) & 0.86  (0.85,0.87) & 0.83  (0.82,0.84) & 0.87  (0.86,0.88) & 0.89  (0.88,0.90) \\
Fashion & 0.85  (0.84,0.86) & 0.87  (0.86,0.88) & 0.76  (0.75,0.78) & 1.02  (1.00,1.04) & 1.08  (1.06,1.09) \\
CIFAR10 & 0.85  (0.84,0.86) & 0.85  (0.84,0.86) & 0.84  (0.83,0.85) & 0.86  (0.84,0.87) & 0.86  (0.85,0.87) \\
Letter & 0.85  (0.84,0.86) & 0.82  (0.81,0.83) & 0.69  (0.68,0.70) & 3.59  (3.34,3.82) & 3.04  (2.92,3.17) \\
\bottomrule
\end{tabular}
\end{table}

\begin{figure}[H]

\centering
\psfrag{Supervised Calibration}[][][0.7]{\hspace{0.0cm}Supervised Calibration}
\psfrag{Unsupervised Calibration}[][][0.7]{\hspace{0.0cm}Unsupervised Calibration}
\psfrag{C1}[t][][0.7]{100 samples}
\psfrag{C2}[t][][0.7]{500 samples}
\psfrag{C3}[t][][0.7]{2,000 samples}
\psfrag{y}[][][0.7]{Coverage probability}
\psfrag{z}[][][0.7]{Prediction set size}
\psfrag{0.8}[][][0.53]{0.80\,\,}
\psfrag{0.82}[][][0.53]{0.82\,\,}
\psfrag{0.86}[][][0.53]{0.86\,\,}
\psfrag{0.9}[][][0.53]{0.90\,\,\,\,}
\psfrag{0.94}[][][0.53]{0.94\,\,}
\psfrag{0.98}[][][0.53]{0.98\,\,}
\psfrag{0.85}[][][0.53]{0.85\,\,}
\psfrag{0.90}[][][0.53]{0.90\,\,}
\psfrag{0.95}[][][0.53]{0.95\,\,}
\psfrag{1.05}[][][0.53]{1.05\,\,}
\psfrag{1.15}[][][0.53]{1.15\,\,}
\psfrag{1}[][][0.53]{1\,\,}
\psfrag{0.96}[][][0.53]{0.96\,}
\subfigure{\includegraphics[width=.48\textwidth]{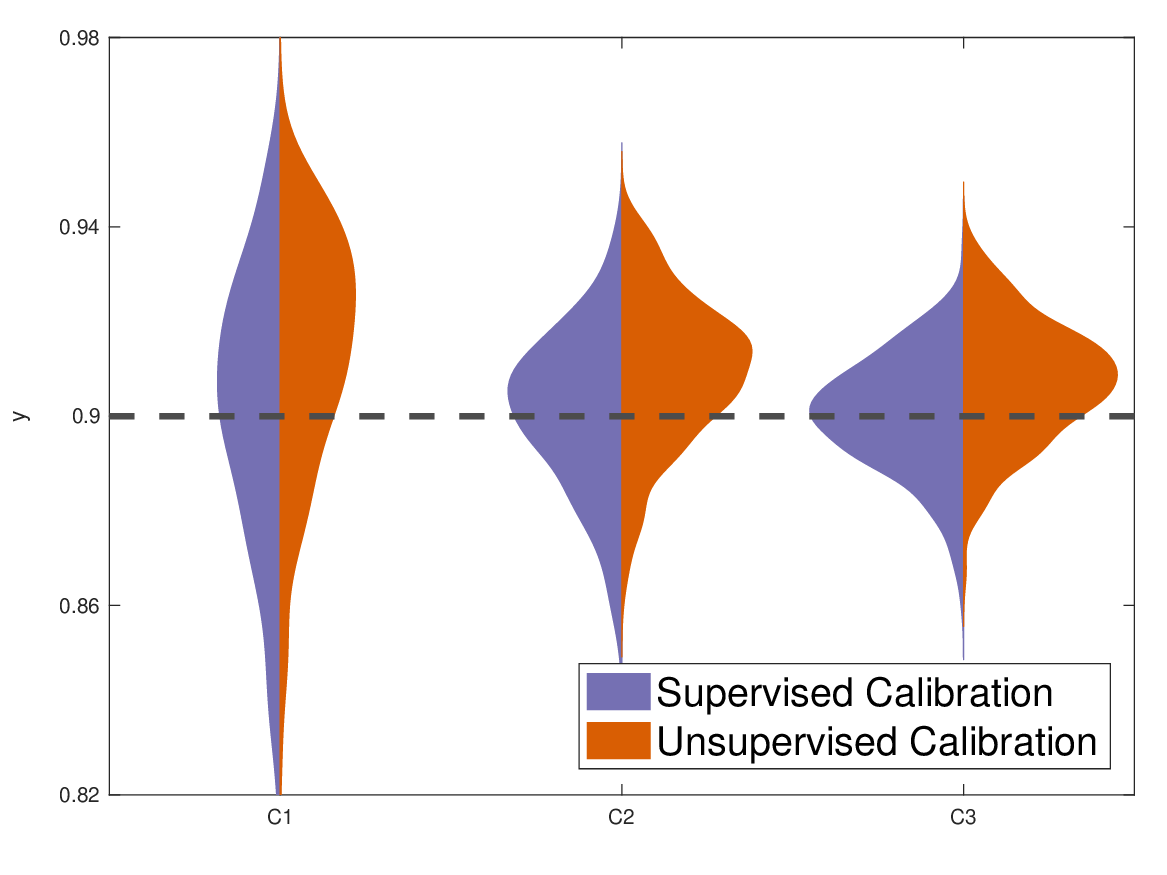}}
\psfrag{y}[][][0.7]{Prediction set size}
\hspace{0.1cm}  \subfigure{\includegraphics[width=.48\textwidth]{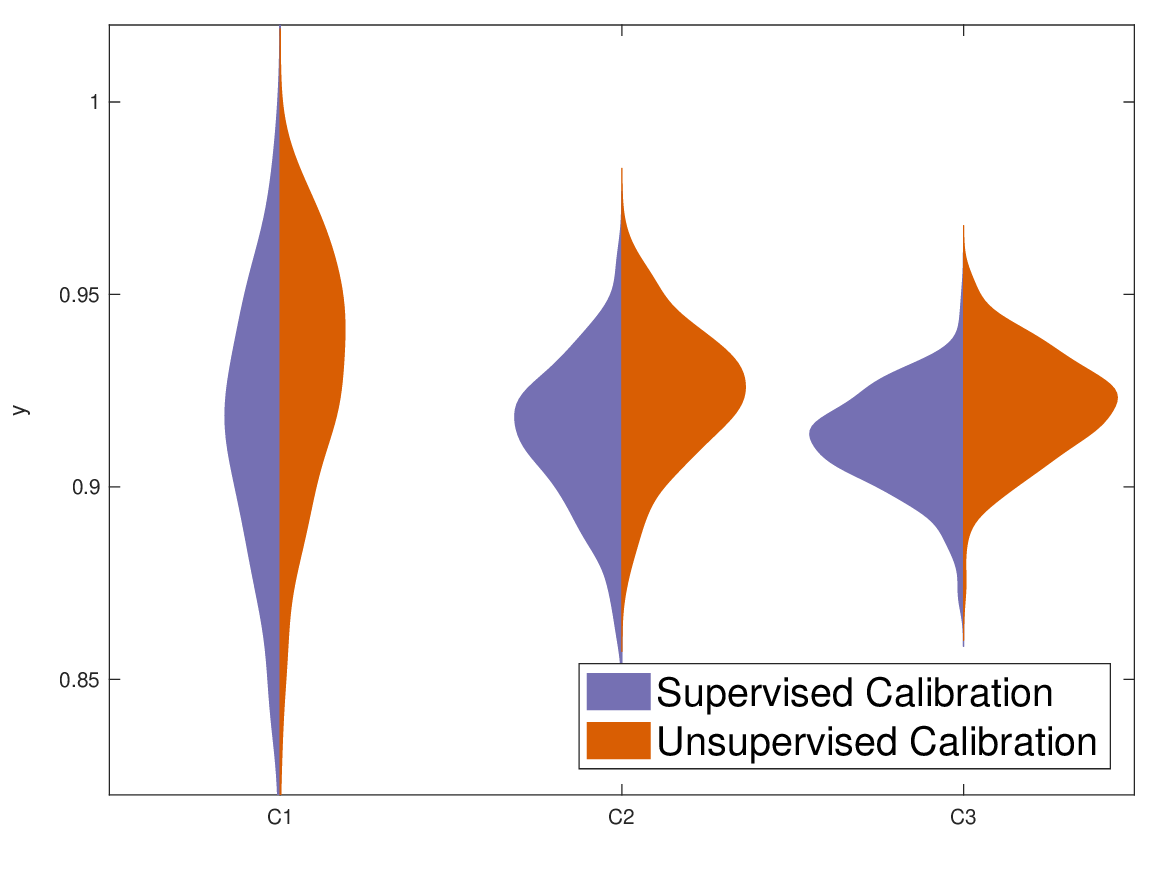}}%\vspace{-0.4cm}
\caption{\small Coverage probabilities and prediction set sizes over $400$ random partitions of `CIFAR10' dataset with conformal score given be probability estimates, target coverage $1 - \alpha = 0.9$ and number of calibration samples ranging from $100$ to $2,000$.}\label{fig-violin-cifar10}
\end{figure}

\begin{figure}[H]

\centering
\psfrag{Supervised Calibration}[][][0.7]{\hspace{0.0cm}Supervised Calibration}
\psfrag{Unsupervised Calibration}[][][0.7]{\hspace{0.0cm}Unsupervised Calibration}
\psfrag{C1}[t][][0.7]{100 samples}
\psfrag{C2}[t][][0.7]{500 samples}
\psfrag{C3}[t][][0.7]{2,000 samples}
\psfrag{y}[][][0.7]{Coverage probability}
\psfrag{z}[][][0.7]{Prediction set size}
\psfrag{0.8}[][][0.53]{0.80\,\,}
\psfrag{0.82}[][][0.53]{0.82\,\,}
\psfrag{0.86}[][][0.53]{0.86\,\,}
\psfrag{0.9}[][][0.53]{0.90\,\,\,\,}
\psfrag{0.94}[][][0.53]{0.94\,\,}
\psfrag{0.98}[][][0.53]{0.98\,\,}
\psfrag{0.85}[][][0.53]{0.85\,\,}
\psfrag{0.90}[][][0.53]{0.90\,\,}
\psfrag{0.95}[][][0.53]{0.95\,\,}
\psfrag{1.05}[][][0.53]{1.05\,\,}
\psfrag{1.15}[][][0.53]{1.15\,\,}
\psfrag{1.2}[][][0.53]{1.2\,\,}
\psfrag{1.4}[][][0.53]{1.4\,\,}
\psfrag{1.05}[][][0.53]{1.05\,\,}
\psfrag{1.15}[][][0.53]{1.15\,\,}
\psfrag{1}[][][0.53]{1\,\,}
\psfrag{0.96}[][][0.53]{0.96\,}
\subfigure{\includegraphics[width=.48\textwidth]{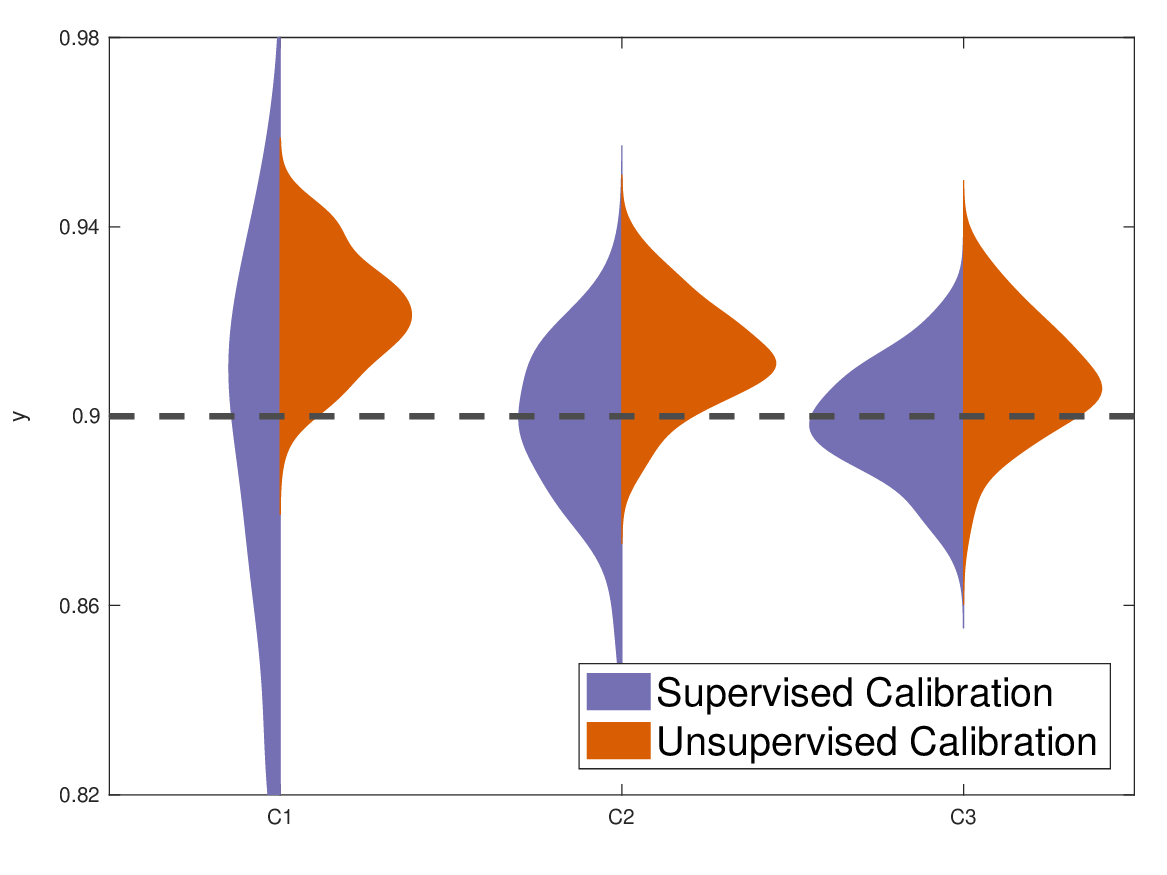}}
\psfrag{y}[][][0.7]{Prediction set size}
\hspace{0.1cm}  \subfigure{\includegraphics[width=.48\textwidth]{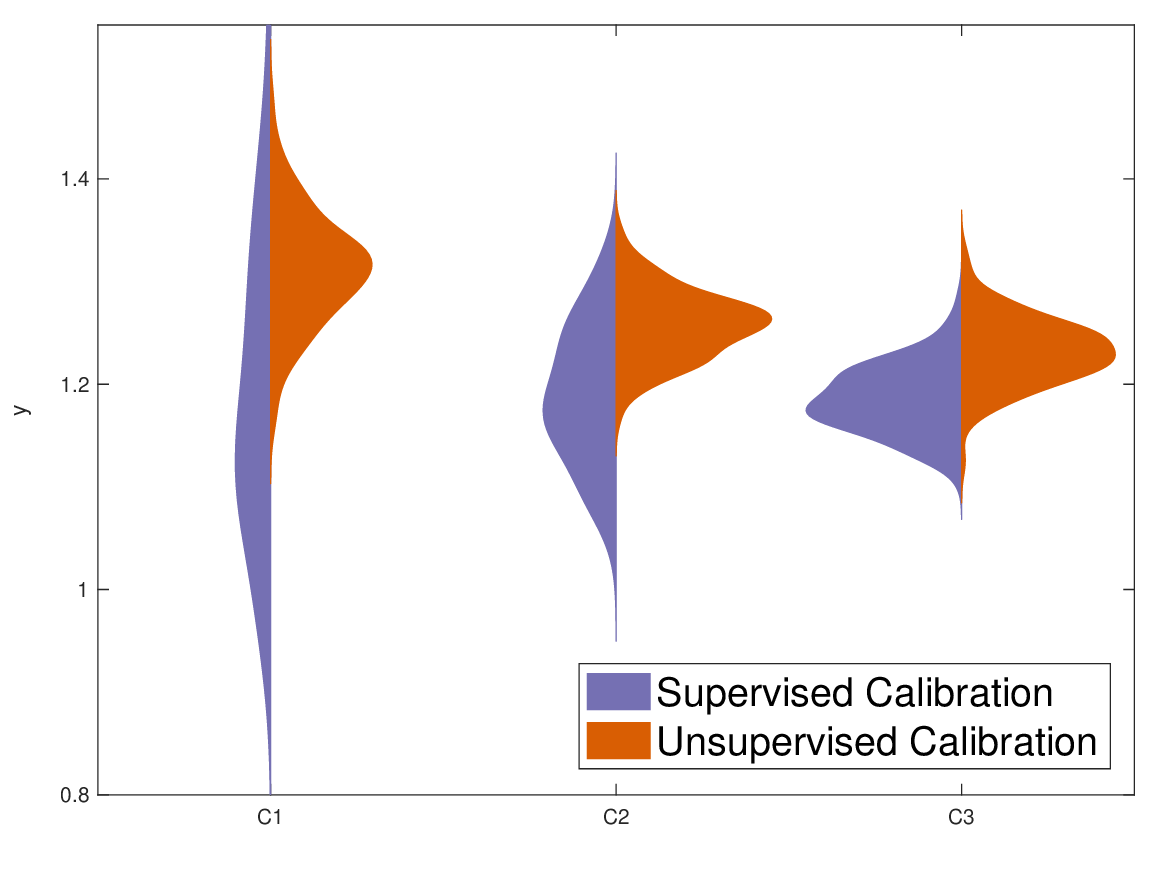}}%\vspace{-0.4cm}
\caption{\small Coverage probabilities and prediction set sizes over $400$ random partitions of `FashionMNIST' dataset with conformal score given be probability estimates, target coverage $1 - \alpha = 0.9$ and number of calibration samples ranging from $100$ to $2,000$.}\label{fig-violin-fashion}
\end{figure}

\begin{figure}[H]

\centering
\psfrag{Supervised Calibration}[][][0.7]{\hspace{0.0cm}Supervised Calibration}
\psfrag{Unsupervised Calibration}[][][0.7]{\hspace{0.0cm}Unsupervised Calibration}
\psfrag{C1}[t][][0.7]{100 samples}
\psfrag{C2}[t][][0.7]{500 samples}
\psfrag{C3}[t][][0.7]{2,000 samples}
\psfrag{y}[][][0.7]{Coverage probability}
\psfrag{z}[][][0.7]{Prediction set size}
\psfrag{0.8}[][][0.53]{0.80\,\,}
\psfrag{0.82}[][][0.53]{0.82\,\,}
\psfrag{0.86}[][][0.53]{0.86\,\,}
\psfrag{0.9}[][][0.53]{0.90\,\,\,\,}
\psfrag{0.94}[][][0.53]{0.94\,\,}
\psfrag{0.98}[][][0.53]{0.98\,\,}
\psfrag{0.85}[][][0.53]{0.85\,\,}
\psfrag{0.90}[][][0.53]{0.90\,\,}
\psfrag{0.95}[][][0.53]{0.95\,\,}
\psfrag{1.05}[][][0.53]{1.05\,\,}
\psfrag{1.15}[][][0.53]{1.15\,\,}
\psfrag{1.3}[][][0.53]{1.3\,\,}
\psfrag{1.5}[][][0.53]{1.5\,\,}
\psfrag{1.7}[][][0.53]{1.7\,\,}
\psfrag{1.9}[][][0.53]{1.9\,\,}
\psfrag{2.1}[][][0.53]{2.1\,\,}
\psfrag{1.15}[][][0.53]{1.15\,\,}

\psfrag{1}[][][0.53]{1\,\,}
\psfrag{0.96}[][][0.53]{0.96\,}
\subfigure{\includegraphics[width=.48\textwidth]{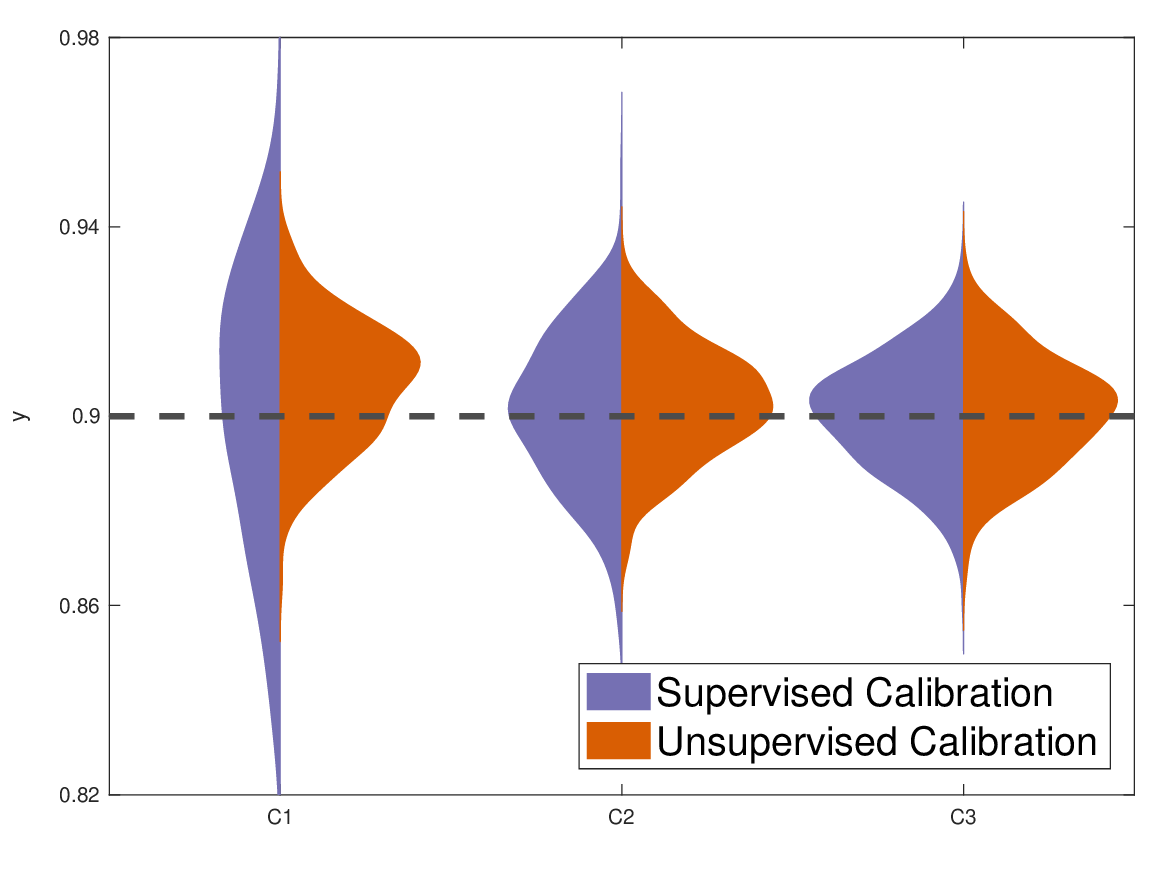}}
\psfrag{y}[][][0.7]{Prediction set size}
\hspace{0.1cm}  \subfigure{\includegraphics[width=.48\textwidth]{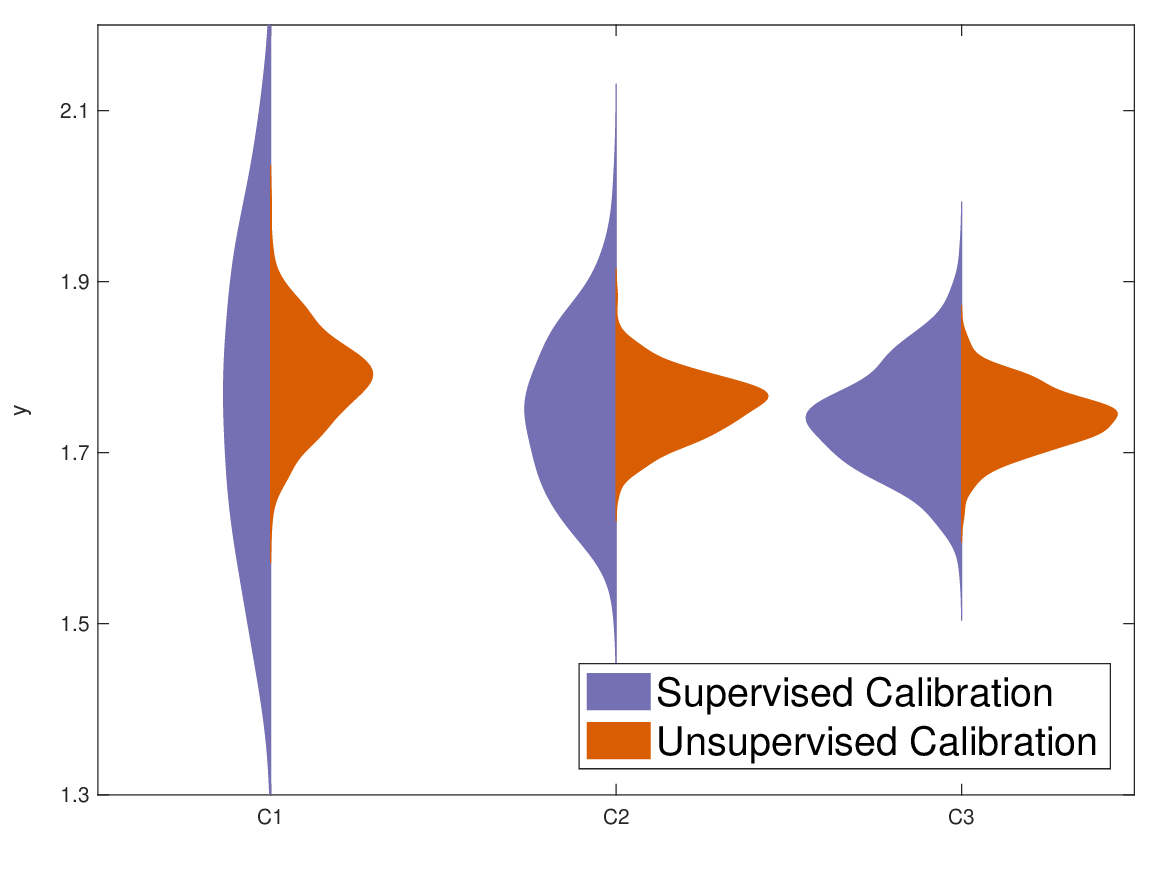}}%\vspace{-0.4cm}
\caption{\small Coverage probabilities and prediction set sizes over $400$ random partitions of `Forestcov' dataset with conformal score given be probability estimates, target coverage $1 - \alpha = 0.9$ and number of calibration samples ranging from $100$ to $2,000$.}\label{fig-violin-forestcov}
\end{figure}

\begin{figure}[H]

\centering
\psfrag{Supervised Calibration}[][][0.7]{\hspace{0.0cm}Supervised Calibration}
\psfrag{Unsupervised Calibration}[][][0.7]{\hspace{0.0cm}Unsupervised Calibration}
\psfrag{Unsup. Cal. Naive}[][][0.7]{\hspace{0.0cm}Unsup. Cal. Naive}
\psfrag{y}[b][][0.7]{Average coverage gap}
\psfrag{0.01}[][][0.53]{0.01\,\,}
\psfrag{0.02}[][][0.53]{0.02\,\,}
\psfrag{0.04}[][][0.53]{0.04\,\,}
\psfrag{0.07}[][][0.53]{0.07\,\,\,\,}
\psfrag{0.1}[][][0.53]{0.1\,\,\,}
\psfrag{10}[][][0.53]{10}
\psfrag{100}[][][0.53]{100}
\psfrag{500}[][][0.53]{500}
\psfrag{1000}[][][0.53]{1000}
\psfrag{3000}[][][0.53]{3000}
\psfrag{0.96}[][][0.53]{0.96\,}
\psfrag{x}[t][][0.7]{Number of calibration samples $n$}
\subfigure[`CIFAR10' Dataset]{\includegraphics[width=.48\textwidth]{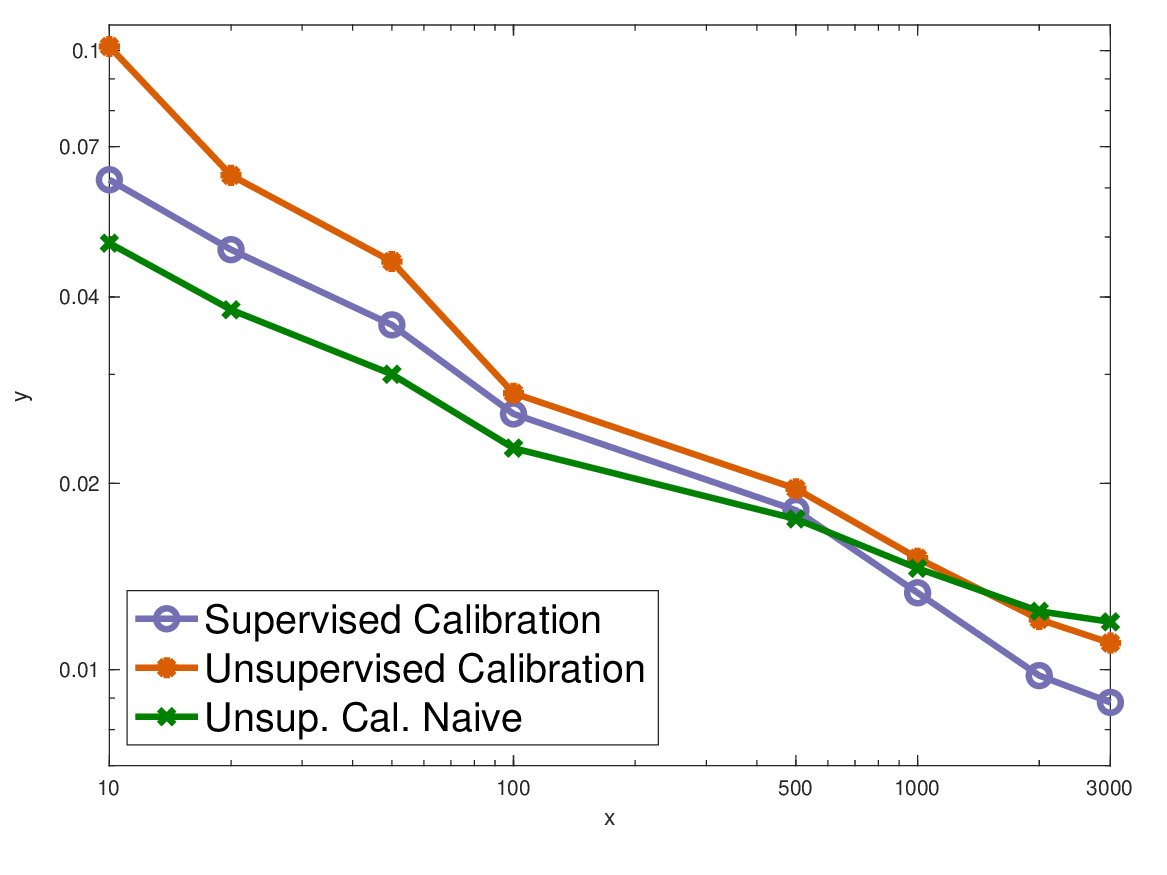}}
\hspace{0.1cm}  \subfigure[`Drybean' Dataset]{\includegraphics[width=.48\textwidth]{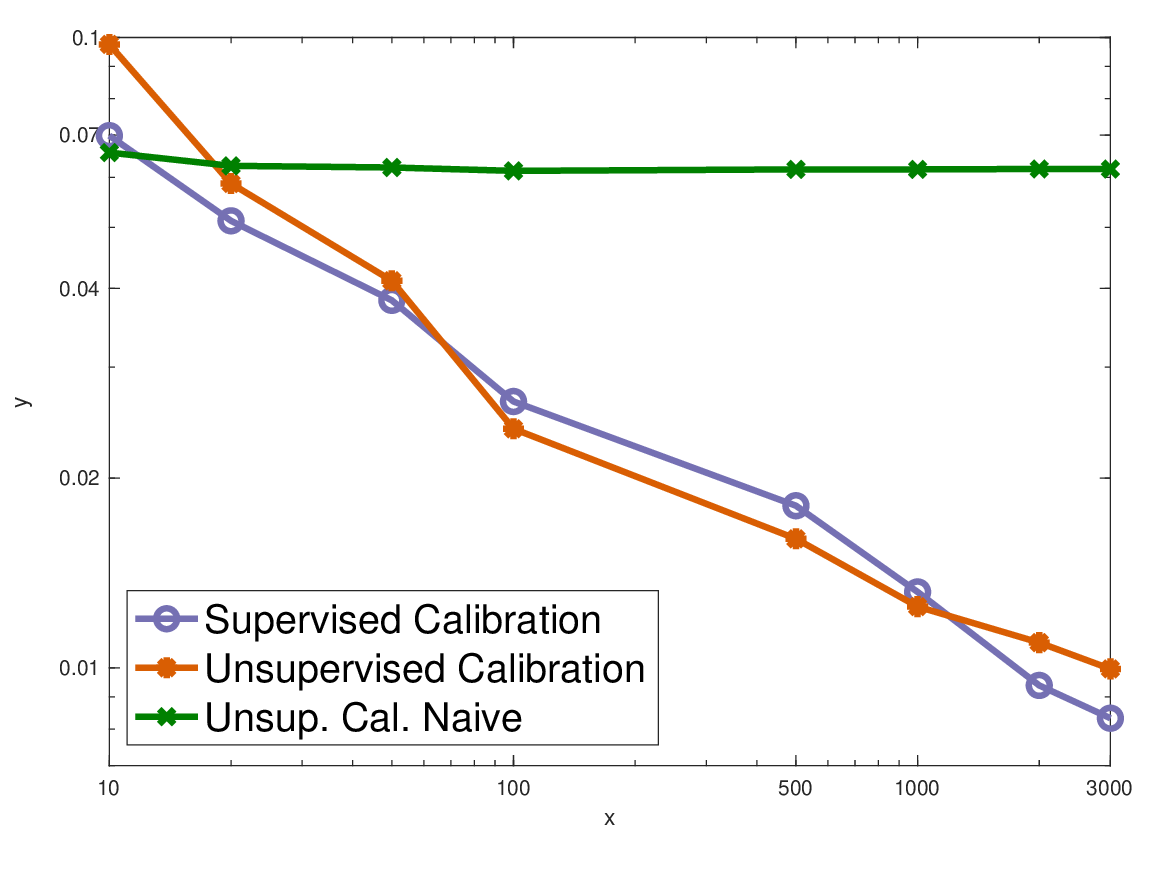}}%\vspace{-0.4cm}
\caption{\small Decrease of average coverage gap with the number of calibration samples with conformal score given by probability estimates.}\label{fig-decrease-probs1}
\end{figure}

\begin{figure}[H]

\centering
\psfrag{Supervised Calibration}[][][0.7]{\hspace{0.0cm}Supervised Calibration}
\psfrag{Unsupervised Calibration}[][][0.7]{\hspace{0.0cm}Unsupervised Calibration}
\psfrag{Unsup. Cal. Naive}[][][0.7]{\hspace{0.0cm}Unsup. Cal. Naive}
\psfrag{y}[b][][0.7]{Average coverage gap}
\psfrag{0.01}[][][0.53]{0.01\,\,}
\psfrag{0.02}[][][0.53]{0.02\,\,}
\psfrag{0.04}[][][0.53]{0.04\,\,}
\psfrag{0.07}[][][0.53]{0.07\,\,\,\,}
\psfrag{0.1}[][][0.53]{0.1\,\,\,}
\psfrag{10}[][][0.53]{10}
\psfrag{100}[][][0.53]{100}
\psfrag{500}[][][0.53]{500}
\psfrag{1000}[][][0.53]{1000}
\psfrag{3000}[][][0.53]{3000}
\psfrag{0.96}[][][0.53]{0.96\,}
\psfrag{x}[t][][0.7]{Number of calibration samples $n$}
\subfigure[`FashionMNIST' Dataset]{\includegraphics[width=.48\textwidth]{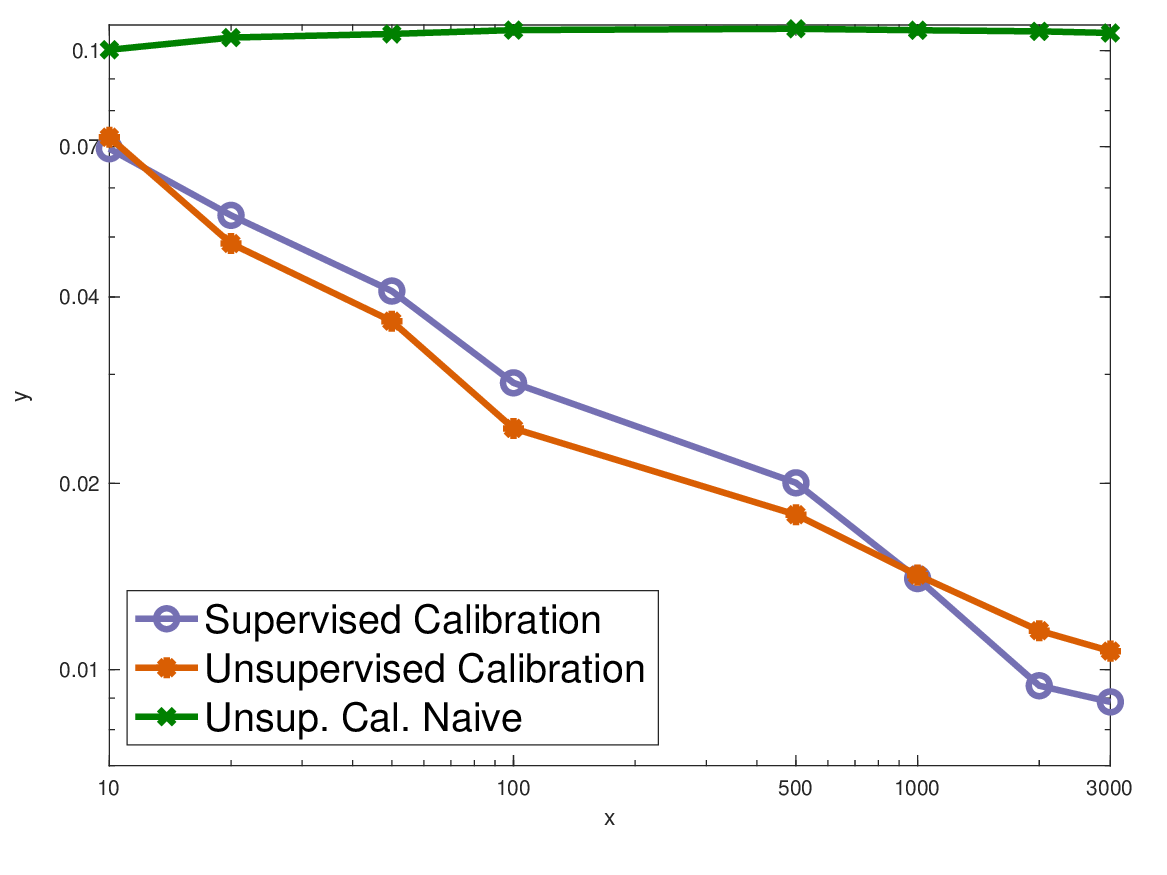}}
\hspace{0.1cm}  \subfigure[`MNIST' Dataset]{\includegraphics[width=.48\textwidth]{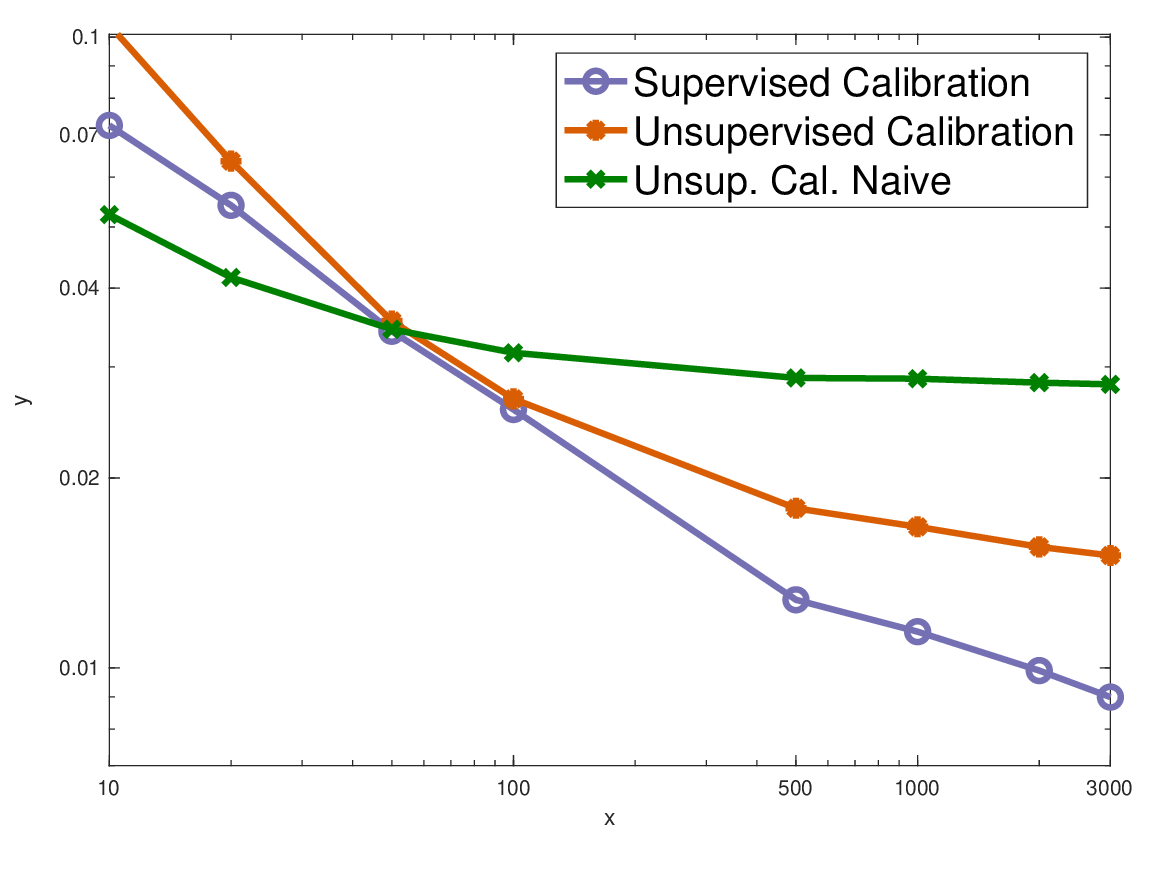}}%\vspace{-0.4cm}
\caption{\small Decrease of average coverage gap with the number of calibration samples with conformal score given by probability estimates.}\label{fig-decrease-probs2}
\end{figure}

\end{document}